\documentclass[oneside, 11pt]{article}

\usepackage[utf8]{inputenc}
\usepackage[T1]{fontenc}
\usepackage{hyperref}
\usepackage{url}
\usepackage{nicefrac}
\usepackage{microtype}
\usepackage{xcolor}
\usepackage{amsfonts, amsmath, mathtools, amssymb, amsthm}
\usepackage{algorithm}
\usepackage[noend]{algorithmic}
\usepackage{graphicx}
\usepackage{booktabs}
\usepackage{multirow}
\usepackage[numbers]{natbib}
\usepackage[
    letterpaper,
    left=0.95in,
    right=0.95in,
    top=1in,
    bottom=1in
]{geometry}
\usepackage{authblk}
\usepackage{enumitem}

\hypersetup{
    colorlinks=true,
    linkcolor=teal,
    citecolor=teal,
    filecolor=teal,
    urlcolor=blue,
}

\setlength{\parindent}{0pt}
\setlength{\parskip}{1em}

\input{macros.sty}

\date{}
\title{\bf\huge Zeroth-Order Optimization Finds Flat Minima}

\author[1,2]{Liang Zhang}
\author[1]{Bingcong Li}
\author[3]{Kiran Koshy Thekumparampil}
\author[4]{\\ Sewoong Oh}
\author[2]{Michael Muehlebach}
\author[1]{Niao He}

\affil[1]{Department of Computer Science, ETH Zurich}
\affil[2]{Max Planck Institute for Intelligent Systems}
\affil[3]{Amazon}
\affil[4]{Paul G. Allen School of Computer Science and Engineering, University of Washington \vspace{0.5em}}

\affil[ ]{\texttt{\{liang.zhang, bingcong.li, niao.he\}@inf.ethz.ch, \protect\\ kkt@amazon.com, sewoong@cs.washington.edu, michael.muehlebach@tuebingen.mpg.de}}

\begin{document}

\maketitle

\begin{abstract}
    Zeroth-order methods are extensively used in machine learning applications where gradients are infeasible or expensive to compute, such as black-box attacks, reinforcement learning, and language model fine-tuning.
    Existing optimization theory focuses on convergence to an arbitrary stationary point, but less is known on the implicit regularization that provides a fine-grained characterization on which particular solutions are finally reached.
    We show that zeroth-order optimization with the standard two-point estimator favors solutions with small trace of Hessian, which is widely used in previous work to distinguish between sharp and flat minima.
    We further provide convergence rates of zeroth-order optimization to approximate flat minima for convex and sufficiently smooth functions, where flat minima are defined as the minimizers that achieve the smallest trace of Hessian among all optimal solutions.
    Experiments on binary classification tasks with convex losses and language model fine-tuning support our theoretical findings.
\end{abstract}

\section{Introduction}

There are many emerging machine learning problems where gradients are not accessible or expensive to compute, hindering the application of gradient-based optimization algorithms.
For example, fine-tuning large language models (LLMs), particularly at the scale of billions of parameters, faces significant memory bottlenecks, primarily because of the memory-intensive nature of backpropagation.
Zeroth-order optimization offers a compelling alternative as it permits gradient estimation via finite differences of loss values.
\citet{malladi2023fine} reported that zeroth-order methods are capable of fine-tuning a 30-billion-parameter model using a single A100 GPU with 80 GiB memory, whereas gradient-based methods require 8 A100s.
In addition to recent advances in fine-tuning LLMs, zeroth-order methods have also found numerous applications in black-box settings \citep{chen2017zoo, chen2019zo, andriushchenko2020square} and nonsmooth optimization \citep{nesterov2017random, lin2022gradient, kornowski2024algorithm} where gradient computation is often infeasible. They have further proven effective in reinforcement learning \citep{salimans2017evolution, hao2024, zhiyu2024} and distributed learning \citep{fang2022communication, xu2024fwdllm, qin2024federated} to reduce computation and communication costs.

To be more specific, for the optimization problem $\min_{x\in\bR^d} f(x)$ with parameters $x\in\bR^d$ and a loss function $f:\bR^d\rightarrow\bR$, zeroth-order optimization with the standard two-point gradient estimator \citep{nesterov2017random} (Algorithm \ref{algo:zo}) iteratively updates $x$ by substituting the computationally intractable gradient with
\begin{equation}
    g_\lambda(x, u) \;\; := \;\; \frac{f(x+\lambda u) - f(x-\lambda u)}{2\lambda} u,
    \label{eq:two-points}
\end{equation}
where $u\sim\cN(0, \rI_d)$ is a standard Gaussian random vector and $\lambda\in\bR$ is a smoothing parameter.
It is convenient to understand zeroth-order methods through a surrogate smoothed function defined as $f_\lambda(x):=\bE_{u\sim\cN(0, \rI_d)}[f(x+\lambda u)]$ \citep{duchi2012randomized, nesterov2017random}.
Since Eq. \eqref{eq:two-points} unbiasedly estimates the gradient of $f_\lambda(x)$ \citep{nesterov2017random}, i.e., $\bE[g_\lambda(x, u)]=\nabla f_\lambda(x)$, standard convergence analyses directly indicate that $f_\lambda(x)$ is minimized.
Further noting that $f_\lambda(x)$ is close to $f(x)$ when $\lambda$ is small, the convergence of zeroth-order optimization can be established on the original loss $f(x)$.
For example, when $f(x)$ is smooth and convex, zeroth-order methods guarantee that the average of the iterates, $\bar x_T$, after $T$ iterations satisfy $\bE[f(\bar x_T) - \min_{x\in\bR^d} f(x)]\leq \cO(d/T)$; see e.g., \citep{nesterov2017random}. 

However, in the presence of multiple solutions, it remains unclear from the above arguments whether zeroth-order methods prefer certain minima.
Our intuition to this question comes from revisiting the underexplored role of $f_\lambda(x)$.
Using Taylor's theorem (with details in  Eq. \eqref{eq:smooth-to-trace}), one can find that 
\begin{equation*}
    f_\lambda (x) \;=\;
    f(x) + \frac{\lambda^2}{2} \tr\left(\nabla^2 f(x)\right) + o(\lambda^2).
\end{equation*}
This implies that zeroth-order optimization implicitly encodes an additive regularizer using the trace of Hessian, which is a widely adopted metric in the literature \citep{wen2023how, wen2023sharpness, liu2023same, gatmiry2023inductive, ding2024flat, ahn2024escape} to differentiate sharp and flat minima \citep{hochreiter1997flat, keskar2017on, dziugaite2017, neyshabur2017exploring, dinh2017sharp, he2019asymmetric}.
Existing works studying flat minima mostly centered around first-order methods.
The work of \citep{neyshabur2017exploring, zhu2019anisotropic, wang2022eliminating} empirically demonstrated that stochastic gradient descent (SGD) converges to solutions with small expected sharpness $\bE_{u\sim\cN(0, \rI_d)}[f(x+\lambda u)] - f(x)$ on a variety of vision tasks, which also implies small trace of Hessian according to Eq. \eqref{eq:smooth-to-trace}.
It was also shown that SGD with label noise provably decreases trace of Hessian as a regularization term for overparameterized models \citep{blanc2020implicit, damian2021label, li2022what}.
\citet{wen2023how} proved that sharpness-aware minimization (SAM) \citep{foret2021sharpnessaware}, a method specifically designed for finding flat minima, minimizes trace of Hessian when the batch size is one.
Further discussions on the relevance of flat minima, as well as the role of the trace of Hessian can be found in the recent work \citep{ahn2024escape}.

This work formalizes the intuition above and initiates the study of the implicit regularization of zeroth-order optimization with the standard two-point estimator.
Despite relying only on function evaluations of $f(x)$, we show that zeroth-order optimization converges to flat minima, which are typically characterized using second-order information such as the Hessian matrix $\nabla^2 f(x)$.
In particular, our contributions are summarized below.

$\bullet$ We define flat minima as the minimizers that achieve the smallest trace of Hessian over the set of all minimizers; see Definition \ref{def:flat}.
Assuming the function is convex and three times continuously differentiable with Lipschitz-continuous gradient, Hessian, and third derivatives (Assumptions \ref{asp:smooth} and \ref{asp:convex}), we prove that zeroth-order optimization with the standard two-point estimator (Algorithm \ref{algo:zo}) converges to $(\cO(\epsilon/d^2), \epsilon)$-approximate flat minima, defined in Definition \ref{def:approx-flat}, after $T=\cO(d^4/\epsilon^2)$ iterations; see Corollary \ref{cor:main}.
While standard convergence analysis directly treats $\lambda^2\tr(\nabla^2 f(x))$ as a bias term and controls it by choosing a sufficiently small $\lambda$, we provide a tighter and novel characterization in Section \ref{sec:analysis} of zeroth-order update dynamics to analyze convergence of $F(x):=f(x)+(\lambda^2/2)\tr(\nabla^2 f(x))$.
This result is of independent and broader interest for advancing the understanding of zeroth-order optimization and naturally extends to analyzing convergence rates of first-order methods such as SAM and SGD on the smoothed loss towards flat minima (Remark \ref{rmk:first-order}).

$\bullet$ We provide empirical evaluations to examine the behavior of the trace of Hessian under zeroth-order optimization across three settings: a test function (Figure \ref{fig:toy}), binary classification tasks using overparameterized SVMs and logistic regression (Figure \ref{fig:cvx}), and language model fine-tuning tasks with RoBERTa \citep{liu2019roberta} (Figure \ref{fig:llm}).
Consistent with our theoretical predictions, we observe that the trace of Hessian decreases when using zeroth-order optimization across all these settings. Note that we adopt an estimation of the trace of Hessian on language models for scalability purposes.

To the best of our knowledge, this is the first work to prove that zeroth-order optimization converges to flat minima.
Note that all results provided in this paper can be readily extended to zeroth-order optimization with other unbiased estimators of $\nabla f_\lambda(x)$ satisfying $\bE[uu^\top]=\rI_d$ such that Eq. \eqref{eq:smooth-to-trace} holds, including the one-point estimator suggested in \citep{flaxman2005online, nesterov2017random}, as well as gradient estimation using random vectors uniformly distributed on the Euclidean sphere \citep{shamir2017optimal, zhang2024dpzero}.

\paragraph{Notation.}
We use $\norm{\cdot}$ for the Euclidean norm of vectors and $[m]$ for the set $\{1,2,\cdots,m\}$.
The trace of a square matrix $J$ is denoted by $\tr(J)$.
A standard Gaussian random vector in $\bR^d$ is written as $v\sim\cN(0,\rI_d)$ and satisfies $\bE[vv^\top]=\rI_d$.
A function $p: \bR^d\rightarrow\bR$ is convex if $\alpha\, p(x) + (1-\alpha)\, p(y) \geq p(\alpha x + (1-\alpha) y)$, $\forall x,y\in\bR^d$ and $\alpha\in(0,1)$.
A function $q: \bR^d\rightarrow\bR$ is $r$th-order smooth with $L_r>0$ if it is $r$ times differentiable and $\norm{\nabla^r q(x)-\nabla^r q(y)}\leq L_r\norm{x-y}$, where $\norm{\nabla^r q(x)}:=\max_{s\in\bR^d, \norm{s}\leq 1} \abs{\nabla^r q(x)[s]^r}$ and $\nabla^r q(x)[s]^r:=\sum_{i_1,\cdots,i_r\in[d]} (\partial^r q(x) / \partial x_{i_1}\cdots \partial x_{i_r}) s_{i_1}\cdots s_{i_r}$.
We say for brevity that $q(x)$ is $L_1$-smooth if $r=1$.

\subsection{Related Works}

\textbf{Zeroth-Order Optimization.}
Existing works primarily focused on convergence to a stationary point, while we prove the first result on convergence to flat minima.
The development and early advances of zeroth-order optimization can be found in \citep{matyas1965random, conn2009introduction}.
\citet{nesterov2017random} provided a convergence analysis of zeroth-order optimization across various settings. Their results for nonsmooth convex functions were refined by \citep{shamir2017optimal}, while improvements for nonsmooth nonconvex functions were made by \citep{lin2022gradient, chen2023faster, kornowski2024algorithm}. Extensions to the stochastic setting were considered in \citep{ghadimi2013stochastic}.
Lower bounds were also provided in \citep{wibisono2012finite, duchi2015optimal}, showing that the dimension dependence in the convergence guarantees of zeroth-order optimization is unavoidable without additional assumptions.
Several recent works \citep{yue2023zeroth, malladi2023fine, zhang2024dpzero} proved that such dimension dependence can be relaxed to a quantity related to the trace of Hessian.
Zeroth-order optimization has been extended to minimax optimization \citep{wang2022zeroth}, bilevel optimization \citep{aghasi2025fully}, constrained optimization \citep{balasubramanian2022zeroth, liu2024zeroth}, and Riemannian optimization \citep{li2023stochastic, li2024zeroth}.
It has also been integrated with coordinate descent \citep{lian2016comprehensive}, conditional gradient descent \citep{balasubramanian2018zeroth}, SignSGD \citep{liu2019signsgd}, and variance reduction techniques \citep{liu2018zeroth, fang2018spider, ji2019improved}.
A line of work \citep{vlatakis2019efficiently, lucchi2021second, balasubramanian2022zeroth, zhang2022zeroth, ren2023escaping} established convergence to second-order stationary points, demonstrating that zeroth-order methods can also escape saddle points.
The noisy function evaluation setting was studied in \citep{bach2016highly, gasnikov2024highly, akhavan2024gradient}, where higher-order smoothness assumptions were used to reduce bias in gradient estimates.
The capability of zeroth-order methods for fine-tuning LLMs was first demonstrated by \citet{malladi2023fine}. Following this work, several recent studies have introduced various improvements aimed at enhancing runtime efficiency and performance, including momentum \citep{zhang2024revisiting, jiang2024zo}, variance reduction \citep{gautam2024variance}, sparsification \citep{guo2024zeroth, liu2024sparse}, use of Hessian information \citep{zhao2025secondorder}, and better sampling strategies \citep{chen2025enhancing}.

\textbf{Sharpness-Aware Minimization.}
Prior studies of flat minima have mostly centered on first-order methods.
Algorithms designed to find flat minima have achieved strong empirical success, including Entropy-SGD \citep{chaudhari2017entropysgd}, stochastic weight averaging \citep{izmailov2018averaging, kaddour2022flat}, and sharpness-aware minimization (SAM) \citep{foret2021sharpnessaware}. Similar methods to SAM were proposed in \citep{wu2020adversarial, zheng2021regularizing}, and their efficiency and performance were further enhanced in e.g., \citep{kwon2021asam, liu2022towards, du2022sharpness, zhuang2022surrogate, andriushchenko2022towards, li2023enhancing}.
Also inspired by the insights in Eq. \eqref{eq:smooth-to-trace}, \citet{zhang2024noise} proposed a gradient-based method that effectively minimizes the smoothed loss.
Several recent studies \citep{ye2025sharpnessaware, refael2025lorenza, yang2025sharpzo} proposed modifying zeroth-order optimization and combining it with principles from SAM to explicitly promote flat minima.
In contrast, our work focuses on understanding the implicit regularization effects inherent within the standard zeroth-order optimization.
In addition to the trace of Hessian used in this work, other notions of sharpness have also been studied in the literature.
One such example is the largest eigenvalue of the Hessian matrix, which has been shown to be implicitly penalized by (S)GD with large learning rates \citep{cohen2021gradient, arora2022understanding, wang2022analyzing, lyu2022understanding, damian2023selfstabilization, ahn2023learning} and SAM \citep{wen2023how, bartlett2023dynamics}.
\citet{li2024implicit} proved that SAM implicitly promotes balanced solutions on scale-invariant problems.
A sharpness measure based on the largest gradient norm in a local neighborhood was proposed in \citep{zhang2023gradient}.
Recently, \citet{ahn2024escape} provided a formal definition of flat minima and studied the convergence complexity of finding them.
A local concept of flat minima was used, defining them as local minima that are also stationary points of the trace of Hessian evaluated at limit points under gradient flow.
Two gradient-based algorithms were proposed with convergence guarantees to flat local minima under the assumptions that the loss function is four times continuously differentiable, satisfies the local PL condition, and has a twice Lipschitz limit map under gradient flow.
In this work, we adopt a global notion of flat minima and assume convexity of the function to show that zeroth-order optimization with the two-point estimator converges to flat global minima.

\begin{algorithm}[t]
    \caption{Zeroth-Order Optimization with the Two-Point Estimator}
    \begin{algorithmic}[1]
        \REQUIRE Initialization $x_0\in\bR^d$, number of iterations $T$, stepsize $\eta>0$, smoothing parameter $\lambda>0$.
        
        \FOR{$t=0, 1, \cdots, T-1$}
            \STATE Sample $u_t$ uniformly from the standard multivariate Gaussian distribution $\cN(0, \rI_d)$.

            \STATE Construct the two-point gradient estimator
            \begin{equation*}
                g_\lambda(x_t, u_t) = \frac{f(x_t+\lambda u_t) - f(x_t-\lambda u_t)}{2\lambda}u_t.
            \end{equation*}
            
            \STATE Update the parameter
            \begin{equation*}
                x_{t+1} \;\gets\; x_t - \eta \, g_\lambda(x_t, u_t).
            \end{equation*}
        \ENDFOR
        
        \ENSURE $x_\tau$ for $\tau$ sampled uniformly at random from $\{0,1,\cdots,T-1\}$.
    \end{algorithmic}
    \label{algo:zo}
\end{algorithm}

\section{Warm-up: Sharpness as Implicit Regularization}
\label{sec:reg}

Throughout this paper, zeroth-order optimization refers specifically to the method described in Algorithm \ref{algo:zo}.
As the two-point estimator in Eq. \eqref{eq:two-points} is an unbiased gradient estimator for the smoothed function $f_\lambda(x)=\bE_{u\sim\cN(0, \rI_d)}[f(x+\lambda u)]$ \citep{nesterov2017random}, zeroth-order optimization directly minimizes $f_\lambda(x)$. Let $f(x)$ be twice continuously differentiable. By Taylor's theorem, we have that
\begin{equation*}
    f(x + \lambda u) = f(x) + \lambda u^\top \nabla f(x) + \frac{\lambda^2}{2} u^\top \nabla^2 f(x) u + o(\lambda^2).
\end{equation*}
Taking expectation w.r.t. $u\sim \cN(0, \rI_d)$, we obtain that
\begin{equation}
    \begin{split}
        f_\lambda (x)
        & =
        f(x) + \frac{\lambda^2}{2} \bE_u\left[\tr\left(uu^\top \nabla^2 f(x) \right)\right] + o(\lambda^2) \\
        & =
        f(x) + \frac{\lambda^2}{2} \tr\left(\nabla^2 f(x)\right) + o(\lambda^2).
    \end{split}
    \label{eq:smooth-to-trace}
\end{equation}
The results suggest that the smoothed function introduces trace of Hessian as an additional regularization term.
In the literature for sharpness-aware minimization, the trace of Hessian is often used to measure the sharpness of the solution \citep{wen2023how, wen2023sharpness, ahn2024escape}.
Recall that $\bE[g_\lambda(x,u)]=\nabla f_\lambda(x)$, and thus zeroth-order optimization implicitly minimizes sharpness:
\begin{equation}
    \bE_{u_t}[x_{t+1} - x_t] = -\eta \nabla f(x_t) - \frac{\eta\lambda^2}{2} \nabla \tr\left(\nabla^2 f(x_t)\right) + o(\lambda^2).
    \label{eq:implicit-regularization}
\end{equation}
This holds for any twice continuously differentiable function without further assumptions. It can be readily deduced from Eq. \eqref{eq:implicit-regularization} that when $x_t$ is close to optimal, i.e., with small gradient, the iterates move in expectation towards a direction that reduces trace of Hessian. Before formally establishing that Eq. \eqref{eq:implicit-regularization} leads to flat minima, we first illustrate this intuition through a concrete example.
\begin{example}
    Consider the function $h(x)=(y^\top z - 1)^2/2$, where $x=(y^\top, z^\top)^\top\in\bR^{2d}$ for $y,z\in\bR^d$.
    The optimal value is achieved when $y^\top z=1$, and the trace of Hessian is $\norm{y}^2 + \norm{z}^2$.
    Among all optimal solutions, the smallest trace of Hessian is achieved when $y=z$ and $\norm{y}=1$.
    \label{exp:xy}
\end{example}

\begin{figure}
    \centering
    \begin{tabular}{cc}
        \includegraphics[width=0.47\linewidth]{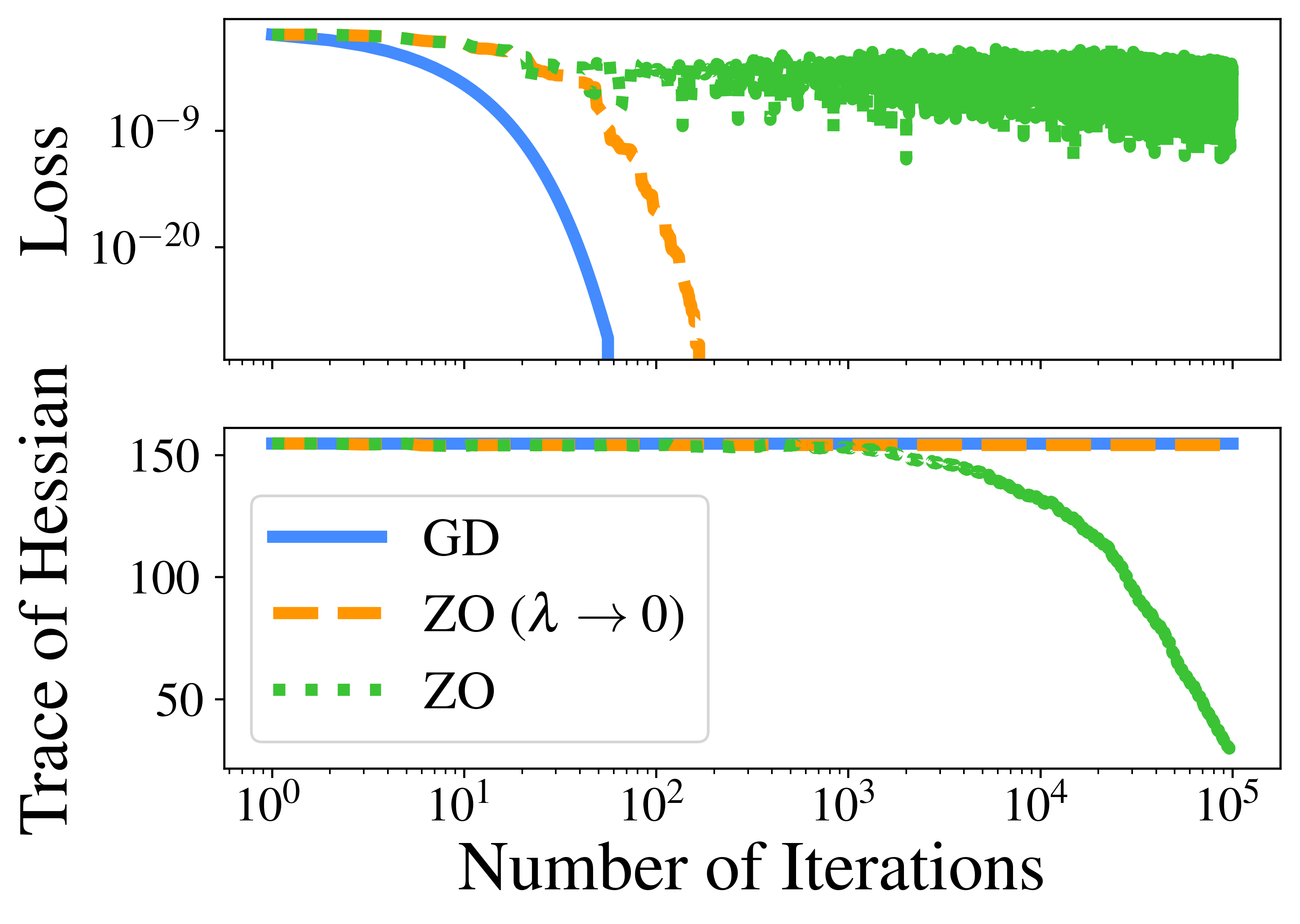}
        &
        \includegraphics[width=0.47\linewidth]{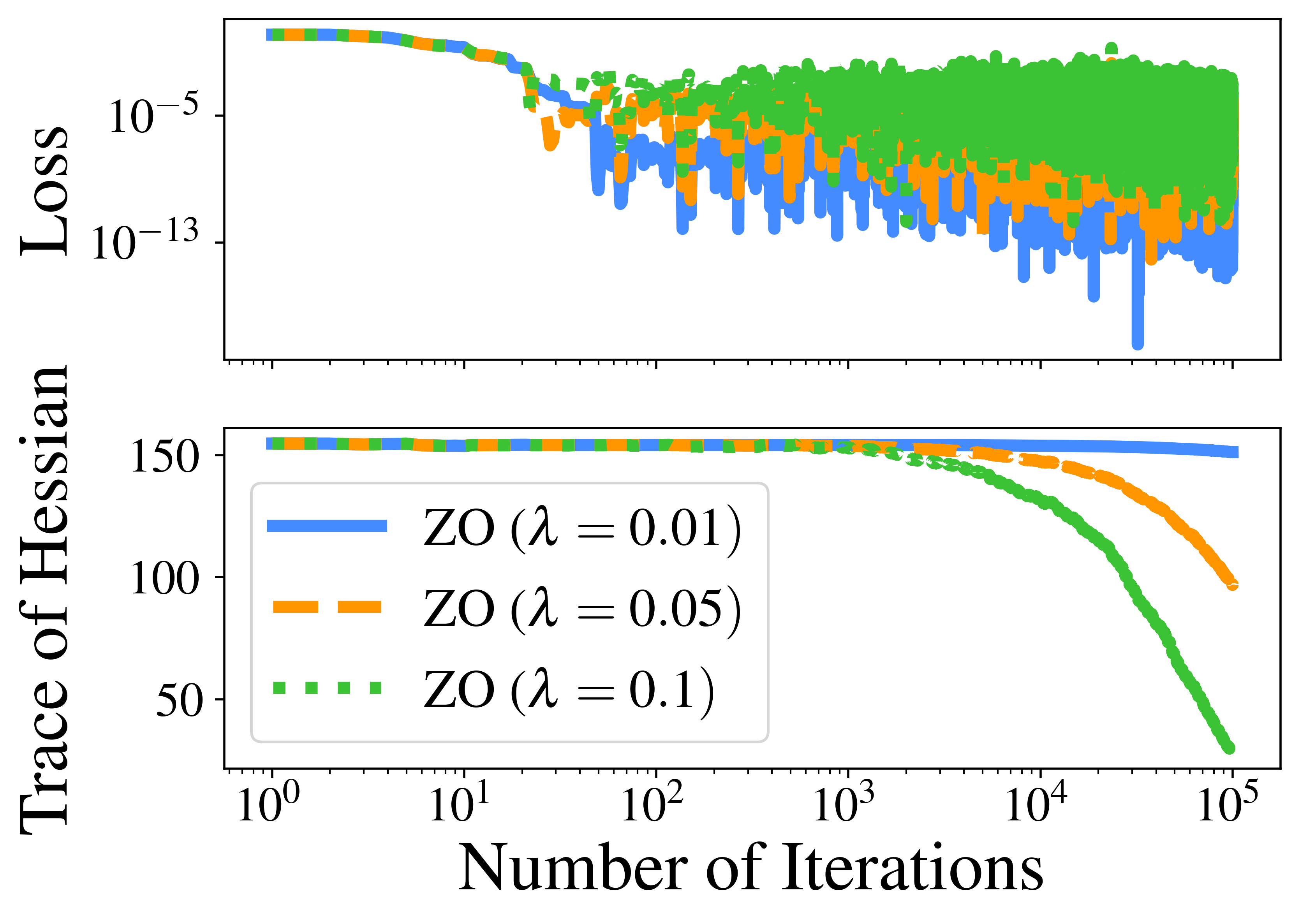}
        \\
        (a) Comparison of GD and ZO.
        &
        (b) Effect of $\lambda$.
    \end{tabular}
    \caption{Loss and trace of Hessian on Example \ref{exp:xy} with $d=100$.
    (a) Comparisons among gradient descent (GD), zeroth-order (ZO) optimization (Algorithm \ref{algo:zo}) with $\lambda=0.1$, and ZO with $\lambda\to 0$ (directional derivatives).
    (b) Comparisons on different $\lambda$ used in ZO.
    GD and ZO with $\lambda\to 0$ fail to decrease the trace of Hessian.
    Larger $\lambda$ in ZO leads to larger errors in the loss but brings more regularization effect on minimizing the trace of Hessian.}
    \label{fig:toy}
\end{figure}

Figure \ref{fig:toy} plots the values of the loss function and the trace of Hessian when applying gradient descent and zeroth-order optimization on Example \ref{exp:xy}.
Gradient descent and zeroth-order optimization with $\lambda\to 0$ leave the trace of Hessian almost unchanged throughout.
The smoothing parameter $\lambda$ controls the trade-offs between regularization on the trace of Hessian and the optimization error induced by this additional bias term.
Despite the large oscillation in loss values from zeroth-order methods due to random search directions, the trajectory of the trace of Hessian decreases noticeably smoother.
Example \ref{exp:xy} belongs to a class of scale-invariant problems studied in \citet{li2024implicit}. It was proved that SAM promotes balanced solutions on these problems where $B_t:=(\norm{y_t}^2 - \norm{z_t}^2)/2$ converges to 0 in the limit, while $B_t$ remains $B_0$ for gradient descent \citep{li2024implicit}.
Note that only flat minima are perfectly balanced with $\norm{y}=\norm{z}$ among all optimal solutions of Example \ref{exp:xy}.
We show in the following that zeroth-order optimization favors balanced solutions as well. A proof is provided in Appendix \ref{app:balance}.

\begin{proposition}
    When applying zeroth-order optimization (Algorithm \ref{algo:zo}) on Example \ref{exp:xy}, the limiting flow with $\eta\to 0$ satisfies that $d\,\bE[B_t]/d\,t=-2\lambda^2\, \bE[B_t]$.
    In other words, $\bE[B_t]\to 0$ when $t\to\infty$.
    \label{prop:balance}
\end{proposition}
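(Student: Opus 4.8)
The plan is to exploit the special structure of $h$, for which the two-point estimator can be evaluated in closed form with no Taylor truncation. Writing the Gaussian direction in blocks as $u=(p^\top,q^\top)^\top$ with $p,q\in\bR^d$ independent $\cN(0,\rI_d)$ vectors aligned with the $y$ and $z$ coordinates, I would observe that $(y+\lambda p)^\top(z+\lambda q)-1 = a+\lambda b+\lambda^2 c$, where $a:=y^\top z-1$, $b:=y^\top q+z^\top p$, and $c:=p^\top q$. Because $h$ is a square, the difference of squares collapses exactly, so $h(x+\lambda u)-h(x-\lambda u)=2\lambda b\,(a+\lambda^2 c)$, yielding the \emph{exact} identity
\[
 g_\lambda(x,u)=b\,(a+\lambda^2 c)\,u,\qquad b=y^\top q+z^\top p,\quad c=p^\top q.
\]
The blocks of the update then read $y_{t+1}=y_t-\eta\,b(a+\lambda^2 c)\,p$ and $z_{t+1}=z_t-\eta\,b(a+\lambda^2 c)\,q$.

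Next I would track $B_t=(\norm{y_t}^2-\norm{z_t}^2)/2$ directly. Expanding the squared norms, the one-step change is $B_{t+1}-B_t=-\eta\,b(a+\lambda^2 c)\,(y_t^\top p-z_t^\top q)+O(\eta^2)$, with the $O(\eta^2)$ term collecting the squared contributions. The crux is to evaluate the conditional expectation of the $O(\eta)$ drift over $u$, which I would split into the $a$-part $\bE_u[b(y^\top p-z^\top q)]$ and the $\lambda^2$-part $\lambda^2\bE_u[bc(y^\top p-z^\top q)]$, using $p\perp q$ and the vanishing of odd Gaussian moments.

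The first (loss-gradient) expectation is $0$: the only surviving pairings are $\bE[(z^\top p)(y^\top p)]-\bE[(y^\top q)(z^\top q)]=y^\top z-y^\top z=0$. This reflects that $h$, being symmetric under $y\leftrightarrow z$, exerts no net force on the imbalance $B$. For the second expectation, every term containing a lone factor of $p$ or $q$ dies by independence, and the two survivors give $\bE[(y^\top q)(y^\top p)(p^\top q)]=\norm{y}^2$ and $-\bE[(z^\top p)(z^\top q)(p^\top q)]=-\norm{z}^2$, so $\bE_u[bc(y^\top p-z^\top q)]=\norm{y}^2-\norm{z}^2=2B_t$. Combining the two parts gives $\bE_u[B_{t+1}-B_t\mid x_t]=-2\eta\lambda^2 B_t+O(\eta^2)$.

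Finally, dividing by $\eta$ and identifying one iteration with a time increment $\eta$, the limit $\eta\to0$ yields the claimed flow $d\,\bE[B_t]/d\,t=-2\lambda^2\,\bE[B_t]$, whose solution $\bE[B_t]=\bE[B_0]e^{-2\lambda^2 t}\to0$ proves the balancing conclusion. As an independent cross-check one can compute the smoothed objective in closed form, $f_\lambda(x)=\tfrac12(y^\top z-1)^2+\tfrac{\lambda^2}{2}(\norm{y}^2+\norm{z}^2)+\tfrac{\lambda^4 d}{2}$ (so Eq. \eqref{eq:smooth-to-trace} is exact here, with constant remainder), and read off the limiting flow $\dot x=-\nabla f_\lambda(x)$; differentiating $B$ along it reproduces $\dot B=-2\lambda^2 B$ once the $(y^\top z-1)$ terms cancel by symmetry. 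I expect the main obstacle to be the moment bookkeeping in the two third-order Gaussian expectations, and in particular confirming that the $a$-part cancels exactly, so that the balancing is driven purely by the $\lambda^2$ regularization and not by the loss gradient.
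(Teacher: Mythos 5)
Your proof is correct and takes essentially the same route as the paper's: the exact difference-of-squares identity $g_\lambda(x,u)=b\,(a+\lambda^2 c)\,u$, the vanishing of the $a$-part of the drift by the $y\leftrightarrow z$ symmetry of the Gaussian pairings, and the surviving $\lambda^2$-part giving $\bE[B_{t+1}-B_t\mid x_t]=-2\eta\lambda^2 B_t+O(\eta^2)$ before passing to the limit $\eta\to0$. The only cosmetic difference is that you track $B_t$ directly in one computation while the paper first derives the conditional block drifts $-(y_t^\top z_t-1)z_t-\lambda^2 y_t$ and $-(y_t^\top z_t-1)y_t-\lambda^2 z_t$ and then combines them; your closed-form cross-check of $f_\lambda$ is a correct, nice, but inessential addition.
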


The smoothing parameter $\lambda$ plays a critical role in driving $\bE[B_t]$ towards zero and determines the rate. When $\lambda=0$, the regularization effect disappears, and zeroth-order optimization behaves like gradient descent to maintain $\bE[B_t]$ as a constant. These theoretical findings align with Eq. \eqref{eq:implicit-regularization} and Figure \ref{fig:toy}.
However, the qualitative discussion of implicit regularization in this section does not offer insights on the complexity to reach flat minima, which will be the subject of the next section.

\section{Complexity for Finding Flat Minima} \label{sec:converge}

To formally study the convergence complexity of zeroth-order optimization with the two-point estimator towards flat minima, we first define the notion of flat minima that we are interested in.

\begin{definition}[Flat Minima]
    Suppose that $f(x)$ is twice differentiable with a well-defined Hessian matrix. Let $\cX^*:=\arg\min_{x\in\bR^d} f(x)$ denote the set of minimizers of $f(x)$. We say $x^*$ is a flat minimum of $f(x)$ if $x^*\in\arg\min_{x\in\cX^*}\, \tr(\nabla^2 f(x))$.
    That is, $x^*$ is a minimizer of $f(x)$ that achieves the smallest trace of Hessian among the set of minimizers.
    \label{def:flat}
\end{definition}

The above definition of flat minima implicitly encodes a constrained optimization problem
\begin{equation*}
    \min_{x\in\bR^d} \; \tr(\nabla^2 f(x)), \quad \text{s.t.} \quad f(x) - \min_{x\in\bR^d} f(x)\leq 0.
\end{equation*}
This functional constraint is equivalent to requiring $x\in\cX^*$, since it always holds that $f(x) \geq \min_{x\in\bR^d} f(x)$.  
In the literature on functional constrained optimization problems $\min_{x\in\cC} \psi_0(x)$ with the constrained set $\cC:=\{x\in\bR^d\,|\,\psi_1(x)\leq 0\}$, a common objective is to find $\epsilon$-approximate solutions $\hat x$ satisfying $\psi_0(\hat x) - \min_{x\in\cC} \psi_0(x)\leq\epsilon$ and $\psi_1(\hat x)\leq\epsilon$ \citep{nesterov2013introductory, boob2023stochastic, zhang2025primal}.
Motivated from this connection to constrained optimization, we define approximate flat minima in the following.

\begin{definition}[Approximate Flat Minima]
    Suppose that $f(x)$ is twice differentiable with a well-defined Hessian matrix. Let $\cX^*:=\arg\min_{x\in\bR^d} f(x)$ denote the set of minimizers of $f(x)$. For $\epsilon_1, \epsilon_2>0$, we say $\hat{x}$ is an $(\epsilon_1, \epsilon_2)$-approximate flat minimum of $f(x)$ if
    \begin{equation*}
        f(\hat x) - \min_{x\in\bR^d} f(x) \leq \epsilon_1, \quad \tr(\nabla^2 f(\hat x)) - \min_{x\in\cX^*} \tr(\nabla^2 f(x)) \leq \epsilon_2.
    \end{equation*}
    
    \label{def:approx-flat}
\end{definition}

\citet{ahn2024escape} defined flat minima as local minima that are also stationary points of the trace of Hessian evaluated at limit points under gradient flow.
Since the trace of Hessian is highly nonconvex, such a local definition does not necessarily correspond to minima with the lowest trace of Hessian.
In this work, we adopt a global notion of flat minima and prove that zeroth-order optimization converges to them. The following assumptions are required to guarantee convergence to approximate flat minima.

\begin{assumption}[Smoothness]
    We assume the function $f(x)$ is three times continuously differentiable and satisfies that
    $(a)$ $f(x)$ is $L_1$-smooth;
    $(b)$ $f(x)$ is second-order smooth with $L_2>0$, which implies that all third-order partial derivatives are bounded: $\abs{\partial^3 f(x)/\partial x_i\partial x_j\partial x_k}\leq L_2$, $\forall i,j,k\in[d]$ and $\forall x\in\bR^d$;
    and, $(c)$ $f(x)$ is third-order smooth with $L_3>0$, which implies that $\forall x,y\in\bR^d$,
    \begin{equation*}
        \Big|f(y) - f(x) - \nabla f(x)^\top (y - x) - \frac{1}{2} (y-x)^\top \nabla^2 f(x) (y-x) - \frac{1}{6} \nabla^3 f(x)[y-x]^3\Big| \leq \frac{L_3}{24} \norm{x - y}^4,
    \end{equation*}
    where $\nabla^3 f(x)[y-x]^3 = \sum_{i,j,k\in[d]} (\partial^3 f(x)/\partial x_i\partial x_j\partial x_k) (y_i - x_i) (y_j - x_j) (y_k - x_k)$.

    \label{asp:smooth}
\end{assumption}

Since the characterization of flat minima already involves second-order information of $f(x)$, it is natural to require assumptions on higher-order information to establish convergence guarantees.
Higher-order smoothness assumptions have been widely used to establish fast convergence rates \citep{bach2016highly, nesterov2021implementable}, guarantee convergence to second-order stationary points \citep{fang2018spider, ren2023escaping}, analyze implicit regularization \citep{arora2022understanding, wen2023how}, and study the complexity of finding flat minima \citep{ahn2024escape}.
We emphasize that these assumptions on higher-order information are used solely for convergence analysis; Algorithm \ref{algo:zo} requires only zeroth-order information.
In order to prove global convergence, we also need the following convexity assumption. Although $f(x)$ is convex, $\tr(\nabla^2 f(x))$ is in general nonconvex. Seeking flat minima with lowest $\tr(\nabla^2 f(x))$ in the set of minimizers $\cX^*$ is therefore a challenging task.

\begin{assumption}[Convexity]
    The function $f(x)$ is convex on $\bR^d$, and thus $\tr(\nabla^2 f(x))\geq0$.
    \label{asp:convex}
\end{assumption}

\subsection{Convergence Analysis} \label{sec:analysis}

We start with a brief recap of standard convergence analysis for zeroth-order optimization.
We then explain the major theoretical contribution in this paper that leads to convergence towards flat minima. By the zeroth-order updates in Algorithm \ref{algo:zo}, we have that $\forall x\in\bR^d$,
\begin{equation}
    \begin{split}
        \bE\norm{x_{t+1} - x}^2
        & =
        \bE\norm{x_t - x}^2 - 2\eta \,\bE[g_\lambda(x_t, u_t)]^\top (x_t - x) +
        \eta^2\,\bE\norm*{g_\lambda(x_t,u_t)}^2 \\
        & \leq
        \bE\norm{x_t - x}^2 - 2\eta (f_\lambda (x_t) - f_\lambda(x)) + \eta^2\,\bE\norm*{g_\lambda(x_t,u_t)}^2.
    \end{split}
    \label{eq:one-step}
\end{equation}
Here, we use $\bE[g_\lambda(x_t,u_t)]=\nabla f_\lambda(x_t)$ and the property that $f_\lambda(x)$ is convex when $f(x)$ is convex \citep{nesterov2017random}.
Standard analysis considers optimizing $f(x)$. When $f(x)$ is smooth, the second term $f_\lambda(x_t) - f_\lambda(x)=f(x_t) - f(x) + \cO(\lambda^2)$, and the third term can be bounded as
\begin{equation*}
    \bE\norm{g_\lambda(x_t,u_t)}^2 \leq 2(d+4) \norm{\nabla f(x)}^2 + \cO(\lambda^2).
\end{equation*}
Since $f(x)$ is $L_1$-smooth, we also have that $\norm{\nabla f(x)}^2 \leq 2L_1(f(x) - \min_{x\in\bR^d} f(x))$. By selecting $x$ as one minimizer from the set $\cX^*$ and setting $\eta=\cO(1/d)$, we can rearrange Eq. \eqref{eq:one-step} to obtain
\begin{equation*}
    \bE\left[f(x_t) - \min_{x\in\bR^d} f(x)\right] \leq \cO(d)(\bE\norm{x_t - x}^2 - \bE\norm{x_{t+1} - x}^2) + \cO(\lambda^2).
\end{equation*}
Summing up from $t=0$ to $t=T-1$ and averaging by $T$ give $\bE[f(\bar x_T) - \min_{x\in\bR^d} f(x)]\leq \cO(d/T)$ with a small enough $\lambda$, where $\bar x_T$ is the average of iterates \citep{nesterov2017random}.

Going beyond the classical analysis and targeting at flat minima, we take inspiration from Eq. \eqref{eq:smooth-to-trace} and instead focus on the regularized loss
\begin{equation}
    F(x) := f(x) + \frac{\lambda^2}{2} \tr\left(\nabla^2 f(x)\right).
    \label{eq:reg-loss}
\end{equation}
In this way, we do not treat $\tr(\nabla^2 f(x))$ as a bias to be controlled but instead view the term as an objective to be optimized. Indeed, the $\cO(\lambda^2)$ error term in the standard analysis mostly comes from bounding $\lambda^2 \tr(\nabla^2 f(x))$ from above by $\lambda^2 L_1d$ when $f(x)$ is $L_1$-smooth.
To proceed, we need to control the difference between $F(x)$ and $f_\lambda(x)$, as well as to bound $\bE\norm{g_\lambda(x_t,u_t)}^2$ by the term $\norm{\nabla F(x)}^2$ to establish convergence on $F(x)$.

\begin{lemma}
    Let Assumption \ref{asp:smooth} be satisfied. For $F(x)$ defined in Eq. \eqref{eq:reg-loss}, it holds that 
    \begin{equation*}
        \abs{f_\lambda(x) - F(x)} \leq \frac{L_3}{24}\lambda^4 (d+4)^2,\qquad \forall x\in\bR^d.
    \end{equation*}
    The second moments of the two-point estimator $g_\lambda(x,u)$ defined in Eq. \eqref{eq:two-points} can be bounded as
    \begin{equation*}
        \bE\norm{g_\lambda(x,u)}^2 \leq 2(d+6)\,\norm{\nabla F(x)}^2 + \frac{L_2^2}{3}\lambda^4(d+6)^4 + \frac{L_3^2}{288}\lambda^6 (d+10)^5.
    \end{equation*}
    
    \label{lm:F-basic}
\end{lemma}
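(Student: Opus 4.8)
The statement bundles two inequalities, and I would treat them separately, as the first is essentially a second-order Taylor computation while the second is where the real work lies.

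For the approximation bound $\abs{f_\lambda(x) - F(x)}$, the plan is to Taylor expand $f(x+\lambda u)$ to third order about $x$ and integrate over $u\sim\cN(0,\rI_d)$. Invoking Assumption \ref{asp:smooth}(c) with $y=x+\lambda u$ controls the third-order Taylor remainder by $\tfrac{L_3}{24}\lambda^4\norm{u}^4$. Taking $\bE_u$, the linear term $\lambda\nabla f(x)^\top u$ and the cubic term $\tfrac{\lambda^3}{6}\nabla^3 f(x)[u]^3$ vanish by oddness of the Gaussian, while $\tfrac{\lambda^2}{2}\bE[u^\top\nabla^2 f(x)u]=\tfrac{\lambda^2}{2}\tr(\nabla^2 f(x))$ because $\bE[uu^\top]=\rI_d$. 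Hence $f_\lambda(x)-F(x)$ equals exactly the expected remainder, so $\abs{f_\lambda(x)-F(x)}\le\tfrac{L_3}{24}\lambda^4\,\bE\norm{u}^4\le\tfrac{L_3}{24}\lambda^4(d+4)^2$, using the standard Gaussian moment bound $\bE\norm{u}^{2k}\le(d+2k)^k$.

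For the second-moment bound I would write $g_\lambda(x,u)=\phi(u)\,u$ with $\phi(u)=\tfrac{f(x+\lambda u)-f(x-\lambda u)}{2\lambda}$ and Taylor expand both evaluations to third order. In the symmetric difference the zeroth- and second-order terms cancel while the first- and third-order terms survive, leaving $\phi(u)=\nabla f(x)^\top u+\tfrac{\lambda^2}{6}\nabla^3 f(x)[u]^3+r(u)$ with $\abs{r(u)}\le\tfrac{L_3}{24}\lambda^3\norm{u}^4$. Writing $q(u):=\nabla f(x)^\top u+\tfrac{\lambda^2}{6}\nabla^3 f(x)[u]^3$ and applying $\phi^2\le 2q^2+2r^2$ gives $\bE\norm{g_\lambda(x,u)}^2=\bE[\phi^2\norm{u}^2]\le 2\bE[q^2\norm{u}^2]+2\bE[r^2\norm{u}^2]$. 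The remainder piece is routine: $2\bE[r^2\norm{u}^2]\le\tfrac{L_3^2}{288}\lambda^6\,\bE\norm{u}^{10}\le\tfrac{L_3^2}{288}\lambda^6(d+10)^5$, which already reproduces the third term of the claim. For the main piece, expanding $q^2$ and taking expectations yields three terms, and two moment identities drive everything: (i) Gaussian integration by parts (Stein's identity) with Euler's homogeneity relation gives $\bE[P(u)\norm{u}^2]=(d+2k)\bE[P(u)]$ for any homogeneous $P$ of degree $2k$; (ii) the contraction identity $\bE[(\nabla f(x)^\top u)(\nabla^3 f(x)[u]^3)]=3\,\nabla f(x)^\top\nabla\tr(\nabla^2 f(x))$, which is precisely the mechanism by which the cubic term regenerates the $\tfrac{\lambda^2}{2}\nabla\tr(\nabla^2 f)$ part of $\nabla F$. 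These give the first two terms as $(d+2)\norm{\nabla f(x)}^2$ and $(d+4)\lambda^2\,\nabla f(x)^\top\nabla\tr(\nabla^2 f(x))$, and the third is bounded using $\abs{\nabla^3 f(x)[u]^3}\le L_2\norm{u}^3$ (from $L_2$-second-order smoothness) by $\tfrac{\lambda^4}{36}L_2^2\,\bE\norm{u}^8$.

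The genuine obstacle is the final assembly, because $\norm{\nabla\tr(\nabla^2 f(x))}$ can be as large as $\Theta(d^{3/2}L_2)$; bounding the cross term crudely would inflate the dimension dependence to $d^5$, far exceeding the target $(d+6)^4$. The resolution is to charge the large $\norm{\nabla\tr(\nabla^2 f)}^2$ contribution to $\norm{\nabla F(x)}^2$ rather than to the error. Concretely, setting $v=\nabla f(x)$ and $w=\tfrac{\lambda^2}{2}\nabla\tr(\nabla^2 f(x))$ so that $\nabla F(x)=v+w$, I would verify that $(d+6)\norm{\nabla F(x)}^2$ minus the first two terms equals $4\norm{v}^2+4\,v^\top w+(d+6)\norm{w}^2\ge 0$, since the associated $2\times 2$ form with diagonal $(4,d+6)$ and off-diagonal $2$ has determinant $4(d+5)>0$ and is positive definite. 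This yields $\bE[q^2\norm{u}^2]\le(d+6)\norm{\nabla F(x)}^2+\tfrac{\lambda^4}{36}L_2^2\,\bE\norm{u}^8$; doubling, and bounding $\bE\norm{u}^8\le(d+6)^4$, produces $2(d+6)\norm{\nabla F(x)}^2+\tfrac{L_2^2}{3}\lambda^4(d+6)^4$ (with room to spare in the constant), which together with the remainder bound gives the stated inequality. I expect this positive-definiteness step to be the crux: it is what lets the cross terms coalesce into $\norm{\nabla F(x)}^2$ while keeping the dimension scaling at the correct order.
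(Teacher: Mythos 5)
Your proof is correct and reaches every stated constant; the overall skeleton (cubic Taylor surrogate for the first claim, symmetric-difference expansion $\phi=q+r$, the split $\phi^2\le 2q^2+2r^2$, and the remainder bound $\frac{L_3^2}{288}\lambda^6\,\bE\norm{u}^{10}\le\frac{L_3^2}{288}\lambda^6(d+10)^5$) coincides with the paper's, but your technical core is genuinely different in two ways. First, where the paper proves a standalone moment lemma (its Lemma \ref{lm:second-moment}) by Isserlis' theorem --- enumerating all $105$ pairings of the 8th-order moment $\bE[u_{i_1}u_{j_1}u_{k_1}u_{i_2}u_{j_2}u_{k_2}u_l^2]$ to obtain the exact identity $\bE\norm{H_0(x,u)}^2=9(d+6)\norm{\nabla\tr(\nabla^2 f(x))}^2+6(d+6)\sum_{i,j,k}(\partial^3 f/\partial x_i\partial x_j\partial x_k)^2$, and then invoking a polarization argument to get the entrywise bound $\abs{\partial^3 f/\partial x_i\partial x_j\partial x_k}\le L_2$ --- your Stein--Euler identity $\bE[P(u)\norm{u}^2]=(d+2k)\,\bE[P(u)]$ collapses every 8th-order moment to a lower-order one, and the direct consequence $\abs{\nabla^3 f(x)[u]^3}\le L_2\norm{u}^3$ of operator-norm second-order smoothness disposes of the cubic-squared term outright; in your route only a three-pairing 4th-order Isserlis computation (for the contraction $\bE[(\nabla f(x)^\top u)\,\nabla^3 f(x)[u]^3]=3\,\nabla f(x)^\top\nabla\tr(\nabla^2 f(x))$) remains, so both the $105$-term enumeration and the polarization step disappear. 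Second, your final assembly is organized differently and is in fact tighter than the paper's. Because you charge the entire $(\nabla^3 f(x)[u]^3)^2$ contribution to the $L_2^2$ error, the form comparison you must verify is $(d+2)\norm{v}^2+2(d+4)v^\top w\le(d+6)\norm{v+w}^2$ with $v=\nabla f(x)$ and $w=(\lambda^2/2)\nabla\tr(\nabla^2 f(x))$, whose deficit $4\norm{v}^2+4v^\top w+(d+6)\norm{w}^2$ is positive semidefinite exactly as your $2\times 2$ determinant check shows. The paper instead keeps $\frac{(d+6)\lambda^4}{2}\norm{\nabla\tr(\nabla^2 f(x))}^2$ on the left of its final one-line inequality, so the corresponding deficit is $8\norm{v}^2+8v^\top w=8\,\nabla f(x)^\top\nabla F(x)$, which is \emph{not} sign-definite; that step survives only because the possible excess, at most $2\norm{w}^2=\frac{\lambda^4}{2}\norm{\nabla\tr(\nabla^2 f(x))}^2=\cO(\lambda^4 d^3 L_2^2)$, can be absorbed in the slack of the $\frac{L_2^2}{3}\lambda^4(d+6)^4$ term --- an absorption your grouping renders unnecessary while leaving margin in the constant ($\frac{\lambda^4}{18}L_2^2\,\bE\norm{u}^8$ versus the stated $\frac{L_2^2}{3}\lambda^4(d+6)^4$). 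Your even-moment bound $\bE\norm{u}^8=d(d+2)(d+4)(d+6)\le(d+6)^4$ is valid (and slightly sharper than the generic $\bE\norm{u}^\gamma\le(d+\gamma)^{\gamma/2}$ the paper cites), and your treatment of the first claim matches the paper's verbatim.
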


Note that the result above is a non-trivial extension of classical analysis. Particularly for $\bE\norm{g_\lambda(x,u)}^2$, Isserlis' theorem \citep{isserlis1918formula, wick1950evaluation} is required to compute an 8th-order moment $\bE[\prod_{j=1}^8 u_{i_j}]$ where each $i_j\in[d]$ and $u_{i_j}$ is a standard Gaussian (see Lemma \ref{lm:second-moment}), while standard theory only needs to compute a 4th-order moment. By a combinatorial computation, there are in total 105 terms to be considered for 8th-order moments, but only three terms for 4th-order moments.
We also need smoothness of $F(x)$ for relating $\norm{\nabla F(x)}^2$ to $F(x)-\min_{x\in\bR^d} F(x)$.

\begin{lemma}
    Let Assumption \ref{asp:smooth} be satisfied. Then $F(x)$ is $(2L_1)$-smooth if $\lambda^2\leq\sqrt{2}L_1/(d^{3/2}L_3)$.
    \label{lm:F-smooth}
\end{lemma}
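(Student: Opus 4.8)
The plan is to split $\nabla F$ into its two natural pieces and control each separately. Writing $g(x):=\tr(\nabla^2 f(x))$, we have from Eq.~\eqref{eq:reg-loss} that $\nabla F(x)=\nabla f(x)+\tfrac{\lambda^2}{2}\nabla g(x)$, so for any $x,y\in\bR^d$ the triangle inequality gives
\begin{equation*}
    \norm{\nabla F(x)-\nabla F(y)}\;\le\;\norm{\nabla f(x)-\nabla f(y)}+\frac{\lambda^2}{2}\,\norm{\nabla g(x)-\nabla g(y)}.
\end{equation*}
The first term is at most $L_1\norm{x-y}$ by $L_1$-smoothness (Assumption~\ref{asp:smooth}(a)). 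Thus the whole lemma reduces to showing that the gradient of the trace of Hessian is Lipschitz with constant $d^{3/2}L_3$, i.e. $\norm{\nabla g(x)-\nabla g(y)}\le d^{3/2}L_3\norm{x-y}$; given this, the stated range of $\lambda^2$ forces $\tfrac{\lambda^2}{2}d^{3/2}L_3\le L_1/\sqrt2\le L_1$, and hence $\norm{\nabla F(x)-\nabla F(y)}\le 2L_1\norm{x-y}$.

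For the key estimate I would work entrywise. Since $f$ is three times continuously differentiable, $g(x)=\sum_{i}\partial_{ii}f(x)$ is $C^1$ with $\big(\nabla g(x)\big)_j=\sum_i \partial_{iij}f(x)$, so the $j$-th coordinate of $\nabla g(x)-\nabla g(y)$ equals $\sum_i \Delta_{iij}$, where $\Delta:=\nabla^3 f(x)-\nabla^3 f(y)$ is the (symmetric) third-derivative difference tensor satisfying $\norm{\Delta}\le L_3\norm{x-y}$ by third-order smoothness (Assumption~\ref{asp:smooth}(c)). The plan is to bound each diagonal-type entry by the tensor norm, $\abs{\Delta_{iij}}=\abs{\Delta[e_i,e_i,e_j]}\le\norm{\Delta}$, and then sum over $i$ to get $\big|\big(\nabla g(x)-\nabla g(y)\big)_j\big|\le d\,\norm{\Delta}\le d\,L_3\norm{x-y}$ for every $j$. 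Converting the coordinatewise (i.e. $\ell_\infty$) bound to a Euclidean bound costs a factor $\sqrt d$, giving $\norm{\nabla g(x)-\nabla g(y)}\le\sqrt d\,\max_j\big|\big(\nabla g(x)-\nabla g(y)\big)_j\big|\le d^{3/2}L_3\norm{x-y}$, exactly as required.

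The main obstacle is the tensor-norm bookkeeping in the second paragraph: the paper's definition of $\norm{\nabla^3 f}$ is the \emph{diagonal} operator norm $\max_{\norm{s}\le1}\abs{\nabla^3 f(x)[s]^3}$, whereas the quantities $\Delta_{iij}=\Delta[e_i,e_i,e_j]$ that appear in $\nabla g$ are \emph{mixed} evaluations of the form. Controlling the mixed evaluation by the diagonal norm, $\abs{\Delta[e_i,e_i,e_j]}\le\norm{\Delta}$, is not immediate from the definition; it is the classical fact that for a symmetric multilinear form the supremum over all unit arguments is attained on the diagonal (Banach's theorem on symmetric forms), which I would invoke (or, if an elementary self-contained argument is preferred, recover via polarization at the cost of an absolute constant that does not affect the $d^{3/2}$ scaling and is absorbed by the comfortable margin between $L_1/\sqrt2$ and $L_1$). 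Everything else—the triangle inequalities and the $\ell_\infty\to\ell_2$ conversion—is routine, and substituting the hypothesis $\lambda^2\le\sqrt2\,L_1/(d^{3/2}L_3)$ closes the argument to yield the claimed $(2L_1)$-smoothness of $F$.
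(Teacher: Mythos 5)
Your proof is correct, and its skeleton matches the paper's exactly: split $\nabla F$ by the triangle inequality, bound each coordinate of $\nabla\tr(\nabla^2 f(x))-\nabla\tr(\nabla^2 f(y))$ by a sum of mixed third-derivative differences $\sum_i\abs{\Delta_{iij}}$, and pay a factor $\sqrt{d}$ to pass from the coordinatewise bound to the Euclidean norm. The one genuine divergence is the step you correctly identify as the crux: controlling the mixed entries $\Delta_{iij}=\Delta[e_i,e_i,e_j]$ by the diagonal tensor norm $\max_{\norm{s}\le1}\abs{\Delta[s]^3}$. You invoke Banach's theorem that a symmetric multilinear form on Euclidean space attains its norm on the diagonal, which gives $\abs{\Delta_{iij}}\le\norm{\Delta}$ with constant $1$, hence a Lipschitz constant $d^{3/2}L_3$ for the trace gradient and slack in the final step ($\tfrac{\lambda^2}{2}d^{3/2}L_3\le L_1/\sqrt{2}<L_1$). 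The paper instead proves the mixed-entry bound by hand, evaluating the difference tensor at the explicit polarization vectors $w^{\pm}=(e_i\pm e_k)/\sqrt{2}$ and subtracting to isolate $D_{iik}$ up to a $D_{kkk}$ correction; this yields the slightly worse bound $\abs{D_{iik}}\le\sqrt{2}\,L_3\norm{y-z}$, so the trace gradient is $(\sqrt{2}\,d^{3/2}L_3)$-Lipschitz and the hypothesis $\lambda^2\le\sqrt{2}L_1/(d^{3/2}L_3)$ is consumed exactly, with no slack. Your route buys a cleaner constant and a shorter argument at the price of citing a nontrivial classical fact (which is, in fairness, within the scope of the paper's own pointer to Appendix 1 of Nesterov and Nemirovskii for tensor-norm background), while the paper's route is elementary and self-contained, which is presumably why it tolerates the extra $\sqrt{2}$. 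Your fallback remark --- recovering the mixed-entry bound via polarization at the cost of an absolute constant absorbed by the margin --- is precisely the paper's proof, so either variant closes the lemma.
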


Proofs of Lemmas \ref{lm:F-basic} and \ref{lm:F-smooth} can be found in Appendix \ref{app:converge-lm}. Following the one-step analysis in Eq. \eqref{eq:one-step} and using the above two lemmas, the theorem below establishes convergence guarantees of Algorithm \ref{algo:zo} to minima of $F(x)$. A proof is provided in Appendix \ref{app:converge-thm}.

\begin{theorem}
    Under Assumptions \ref{asp:smooth} and \ref{asp:convex}, Algorithm \ref{algo:zo} with stepsize $\eta=1/(8(d+6)L_1)$ and smoothing parameter $\lambda^2\leq\sqrt{2}L_1/(d^{3/2}L_3)$ satisfies that
    \begin{align*}
        \bE\left[F(x_\tau) - \min_{x\in\bR^d} F(x)\right]
        & \leq
        \frac{8(d+6)L_1\norm{x_0 - x_F^*}^2}{T} \\
        & \qquad +
        \frac{L_3}{6} \lambda^4(d+4)^2 +
        \frac{L_2^2}{24L_1}\lambda^4(d+6)^3 +
        \frac{L_3^2}{1152L_1}\lambda^6 (d+10)^4,
    \end{align*}
    where $x_0\in\bR^d$ is the initialization, $x_F^*\in\arg\min_{x\in\bR^d} F(x)$, and the expectation is taken w.r.t. the randomness in all search directions and the selection of $x_\tau$.
    \label{thm:F-converge}
\end{theorem}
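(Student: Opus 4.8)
The plan is to specialize the one-step inequality \eqref{eq:one-step} to the anchor point $x=x_F^*$ and then systematically replace every occurrence of $f_\lambda$ by $F$, turning the recursion into a descent inequality on the suboptimality gap of $F$. Concretely, I would start from
\[
    \bE\norm{x_{t+1} - x_F^*}^2 \leq \bE\norm{x_t - x_F^*}^2 - 2\eta\big(f_\lambda(x_t) - f_\lambda(x_F^*)\big) + \eta^2\,\bE\norm{g_\lambda(x_t,u_t)}^2,
\]
and use the two estimates of Lemma \ref{lm:F-basic} to rewrite the middle and last terms entirely in terms of $F$.

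For the middle term, applying $\abs{f_\lambda - F}\leq \frac{L_3}{24}\lambda^4(d+4)^2$ at both $x_t$ and $x_F^*$ gives $f_\lambda(x_t) - f_\lambda(x_F^*) \geq F(x_t) - F(x_F^*) - \frac{L_3}{12}\lambda^4(d+4)^2$. For the last term, I would combine the second-moment bound of Lemma \ref{lm:F-basic} with the gradient-domination inequality $\norm{\nabla F(x)}^2 \leq 4L_1\big(F(x) - \min_{x\in\bR^d} F(x)\big)$. This inequality holds because Lemma \ref{lm:F-smooth} makes $F$ $(2L_1)$-smooth, and Assumption \ref{asp:convex} (via $\tr(\nabla^2 f)\geq 0$, so $F \geq \min_{x} f$) guarantees $F$ is bounded below; the standard descent-lemma argument then applies. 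This yields $\bE\norm{g_\lambda(x_t,u_t)}^2 \leq 8(d+6)L_1\,\bE[F(x_t)-F(x_F^*)] + \frac{L_2^2}{3}\lambda^4(d+6)^4 + \frac{L_3^2}{288}\lambda^6(d+10)^5$.

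Substituting both bounds, the coefficient of $\Delta_t := \bE[F(x_t) - F(x_F^*)]$ becomes $-2\eta + 8\eta^2(d+6)L_1$, and the stepsize $\eta = 1/(8(d+6)L_1)$ is chosen precisely so that $8\eta^2(d+6)L_1 = \eta$, leaving a clean $-\eta\Delta_t$. I would then telescope from $t=0$ to $T-1$, drop the nonnegative $\bE\norm{x_T - x_F^*}^2$, divide by $\eta T$, and identify $\frac1T\sum_{t} \Delta_t = \bE[F(x_\tau) - \min_x F(x)]$ by the uniform sampling of $\tau$. Using $1/\eta = 8(d+6)L_1$ produces the leading term and the constant substitution error $\frac{L_3}{6}\lambda^4(d+4)^2$; the two variance terms each pick up a factor $\eta$, giving $\frac{L_2^2}{24L_1}\lambda^4(d+6)^3$ and, after the elementary reduction $(d+10)^5/(d+6)\leq 2(d+10)^4$, the term $\frac{L_3^2}{1152 L_1}\lambda^6(d+10)^4$.

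I expect the analytically substantive work to be fully absorbed into the two preceding lemmas (in particular the $8$th-order Gaussian moment computation underlying the second-moment bound), so that the theorem itself is mostly bookkeeping. The only genuine subtlety is the gradient-domination step: since $F$ is nonconvex, one cannot invoke convexity and must instead lean on both the $(2L_1)$-smoothness from Lemma \ref{lm:F-smooth} and the lower-boundedness from Assumption \ref{asp:convex}. Beyond that, the care points are verifying the exact stepsize cancellation $-2\eta + 8\eta^2(d+6)L_1 = -\eta$ and the polynomial-degree reduction on the final error term.
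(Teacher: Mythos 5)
Your proposal is correct and follows essentially the same route as the paper's proof: the same one-step recursion anchored at $x_F^*$, the same replacement of $f_\lambda$ by $F$ at both points via Lemma \ref{lm:F-basic} (costing $\frac{L_3}{6}\eta\lambda^4(d+4)^2$), the same gradient-domination bound $\norm{\nabla F(x)}^2\leq 4L_1(F(x)-\min_{x\in\bR^d}F(x))$ derived from the $(2L_1)$-smoothness of Lemma \ref{lm:F-smooth} (which the paper records as Eq. \eqref{eq:app-self-bounding}, handling the nonconvexity of $F$ exactly as you flag), the same stepsize cancellation at $\eta=1/(8(d+6)L_1)$, and the same telescoping followed by the reduction $(d+10)^5/(d+6)\leq 2(d+10)^4$. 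All of your constants check out against the paper's.
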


Following the standard gradient-based method to minimize $F(x)$ via $\nabla F(x)$, third-order derivatives of $f(x)$ are required due to the computation of $\nabla \tr(\nabla^2 f(x))$.
Here, we prove that Algorithm \ref{algo:zo} that uses only zeroth-order information of $f(x)$ converges to minimizers of $F(x)$.
Although $F(x)$ is nonconvex, the difference between $F(x)$ and a convex function, $f_\lambda(x)$, can be made small, and thus convergence to global minima is still guaranteed.
The corollary below transfers convergence guarantee towards minima of $F(x)$ to convergence guarantees towards flat minima of $f(x)$. The proof of it can be found in Appendix \ref{app:converge-thm}.

\begin{corollary}[Iteration Complexity for Finding Flat Minima]
    Let Assumptions \ref{asp:smooth} and \ref{asp:convex} be satisfied. For $\epsilon>0$, Algorithm \ref{algo:zo} with stepsize $\eta=1/(8(d+6)L_1)$ satisfies that    
    \begin{align*}
        & \bE\left[f(x_\tau) - \min_{x\in\bR^d} f(x)\right] \leq \frac{3L_1^2\epsilon}{2L_2^2(d+6)^2} \left(1 + \frac{\epsilon}{L_1(d+6)}\right), \\
        & \bE\left[\tr(\nabla^2 f(x_\tau)) - \min_{x\in\cX^*} \tr(\nabla^2 f(x))\right] \leq \epsilon,
    \end{align*}
    when setting the smoothing parameter $\lambda$ and the number of iterations $T$ such that
    \begin{align*}
        & \lambda^2 = \min\left\{\frac{\sqrt{2}L_1}{d^{3/2} L_3}, \frac{12\sqrt{L_1\epsilon}}{L_3(d+10)^2}, \frac{3\epsilon}{4L_3(d+4)^2}, \frac{3L_1\epsilon}{L_2^2(d+6)^3}\right\}, \\
        & T\geq\frac{64(d+6)L_1\norm{x_0-x_F^*}^2}{\epsilon} \max\left\{\frac{d^{3/2} L_3}{\sqrt{2}L_1}, \frac{L_3(d+10)^2}{12\sqrt{L_1\epsilon}}, \frac{4L_3(d+4)^2}{3\epsilon}, \frac{L_2^2(d+6)^3}{3L_1\epsilon}\right\}.
    \end{align*}
    Recall $\cX^*=\arg\min_{x\in\bR^d} f(x)$. According to Definition \ref{def:approx-flat}, this means that $(\cO(\epsilon/d^2), \epsilon)$ approximate flat minima can be guaranteed with $T=\cO(d^4/\epsilon^2)$ and $\lambda=\cO(\epsilon^{1/2}/d^{3/2})$.
    \label{cor:main}
\end{corollary}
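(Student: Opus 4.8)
The plan is to start from Theorem~\ref{thm:F-converge}, writing $\Delta$ for the four-term right-hand side that bounds $\bE[F(x_\tau) - \min_{x\in\bR^d} F(x)]$, and to convert this into the two required guarantees by decomposing $F = f + (\lambda^2/2)\tr(\nabla^2 f)$ around a flat minimum. Fix $x^*\in\arg\min_{x\in\cX^*}\tr(\nabla^2 f(x))$, so that $F(x^*) = \min_{x\in\bR^d} f(x) + (\lambda^2/2)\min_{x\in\cX^*}\tr(\nabla^2 f(x))$ and in particular $\min_{x\in\bR^d} F(x) \le F(x^*)$. For the trace guarantee I would write $\frac{\lambda^2}{2}\big(\tr(\nabla^2 f(x_\tau)) - \min_{x\in\cX^*}\tr(\nabla^2 f(x))\big) = F(x_\tau) - f(x_\tau) - \frac{\lambda^2}{2}\min_{x\in\cX^*}\tr(\nabla^2 f(x))$, and then use $f(x_\tau) \ge \min_{x\in\bR^d} f(x)$ to bound this by $F(x_\tau) - F(x^*) \le F(x_\tau) - \min_{x\in\bR^d} F(x)$. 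Taking expectations and invoking Theorem~\ref{thm:F-converge} bounds the left-hand side by $\Delta$, so dividing by $\lambda^2/2$ gives $\bE[\tr(\nabla^2 f(x_\tau)) - \min_{x\in\cX^*}\tr(\nabla^2 f(x))] \le 2\Delta/\lambda^2$, which is at most $\epsilon$ as soon as $\Delta \le \lambda^2\epsilon/2$.

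For the function-value guarantee I would use the same decomposition in the opposite direction,
\[
f(x_\tau) - \min_{x\in\bR^d} f(x) = \big(F(x_\tau) - \min_{x\in\bR^d} F(x)\big) + \big(\min_{x\in\bR^d} F(x) - \min_{x\in\bR^d} f(x)\big) - \frac{\lambda^2}{2}\tr(\nabla^2 f(x_\tau)).
\]
The last term is nonnegative by Assumption~\ref{asp:convex} and is dropped, while $\min_{x\in\bR^d} F(x) - \min_{x\in\bR^d} f(x) \le (\lambda^2/2)\min_{x\in\cX^*}\tr(\nabla^2 f(x))$ by the choice of $x^*$. Bounding $\min_{x\in\cX^*}\tr(\nabla^2 f(x)) \le L_1 d \le L_1(d+6)$, valid because convexity together with $L_1$-smoothness confines every Hessian eigenvalue to $[0,L_1]$, yields $\bE[f(x_\tau) - \min_{x\in\bR^d} f(x)] \le \Delta + (\lambda^2/2)L_1(d+6)$. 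Substituting $\Delta \le \lambda^2\epsilon/2$ and $\lambda^2 \le 3L_1\epsilon/(L_2^2(d+6)^3)$ then reproduces the claimed bound $\frac{3L_1^2\epsilon}{2L_2^2(d+6)^2}\big(1 + \frac{\epsilon}{L_1(d+6)}\big)$.

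It remains to choose $\lambda$ and $T$ so that $\Delta \le \lambda^2\epsilon/2$. I would force each of the four terms of $\Delta$ to be at most $\lambda^2\epsilon/8$. Matching the three $\lambda$-dependent terms $\frac{L_3}{6}\lambda^4(d+4)^2$, $\frac{L_2^2}{24L_1}\lambda^4(d+6)^3$, and $\frac{L_3^2}{1152L_1}\lambda^6(d+10)^4$ against $\lambda^2\epsilon/8$ produces exactly the upper bounds $\frac{3\epsilon}{4L_3(d+4)^2}$, $\frac{3L_1\epsilon}{L_2^2(d+6)^3}$, and $\frac{12\sqrt{L_1\epsilon}}{L_3(d+10)^2}$ on $\lambda^2$; the remaining candidate $\sqrt{2}L_1/(d^{3/2}L_3)$ is the threshold from Lemma~\ref{lm:F-smooth} under which $F$ is $(2L_1)$-smooth and Theorem~\ref{thm:F-converge} applies, so $\lambda^2$ is set to the minimum of the four. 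Requiring the optimization term $\frac{8(d+6)L_1\norm{x_0-x_F^*}^2}{T} \le \lambda^2\epsilon/8$ then forces $T \ge \frac{64(d+6)L_1\norm{x_0-x_F^*}^2}{\lambda^2\epsilon}$; since $1/\lambda^2$ equals the maximum of the reciprocals of the four candidates, this is precisely the stated lower bound on $T$, and tracking which candidate dominates gives the asymptotics $T = \cO(d^4/\epsilon^2)$ and $\lambda = \cO(\epsilon^{1/2}/d^{3/2})$.

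I expect the main obstacle to be the trace guarantee rather than the function-value one: because it is obtained by dividing through by $\lambda^2/2$, the crude requirement $\Delta \le \epsilon$ is useless and one genuinely needs $\Delta = \cO(\lambda^2\epsilon)$. This is what forces every error term inherited from Theorem~\ref{thm:F-converge} to scale proportionally to $\lambda^2$ and dictates the tight, interlocking choices of $\lambda$ and $T$; by contrast, the two $F$-decompositions and the elementary eigenvalue bound $\min_{x\in\cX^*}\tr(\nabla^2 f(x)) \le L_1 d$ are routine once the comparison point $x^*$ is fixed.
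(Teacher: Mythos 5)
Your proposal is correct and takes essentially the same route as the paper's own proof: both convert Theorem \ref{thm:F-converge} into the trace guarantee via $\min_{x\in\bR^d} F(x) \le \min_{x\in\bR^d} f(x) + (\lambda^2/2)\min_{x\in\cX^*}\tr(\nabla^2 f(x))$ followed by division by $\lambda^2/2$, into the function-value guarantee via $f\le F$ together with $\tr(\nabla^2 f(x))\le L_1 d$, and then select $\lambda^2$ and $T$ by forcing each of the four error terms to be at most $\lambda^2\epsilon/8$, which reproduces the paper's exact thresholds and asymptotics. The only (immaterial) difference is that you anchor the comparison at an explicit flat minimizer $x^*\in\arg\min_{x\in\cX^*}\tr(\nabla^2 f(x))$, whereas the paper argues through the same chain of min-inequalities without fixing an attaining point.
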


\vspace{0.5em}

\begin{remark}
    Our convergence guarantees require that $f(x)$ is three times continuously differentiable and satisfies first-, second-, and third-order smoothness assumptions.
    In Appendix \ref{app:alt}, we explain how the first- and third-order smoothness assumptions can be relaxed, at the cost of slower convergence rates; see Table \ref{tab:asp} for a summary.
\end{remark}

\vspace{0.5em}

\begin{remark}
    We discuss here how the parameters $\lambda$ and $T$ are selected and compare with the standard analysis \citep{nesterov2017random}.
    Let $\bE[F(x_\tau) - \min_{x\in\bR^d}F(x)]\leq \text{Err}(\lambda, T)$, where $\text{Err}(\lambda, T) = \cO(d/T) + \cO(\lambda^4 d^3)$ is given by Theorem \ref{thm:F-converge}. In the proof of Corollary \ref{cor:main}, we show that $\bE[f(x_\tau) - \min_{x\in\bR^d} f(x)]\leq \text{Err}(\lambda, T) + \cO(\lambda^2d)$ and $\bE[\tr(\nabla^2 f(x_\tau)) - \min_{x\in\cX^*} \tr(\nabla^2 f(x))]\leq 2\text{ Err}(\lambda, T)/\lambda^2$.
    The guarantees on $\bE[f(x_\tau) - \min_{x\in\bR^d} f(x)]\leq \cO(d/T) + \cO(\lambda^2d)$ share the same dependence on the leading error terms as the standard analysis \citep{nesterov2017random}, but we additionally ensure $\bE[\tr(\nabla^2 f(x_\tau)) - \min_{x\in\cX^*} \tr(\nabla^2 f(x))]\leq \cO(d/(\lambda^2 T)) + \cO(\lambda^2d^3)$.
    We choose $\lambda^2=\cO(\epsilon/d^3)$ and $T=\cO(d/(\lambda^2 \epsilon))$ such that $\bE[\tr(\nabla^2 f(x_\tau)) - \min_{x\in\cX^*}\tr(\nabla^2 f(x))]\leq\epsilon$, which results in $\bE[f(x_\tau) - \min_{x\in\bR^d} f(x)]\leq \cO(\lambda^2(\epsilon+d))=\cO(\epsilon/d^2)$.
    Standard analysis \citep{nesterov2017random} set $T=\cO(d/\epsilon)$ and $\lambda^2=\cO(\epsilon/d)$ such that $\bE[f(x_\tau) - \min_{x\in\bR^d} f(x)]\leq \epsilon$.
\end{remark}

\vspace{0.5em}

\begin{remark}
    We focus on the convex setting in this work, but the analysis extends to cases where the objective is locally convex in a neighborhood. When initialized in this region, zeroth-order optimization identifies local minima with the smallest trace of Hessian among all local minimizers in the neighborhood.
    When $f(x)$ is generally nonconvex, the current definition of flat minima is not theoretically tractable without additional assumptions.
    We leave for future work a detailed study on the definition of computationally feasible flat minima and the assumptions required to understand the complexity of finding them in the nonconvex setting \citep{ahn2023learning}.
\end{remark}

\vspace{0.5em}

\begin{remark}
    Our proof framework can be extended to understand the complexity of first-order methods towards flat minima. For example, a gradient-based method, $x_{t+1}\gets x_t - \eta \nabla f(x_t+\lambda u_t)$ with $u_t\sim\cN(0,\rI_d)$, that uses a gradient evaluated at the perturbed point as the descent direction also minimizes the smoothed loss $f_\lambda(x)$ in the expectation. By upper bounding $\bE\norm{\nabla f(x+\lambda u)}^2$ with the term $\norm{\nabla F(x)}^2$, convergence guarantees on $F(x)$ and thus to flat minima can be established.
    The same framework provides new insights on how SAM can be analyzed as well.
    Our primary focus is on zeroth-order methods, and extensions to first-order methods are deferred to future.
    
    \label{rmk:first-order}
\end{remark}

\section{Experiments} \label{sec:exp}

\begin{figure}
    \centering
    \begin{tabular}{cc}
        \includegraphics[width=0.47\linewidth]{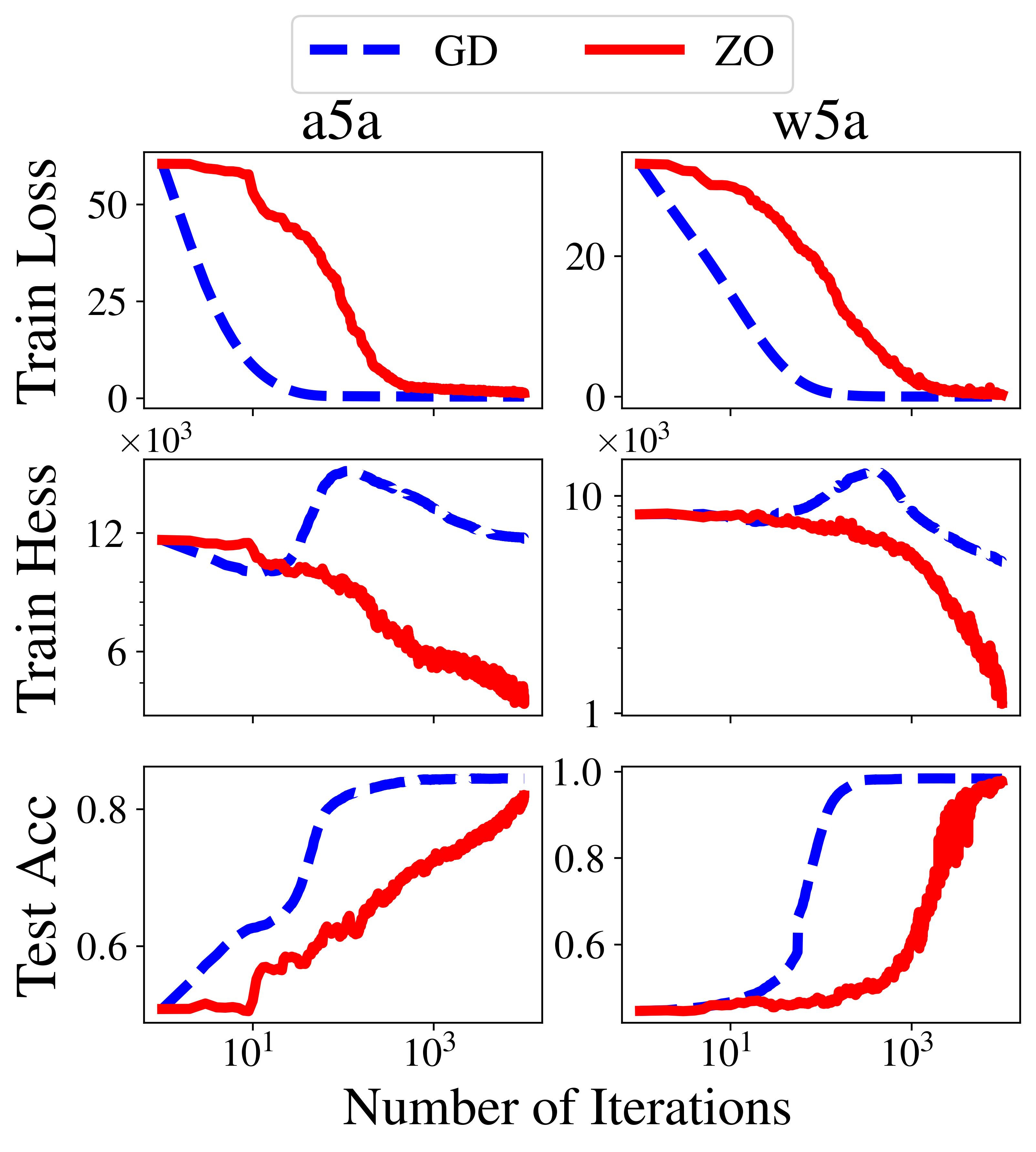}
        &
        \includegraphics[width=0.47\linewidth]{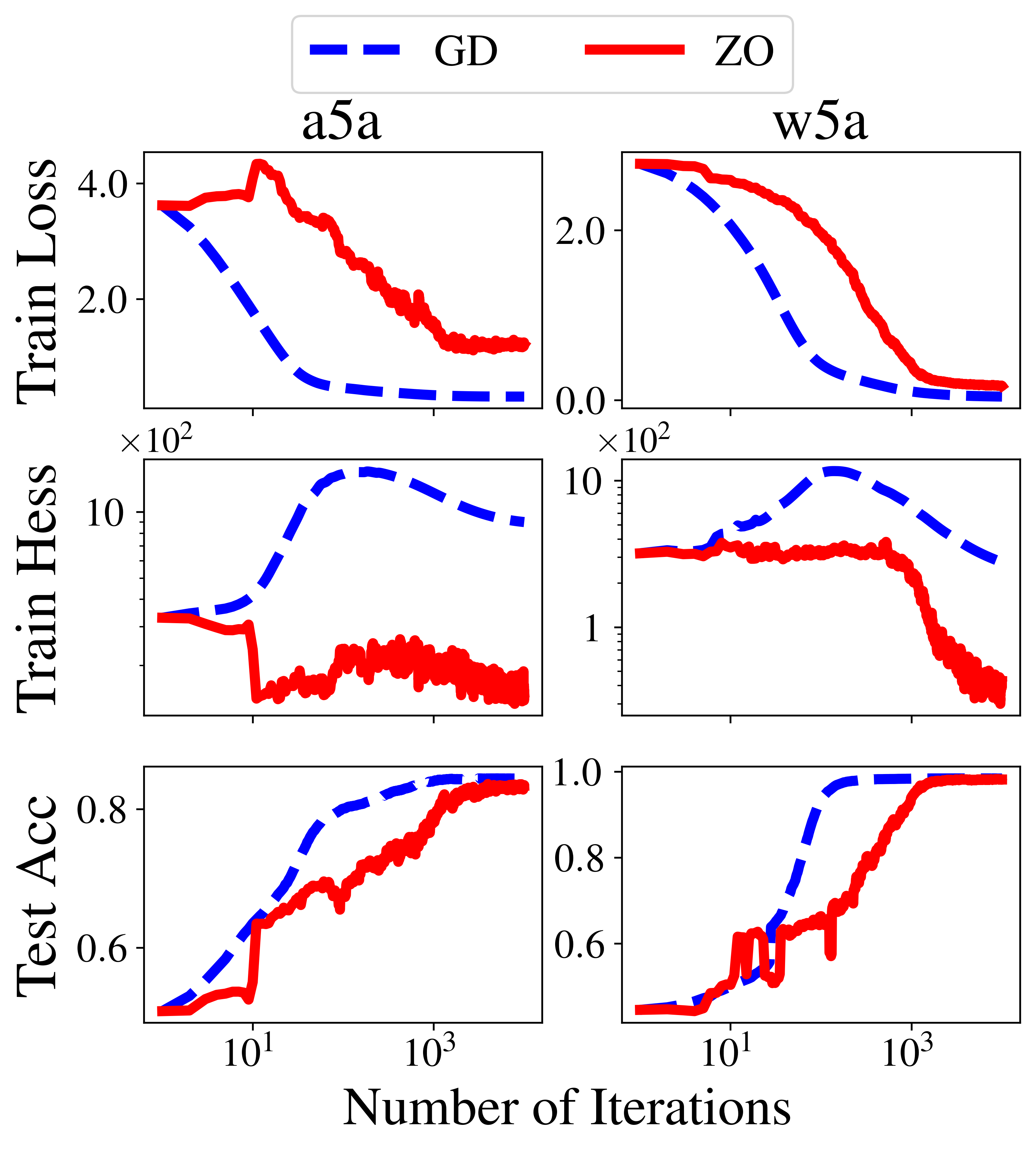}
        \\
        (a) SVMs.
        &
        (b) Logistic Regression.
    \end{tabular}
    \caption{Training loss, trace of Hessian on the training data (Train Hess in the plot), and test accuracy on (a) SVMs and (b) Logistic regression. Zeroth-order (ZO) optimization is slower than gradient descent (GD) for minimizing the loss and maximizing the accuracy, but there is a clear trend of decreasing trace of Hessian for ZO.}
    \label{fig:cvx}
\end{figure}

We provide empirical results on binary classification tasks with convex losses and language model fine-tuning tasks. Our code is available at \url{https://github.com/Liang137/FlatZero}.

\textbf{Binary Classification with SVMs and Logistic Regression.}
We start empirical evaluations using support vector machines (SVMs) and logistic regression.
Given a training dataset $\{a_i, b_i\}_{i=1}^N$ with feature vectors $a_i\in\bR^d$ and binary labels $b_i$, we consider an overparameterized regime where each $a_i\in\bR^d$ is mapped to $\phi(a_i)=Wa_i\in\bR^D$ via a random matrix $W\in\bR^{D\times d}$ with $W_{ij}\sim\cN(0,1)$ and $D>N>d$.
We use two standard binary classification benchmarks from the LIBSVM library \citep{chang2011libsvm}: a5a ($N=6,414$, $d=123$) and w5a ($N=9,888$, $d=300$) \citep{platt1998fast}, and set $D=10,000$ to overparameterize.
This ensures a higher-dimensional solution set with increased diversity, enabling a meaningful study of implicit regularization to investigate the solutions selected by the algorithm.
We consider SVMs with the squared hinge loss and $b_i\in\{-1,1\}$, that is,
\begin{equation*}
    \min_{x\in\bR^D} \, f(x) \;=\; \min_{x\in\bR^D} \frac{1}{N}\sum_{i=1}^N (\max\{0, 1-b_i\phi(a_i)^\top x\})^2.
\end{equation*}
For $\sigma(z)=1/(1+\exp(-z))$ and $b_i\in\{0,1\}$, logistic regression minimizes the loss
\begin{equation*}
    \min_{x\in\bR^D} \,f(x) \;=\; \min_{x\in\bR^D} \frac{1}{N}\sum_{i=1}^N \left(-b_i\log\sigma(\phi(a_i)^\top x) - (1-b_i)\log(1-\sigma(\phi(a_i)^\top x))\right).
\end{equation*}
These two models can be viewed as linear probing \citep{kumar2022finetuning} on a two‐layer neural network with a frozen first layer and an identity activation. In the SVMs case, the second layer uses an identity activation and is trained with the squared hinge loss, while logistic regression applies a sigmoid activation and minimizes the cross‐entropy loss.
Both SVMs and logistic regression have convex objective functions, and the trace of Hessian can be efficiently computed.
Figure \ref{fig:cvx} shows the optimization trajectories of gradient descent and zeroth-order optimization. In all cases, gradient descent and zeroth-order optimization achieve comparable training loss and test accuracy; however, zeroth-order optimization consistently reduces the trace of Hessian and converges to flatter solutions.
Detailed experimental setups and additional results can be found in Appendix \ref{app:exp-cvx}.

\begin{figure}
    \centering
    \begin{tabular}{cc}
        \includegraphics[width=0.47\linewidth]{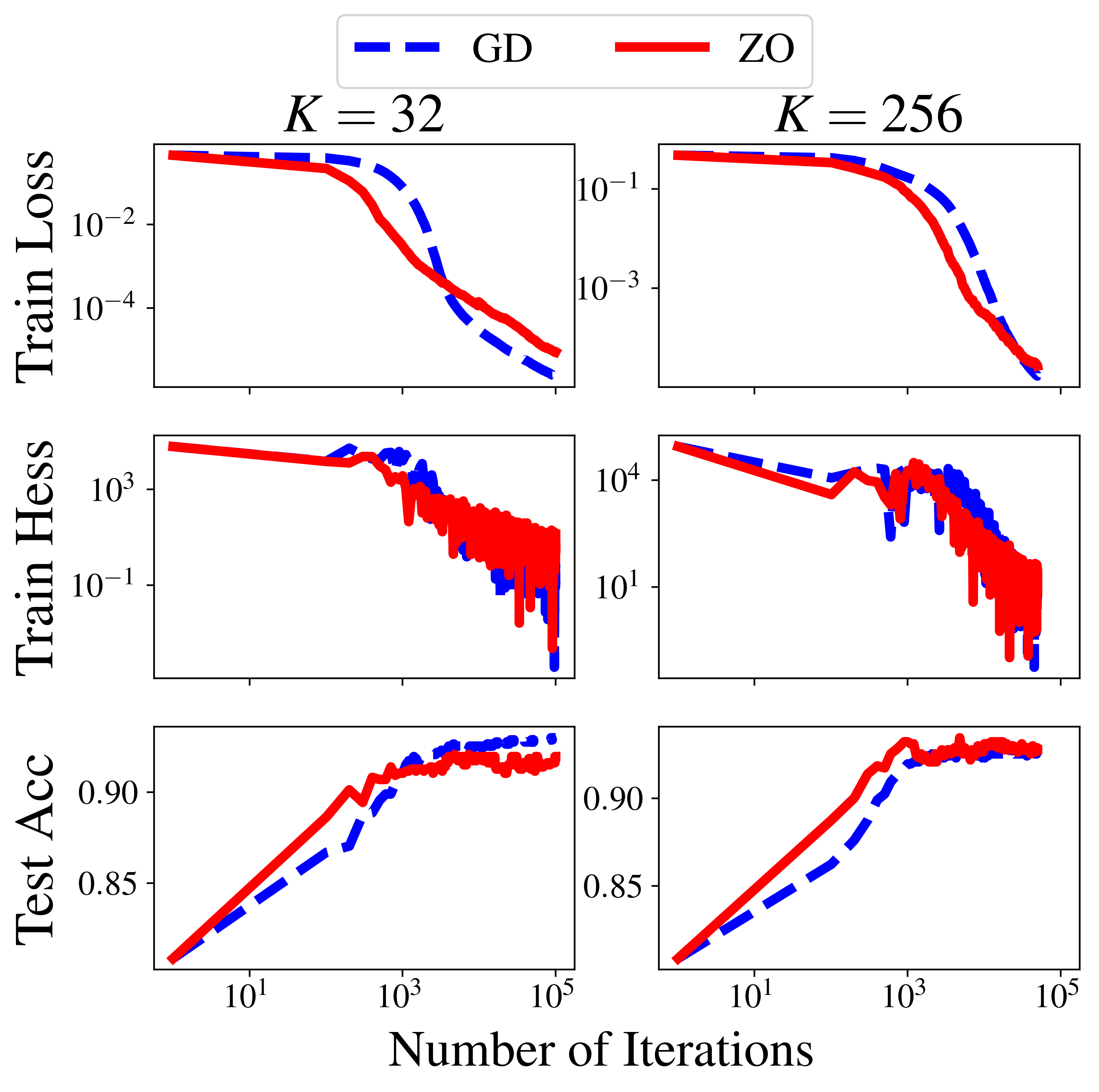}
        &
        \includegraphics[width=0.47\linewidth]{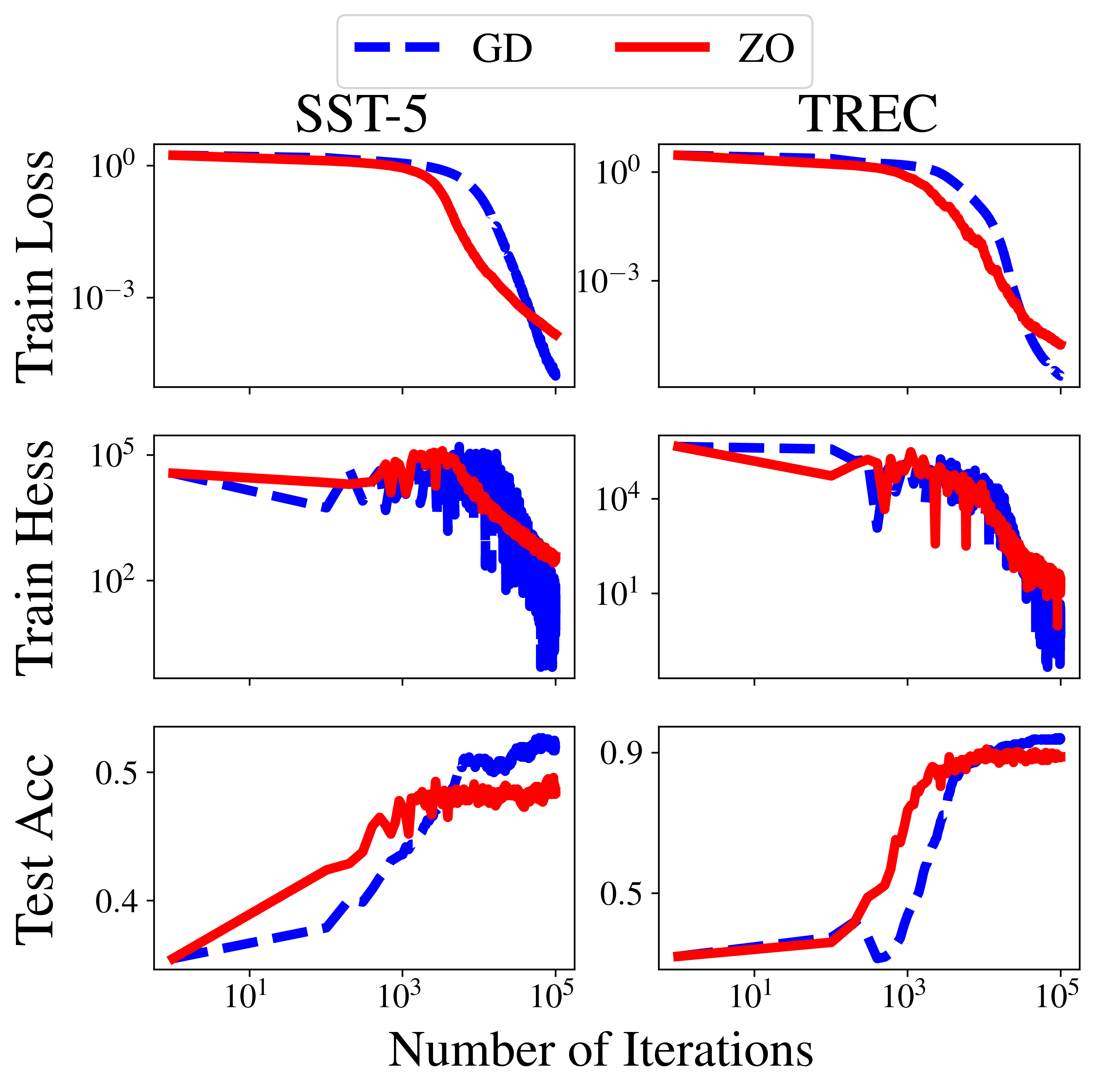}
        \\
        (a) SST-2.
        &
        (b) SST-5 and TREC.
    \end{tabular}
    \caption{Training loss, trace of Hessian on the training data (Train Hess in the plot), and test accuracy on (a) SST-2 with $K=32$ and $K=256$, and (b) SST-5 and TREC with $K=32$.
    Both gradient descent (GD) and zeroth-order (ZO) optimization reduce the trace of Hessian during training. In most cases, GD achieves lower training loss, smaller trace of Hessian, and higher test accuracy.}
    \label{fig:llm}
\end{figure}

\textbf{Fine-Tuning Language Models on Text Classification Tasks.}
We also evaluate the behaviors of zeroth-order optimization on nonconvex language model fine-tuning tasks.
Following \citet{malladi2023fine}, we consider few-shot fine-tuning on RoBERTa-Large (355M parameters) \citep{liu2019roberta} with $K=32$ and $K=256$ examples per class on three sentence classification datasets: SST-2 and SST-5 \citep{socher2013recursive} for sentiment classification, and TREC \citep{voorhees2000building} for topic classification.
All experiments are tested on a single NVIDIA H100 GPU with 80 GiB memory.
To mitigate the effect of mini-batch noise on reducing the trace of Hessian, we use full-batch training for both gradient descent and zeroth-order optimization, which is feasible in the few-shot setting.
As the exact computation of the trace of Hessian is intractable due to the large model size, we adopt the widely-used expected sharpness $(2/\delta^2)\abs{\bE_{u\sim\cN(0, \rI_d)}[f(x+\delta u)] - f(x)}$ as an approximation \citep{neyshabur2017exploring, zhu2019anisotropic, wang2022eliminating}, where $f(x)$ denotes the training loss evaluated at model weights $x\in\bR^d$. We set $\delta=10^{-4}$ and estimate the expectation by averaging over 100 samples.
The performance of gradient descent and zeroth-order optimization is presented in Figure \ref{fig:llm}. In this setting, both methods are observed to decrease the trace of Hessian. It is conjectured that the behavior in gradient descent results from the implicit regularization associated with large learning rates \citep{cohen2021gradient, arora2022understanding}. Meanwhile, the observed decrease in zeroth-order optimization matches our theoretical insights.
Detailed setups and additional results are deferred to Appendix \ref{app:exp-llm}.

In classical optimization theory \citep{nesterov2017random, duchi2015optimal}, the convergence rates of zeroth-order methods scale with the dimension $d$, limiting their applicability to problems with large $d$ such as language model fine-tuning.
Previous work \citep{malladi2023fine, zhang2024dpzero} explaining why zeroth-order methods still achieve reasonable performance on language model fine-tuning tasks has relaxed the dependence on $d$ to a term related to the trace of Hessian, $\tr(\nabla^2 f(x))$.
Assuming $\tr(\nabla^2 f(x))\ll d L_1$ when $f(x)$ is $L_1$-smooth, zeroth-order optimization achieves dimension-independent rates and remains effective even in high-dimensional settings.
Our experimental results show that the trace of Hessian decreases and attains values much smaller than the actual dimension, thereby supporting the assumption made in prior work to explain the empirical success of zeroth-order optimization.

\section{Conclusion} \label{sec:conclude}

Motivated by the observation that zeroth-order optimization with the two-point estimator (Algorithm~\ref{algo:zo}) inherently minimizes $\tr(\nabla^2 f(x))$, we initiate a formal study of this implicit regularization. Specifically, we analyze its convergence to flat minima, defined as the ones with the lowest trace of Hessian among all minimizers.
For convex and sufficiently smooth (Assumptions \ref{asp:smooth}) functions, we prove that Algorithm \ref{algo:zo} guarantees $(\cO(\epsilon/d^2), \epsilon)$-approximate flat minima (Definition \ref{def:approx-flat}) after $T=\cO(d^4/\epsilon^2)$ iterations.
This is the first work showing that zeroth-order optimization converges to flat minima.
Experiments on binary classification tasks using SVMs and logistic regression, as well as language model fine-tuning tasks on RoBERTa support our theoretical findings.

Theoretical and empirical performance of zeroth-order methods is often limited by the high variance in gradient estimation. A promising direction is to combine zeroth-order and first-order methods to leverage the strengths of both \citep{malladi2023fine, zhang2024revisiting, li2025addax}.
We only examine zeroth-order optimization using the standard two-point estimator. Exploring whether the convergence complexity can be further improved with possible modifications and additional algorithmic designs remains an interesting line of work.
The current theoretical results require convexity and higher-order smoothness assumptions of the function. Investigation into nonconvex functions with relaxed assumptions is left for future work.

\section*{Acknowledgements}
L.Z. gratefully acknowledges funding by the Max Planck ETH Center for Learning Systems.
B.L. is supported by Swiss National Science Foundation Project Funding No. 200021-207343.
This work does not relate to the current position of K.T. at Amazon. 
S.O. is supported in part by the National Science Foundation under grant no.~2112471, 2229876, and 2505865 supported in part by funds provided by the National Science Foundation, by the Department of Homeland Security, and by IBM. Any opinions, findings, and conclusions or recommendations expressed in this material are those of the author(s) and do not necessarily reflect the views of the National Science Foundation or its federal agency and industry partners.
M.M. is supported by the German Research Foundation.
N.H. is supported by ETH research grant funded through ETH Zurich Foundations and Swiss National Science Foundation Project Funding No. 200021-207343.

\bibliography{ref}
\bibliographystyle{plainnat}

\newpage
\appendix

\section{Missing Proofs from Section \ref{sec:reg}}
\label{app:balance}

\begin{proof}[Proof of Proposition \ref{prop:balance}]    
    Algorithm \ref{algo:zo} applied to Example \ref{exp:xy} gives
    \begin{equation*}
        \frac{x_{t+1} - x_t}{\eta} = - \frac{h(x_t + \lambda u_t) - h(x_t - \lambda u_t)}{2\lambda} u_t,
    \end{equation*}
    where $u_t\sim\cN(0, \rI_{2d})$ is the search direction. Let $u_t=(v_t^\top, w_t^\top)^\top$. The dynamics can be written as
    \begin{align*}
        & \frac{y_{t+1} - y_t}{\eta} = - \frac{\left((y_t+\lambda v_t)^\top(z_t+\lambda w_t) - 1\right)^2 - \left((y_t-\lambda v_t)^\top(z_t-\lambda w_t) - 1\right)^2}{4\lambda} v_t, \\
        & \frac{z_{t+1} - z_t}{\eta} = - \frac{\left((y_t+\lambda v_t)^\top(z_t+\lambda w_t) - 1\right)^2 - \left((y_t-\lambda v_t)^\top(z_t-\lambda w_t) - 1\right)^2}{4\lambda} w_t.
    \end{align*}
    Let $\Delta_t:=((y_t+\lambda v_t)^\top(z_t+\lambda w_t) - 1)^2 - ((y_t-\lambda v_t)^\top(z_t-\lambda w_t) - 1)^2$. As it holds that
    \begin{equation*}
        \Delta_t = 4\lambda\, (y_t^\top z_t - 1 + \lambda^2 v_t^\top w_t) (y_t^\top w_t + z_t^\top v_t),
    \end{equation*}
    the dynamics can be simplified as
    \begin{align*}
        \bE\left[\frac{y_{t+1} - y_t}{\eta} \middle| x_t\right]
        & =
        -\bE[(y_t^\top z_t - 1 + \lambda^2 v_t^\top w_t) (y_t^\top w_t + z_t^\top v_t) v_t \mid x_t] \\
        & =
        - (y_t^\top z_t - 1) \bE[v_tv_t^\top] z_t - \lambda^2 \bE[v_tv_t^\top w_t w_t^\top] y_t \\
        & =
        - (y_t^\top z_t - 1) z_t - \lambda^2 y_t,
    \end{align*}
    and the same reasoning applies to
    \begin{align*}
        \bE\left[\frac{z_{t+1} - z_t}{\eta} \middle| x_t\right]
        & =
        -\bE[(y_t^\top z_t - 1 + \lambda^2 v_t^\top w_t) (y_t^\top w_t + z_t^\top v_t) w_t \mid x_t] \\
        & =
        - (y_t^\top z_t - 1) y_t - \lambda^2 z_t.
    \end{align*}
    Here, we use that $w_t, v_t$ are independent, and that $\bE[w_tw_t^\top]=\bE[v_tv_t^\top]=\rI_d$. For $B_t=(\norm{y_t}^2 - \norm{z_t}^2)/2$, we then have that
    \begin{equation*}
        \frac{B_{t+1} - B_t}{\eta} = 
        y_t^\top \left(\frac{y_{t+1} - y_t}{\eta}\right) - z_t^\top \left(\frac{z_{t+1} - z_t}{\eta}\right) + \frac{\eta}{2}\left(\frac{y_{t+1} - y_t}{\eta}\right)^2 - \frac{\eta}{2}\left(\frac{z_{t+1} - z_t}{\eta}\right)^2.
    \end{equation*}
    Since it holds that
    \begin{align*}
        y_t^\top \bE\left[\frac{y_{t+1} - y_t}{\eta} \middle| x_t\right] - z_t^\top \bE\left[\frac{z_{t+1} - z_t}{\eta} \middle| x_t\right]
        & =
        \lambda^2 (\norm{z_t}^2 - \norm{y_t}^2) \\
        & =
        -2\lambda^2 B_t,
    \end{align*}
    we conclude that
    \begin{align*}
        \bE\left[\frac{B_{t+1} - B_t}{\eta} \middle| x_t\right] = -2\lambda^2 B_t + \cO(\eta).
    \end{align*}
    Therefore, by taking the limit $\eta\to0$, we have that $d\,\bE[B_t]/d\,t=-2\lambda^2\, \bE[B_t]$.
\end{proof}

\section{Missing Proofs from Section \ref{sec:converge}}

We use Isserlis' theorem \citep{isserlis1918formula, wick1950evaluation} frequently in this section and restate it below for clarity.

\begin{fact}[Isserlis' theorem]
    Let $u\sim\cN(0, \rI_d)$ be a standard Gaussian random vector in $\bR^d$. The $i$-th coordinate of $u$ is denoted by $u_i$. For $i_j\in[d]$ with $j\in[N]$, it holds that
    \begin{equation*}
        \bE\left[\prod_{j=1}^N u_{i_j}\right] =
        \sum_{p\in P_N^2} \prod_{(k,l)\in p} \delta_{i_ki_l},
    \end{equation*}
    where $P_N^2$ is the set of all distinct ways of partitioning $[N]$ into pairs, and $\delta_{i_ki_l}$ is the Kronecker delta, equal to 1 if $i_k=i_l$, and 0 otherwise.
    \label{fct:isserlis}
\end{fact}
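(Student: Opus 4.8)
The plan is to prove the identity by induction on $N$, using Gaussian integration by parts (Stein's identity) to lower the order of the moment by two at each step. First I would dispose of the case where $N$ is odd: since $u$ and $-u$ have the same distribution, the left-hand side $\bE[\prod_{j=1}^N u_{i_j}]$ is an odd moment and hence equals its own negative, so it vanishes; meanwhile the right-hand side is an empty sum, because a set of odd cardinality admits no partition into pairs and thus $P_N^2=\emptyset$. This reduces the problem to even $N$.

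For even $N$, the base case $N=2$ is exactly the defining property $\bE[uu^\top]=\rI_d$, which gives $\bE[u_{i_1}u_{i_2}]=\delta_{i_1i_2}$, matching the unique pairing in $P_2^2$. For the inductive step I would single out the first index and apply the univariate Stein identity coordinate-wise: because the standard Gaussian density satisfies $(d/ds)\,e^{-s^2/2}=-s\,e^{-s^2/2}$, integration by parts yields $\bE[u_i\,\phi(u)]=\bE[\partial_i\phi(u)]$ for any smooth $\phi$ of polynomial growth. Taking $\phi(u)=\prod_{j=2}^N u_{i_j}$ and differentiating the monomial gives
\begin{equation*}
    \bE\left[\prod_{j=1}^N u_{i_j}\right]
    = \bE\left[\partial_{i_1}\prod_{j=2}^N u_{i_j}\right]
    = \sum_{l=2}^N \delta_{i_1 i_l}\, \bE\left[\prod_{j\in[N]\setminus\{1,l\}} u_{i_j}\right].
\end{equation*}

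The final step is the combinatorial bookkeeping. By the inductive hypothesis each remaining expectation $\bE[\prod_{j\in[N]\setminus\{1,l\}} u_{i_j}]$ equals the sum over perfect matchings of the $(N-2)$-element index set $[N]\setminus\{1,l\}$. Multiplying by $\delta_{i_1 i_l}$ corresponds to adjoining the pair $(1,l)$ to each such matching, producing exactly those elements of $P_N^2$ in which $1$ is matched with $l$. Since every partition of $[N]$ into pairs matches $1$ with a unique $l\in\{2,\dots,N\}$, summing over $l$ enumerates every element of $P_N^2$ once and only once, and the identity follows.

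I expect the main obstacle to be the careful combinatorial accounting in the last step: verifying that the map sending $l$ together with a matching $p'$ of $[N]\setminus\{1,l\}$ to the augmented matching $p=p'\cup\{(1,l)\}$ is a bijection onto $P_N^2$, and that the Kronecker factors line up so that $\delta_{i_1 i_l}\prod_{(k,m)\in p'}\delta_{i_k i_m}=\prod_{(k,m)\in p}\delta_{i_k i_m}$. A secondary technical point is justifying the integration-by-parts step (vanishing boundary terms, dominated convergence) for polynomially growing integrands against the Gaussian density, which is routine but should be stated. As an alternative route I would note that the result also follows by differentiating the moment generating function $\bE[e^{t^\top u}]=\exp(\tfrac12\norm{t}^2)$ a total of $N$ times and evaluating at $t=0$, where the same pairing structure emerges from the Taylor expansion of the exponential of a quadratic; the inductive Stein argument is preferable here because it keeps the index bookkeeping fully explicit.
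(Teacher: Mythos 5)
Your proposal is correct, but note that the paper does not prove this statement at all: it is stated as a Fact, restated from the classical literature (Isserlis, Wick) with citations, and accompanied only by the worked $N=4$ instance and the count that $|P_N^2|=(N-1)!!$ for even $N$. Your Stein-identity induction is therefore a self-contained proof where the paper offers none, and it checks out in every step: the odd case follows from the symmetry $u\overset{d}{=}-u$ together with $P_N^2=\emptyset$; the base case is exactly $\bE[uu^\top]=\rI_d$; the product rule correctly yields $\partial_{i_1}\prod_{j=2}^N u_{i_j}=\sum_{l=2}^N \delta_{i_1 i_l}\prod_{j\in[N]\setminus\{1,l\}}u_{i_j}$ even when index values repeat, since differentiation is by factor position rather than by index value; and the map sending $(l,p')$ to $p'\cup\{(1,l)\}$ is a genuine bijection onto $P_N^2$ because each perfect matching of $[N]$ pairs the element $1$ with exactly one $l\in\{2,\dots,N\}$. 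The integration-by-parts justification is routine for polynomial integrands against the Gaussian density, as you say. As a sanity check, your recursion also reproduces the paper's cardinality remark, since it gives $|P_N^2|=(N-1)\cdot|P_{N-2}^2|=(N-1)!!$. Your alternative via differentiating the moment generating function $\bE[e^{t^\top u}]=\exp(\norm{t}^2/2)$ is essentially the classical derivation in the cited references; the inductive route you prefer has the advantage of making the pairing combinatorics explicit, which is precisely the bookkeeping the paper later relies on when evaluating the 6th- and 8th-order moments in Lemma \ref{lm:second-moment}.
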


For example, when $N=4$, there are in total $\binom{4}{2}\binom{2}{2}/2!=3$ distinct ways, and we have that
\begin{equation*}
    P_4^2=\{\{(1,2), (3,4)\}, \{(1,3), (2,4)\}, \{(1,4),(2,3)\}\}.
\end{equation*}
Applying Isserlis' theorem gives that
\begin{equation*}
    \bE\left[\prod_{j=1}^4 u_{i_j}\right] =
    \delta_{i_1i_2}\delta_{i_3i_4} +
    \delta_{i_1i_3}\delta_{i_2i_4} +
    \delta_{i_1i_4}\delta_{i_2i_3}.
\end{equation*}
In general, there is no such partitions and the expectation is 0 when $N$ is odd. For the even case, the number of distinct partitions is
\begin{equation*}
    \frac{\binom{N}{2}\binom{N-2}{2}\binom{N-4}{2}\cdots\binom{4}{2}\binom{2}{2}}{(N/2)!} =
    (N-1)(N-3)\cdots3\cdot1,
\end{equation*}
which results in $(N-1)!!$ terms in the summation.

\subsection{Supporting Lemmas}

Let $f(x)$ be three times continuously differentiable and $u\sim\cN(0,\rI_d)$ be a standard Gaussian random vector in $\bR^d$. To simplify notation in this section, we denote
\begin{align*}
    & G_0(x, u) = \left(u^\top \nabla f(x)\right) u, \\
    & H_0(x, u) = \left(u^\top \nabla (u^\top \nabla^2 f(x) u)\right) u.
\end{align*}
Since $\bE[uu^\top] = \rI_d$, we have that
\begin{align*}
    \bE[u^\top \nabla f(x) u]
    & =
    \bE[uu^\top] \nabla f(x) \\
    & =
    \nabla f(x).
\end{align*}
This shows that the directional derivative $G_0(x, u)$ when taking $\lambda\to0$ in the two-point estimator (see Eq. \eqref{eq:two-points}) is an unbiased estimator of $\nabla f(x)$.
For $H_0(x,u)$, it holds that
\begin{align*}
    H_0(x,u)
    & =
    u \sum_{k=1}^d u_k \frac{\partial}{\partial x_k} (u^\top \nabla^2 f(x) u) \\
    & =
    u \sum_{k=1}^d u_k \frac{\partial}{\partial x_k} \left(\sum_{i=1}^d\sum_{j=1}^d \frac{\partial^2 f(x)}{\partial x_i\partial x_j} u_iu_j\right) \\
    & =
    u \sum_{i=1}^d\sum_{j=1}^d\sum_{k=1}^d \frac{\partial^3 f(x)}{\partial x_i\partial x_j\partial x_k} u_iu_ju_k.
\end{align*}
Let $h_l$ be the $l$-th coordinate of $H_0(x,u)$ for $l\in[d]$. Applying Isserlis' theorem, we have that
\begin{align*}
    \bE[h_l]
    & =
    \sum_{i=1}^d\sum_{j=1}^d\sum_{k=1}^d \frac{\partial^3 f(x)}{\partial x_i\partial x_j\partial x_k} \bE[u_iu_ju_ku_l] \\
    & =
    \sum_{i=1}^d\sum_{j=1}^d\sum_{k=1}^d \frac{\partial^3 f(x)}{\partial x_i\partial x_j\partial x_k} (\delta_{ij}\delta_{kl} + \delta_{ik}\delta_{jl} + \delta_{il}\delta_{jk}) \\
    & =
    3 \frac{\partial}{\partial x_l} \left(\sum_{i=1}^d\frac{\partial^2 f(x)}{\partial x_i^2}\right).
\end{align*}
This means that $\bE[H_0(x, u)]=3 \nabla (\tr(\nabla^2 f(x)))$, and that $G_0(x,u) + (\lambda^2/6)\,H_0(x,u)$ is an unbiased gradient estimator for $F(x)$ defined in Eq. \eqref{eq:reg-loss}.

\begin{lemma}
    Let $f(x)$ be three times continuously differentiable and $u\sim\cN(0,\rI_d)$ be a standard Gaussian random vector in $\bR^d$ independent of $x\in\bR^d$. It holds that
    \begin{align*}
        & (a) \; \bE\norm{G_0(x,u)}^2 = (d+2)\norm{\nabla f(x)}^2, \\
        & (b) \; \bE[G_0(x,u)^\top H_0(x,u)] = 3(d+4)\, \nabla f(x) ^\top \nabla \tr(\nabla^2 f(x)), \\
        & (c) \; \bE\norm{H_0(x,u)}^2 = 9(d+6) \norm{\nabla \tr(\nabla^2 f(x))}^2 + 6(d+6)\sum_{i=1}^d \sum_{j=1}^d \sum_{k=1}^d \left(\frac{\partial^3 f(x)}{\partial x_i\partial x_j\partial x_k}\right)^2.
    \end{align*}
    
    \label{lm:second-moment}
\end{lemma}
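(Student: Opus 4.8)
The plan is to prove all three identities by expanding the relevant inner products coordinate-wise, reducing each to a sum of Gaussian moments of $u$, and evaluating those moments with Isserlis' theorem (Fact \ref{fct:isserlis}). Throughout I would write $g_i := \partial f(x)/\partial x_i$ and $T_{ijk} := \partial^3 f(x)/\partial x_i\partial x_j\partial x_k$, using two structural facts that make the sums collapse: $T$ is fully symmetric in its indices, and $\sum_{j} T_{ijj} = [\nabla\tr(\nabla^2 f(x))]_i$, which is just $\partial_i\sum_j \partial^2 f/\partial x_j^2$.

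For part $(a)$, since $G_0(x,u) = (\sum_i g_i u_i)\,u$, I would write $\norm{G_0(x,u)}^2 = (\sum_{i,j} g_i g_j u_i u_j)(\sum_l u_l^2)$ and take expectations, which needs only the fourth moment $\bE[u_i u_j u_l^2] = \delta_{ij} + 2\delta_{il}\delta_{jl}$; summing gives $d\norm{\nabla f(x)}^2 + 2\norm{\nabla f(x)}^2 = (d+2)\norm{\nabla f(x)}^2$. Part $(b)$ is one order higher: expanding $G_0(x,u)^\top H_0(x,u) = (\sum_i g_i u_i)(\sum_{a,b,c} T_{abc} u_a u_b u_c)(\sum_l u_l^2)$ reduces it to the sixth moment $\bE[u_i u_a u_b u_c u_l^2]$, which Isserlis expands into $5!!=15$ pairings of $\{i,a,b,c,l,l\}$. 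I would organize these by whether the two copies of $l$ pair with each other (a single configuration that frees a factor $d$ and leaves $3$ pairings of $\{i,a,b,c\}$) or separately (the remaining $12$ pairings). After applying symmetry of $T$ and the identity $\sum_j T_{ijj} = [\nabla\tr(\nabla^2 f)]_i$, every surviving pairing collapses to $\nabla f(x)^\top \nabla\tr(\nabla^2 f(x))$, and counting gives $3d + 12 = 3(d+4)$.

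Part $(c)$ is the crux and the main obstacle, because $\norm{H_0(x,u)}^2 = (\sum_l u_l^2)(\sum_{i,j,k} T_{ijk} u_i u_j u_k)(\sum_{a,b,c} T_{abc} u_a u_b u_c)$ is an eighth-order polynomial in $u$, so Isserlis produces $7!! = 105$ pairings of the index multiset $\{l,l,i,j,k,a,b,c\}$. The difficulty is purely one of bookkeeping: organizing the $105$ terms so that full symmetry of $T$ collapses them into just two invariants, $\norm{\nabla\tr(\nabla^2 f(x))}^2$ and $\sum_{i,j,k} T_{ijk}^2$. I would split on the behavior of the two repeated indices $l$. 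If the two $l$'s pair with each other, a factor $d$ is freed and the remaining sixth moment over $\{i,j,k,a,b,c\}$ has $5!!=15$ pairings; since each cubic group has three (an odd number of) indices, every such pairing must cross between the two cubic factors, yielding $6$ "fully cross" pairings worth $\sum_{i,j,k}T_{ijk}^2$ each and $9$ "single-cross" pairings worth $\norm{\nabla\tr(\nabla^2 f)}^2$ each, for a contribution $d\bigl(9\norm{\nabla\tr(\nabla^2 f)}^2 + 6\sum_{i,j,k} T_{ijk}^2\bigr)$. Otherwise the two $l$'s attach to two of the six cubic indices, and I would further split into both attaching to the first cubic, both to the second, or one to each, with pairing counts $18$, $18$, and $54$ (totaling the remaining $90$); the first two cases contribute only $\norm{\nabla\tr(\nabla^2 f)}^2$-type terms while the mixed case contributes both invariants, giving $54\norm{\nabla\tr(\nabla^2 f)}^2 + 36\sum_{i,j,k}T_{ijk}^2$ in all. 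Summing the two main cases yields $(9d+54)\norm{\nabla\tr(\nabla^2 f)}^2 + (6d+36)\sum_{i,j,k}T_{ijk}^2 = 9(d+6)\norm{\nabla\tr(\nabla^2 f)}^2 + 6(d+6)\sum_{i,j,k}T_{ijk}^2$, as claimed. The only genuine risk is miscounting among the $105$ pairings, which I would guard against by checking that the case counts sum to $105$ (and, within the first main case, to $15$) at every stage.
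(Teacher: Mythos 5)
Your proposal is correct and follows essentially the same route as the paper's proof: coordinate-wise expansion, Isserlis' theorem, and a case analysis on how the repeated index pairs, with all your pairing counts ($15=3+12$ for part $(b)$; $105=15+90$ with $15=9+6$ and $90=36+54$ for part $(c)$) matching the paper's bookkeeping exactly. Your grouping of the $90$ non-$ll$ pairings as $18+18+54$ is a trivially reshuffled version of the paper's $36+(18+36)$ split and collapses to the same two invariants with the same coefficients.
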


\begin{proof}    
    For $(a)$, we have that
    \begin{align*}
        \bE\norm*{u^\top \nabla f(x) u}^2
        & =
        \bE\left[\left(\sum_{i=1}^d u_i \frac{\partial f(x)}{\partial x_i}\right)^2 \left(\sum_{i=1}^d u_i^2\right)\right] \\
        & =
        \sum_{i=1}^d\sum_{j=1}^d\sum_{k=1}^d \frac{\partial f(x)}{\partial x_i} \frac{\partial f(x)}{\partial x_j} \, \bE[u_iu_ju_k^2] \\
        & =
        \sum_{i=1}^d\sum_{j=1}^d\sum_{k=1}^d \frac{\partial f(x)}{\partial x_i} \frac{\partial f(x)}{\partial x_j} (\delta_{ij} + 2\delta_{ik}\delta_{jk}) \\
        & =
        (d + 2) \sum_{i=1}^d \left(\frac{\partial f(x)}{\partial x_i}\right)^2 \\
        & =
        (d+2)\norm{\nabla f(x)}^2.
    \end{align*}
    
    Similarly for $(b)$, we have that
    \begin{align*}
        \bE[G_0(x,u)^\top H_0(x,u)]
        & =
        \bE\left[(u^\top\nabla f(x))(u^\top \nabla (u^\top \nabla^2 f(x) u))\norm{u}^2\right] \\
        & =
        \bE\left[\left(\sum_{i=1}^d u_i\frac{\partial f(x)}{\partial x_i}\right)\left(\sum_{i=1}^d\sum_{j=1}^d\sum_{k=1}^d \frac{\partial^3 f(x)}{\partial x_i\partial x_j\partial x_k} u_iu_ju_k\right)\left(\sum_{i=1}^d u_i^2\right)\right] \\
        & =
        \sum_{i=1}^d \sum_{j=1}^d \sum_{k=1}^d \sum_{m=1}^d \sum_{n=1}^d \frac{\partial^3 f(x)}{\partial x_i \partial x_j \partial x_k}\frac{\partial f(x)}{\partial x_m} \, \bE[u_i u_j u_k u_m u_n^2].
    \end{align*}
    By Isserlis' theorem (see Fact \ref{fct:isserlis}), when $N=6$ to compute $\bE[u_i u_j u_k u_m u_n^2]$, there are in total $5\times3=15$ terms in the summation, and the problem reduces to how we partition the set $\{i,j,k,m,n,n\}$ into three pairs. Let us consider the following two cases.

    $(1)$ We pair $n$ with $n$ and partition $\{i,j,k,m\}$ into two pairs. This gives 3 terms in the summation.
    \begin{equation*}
        \delta_{ij}\delta_{km} +
        \delta_{ik}\delta_{jm} +
        \delta_{im}\delta_{jk}
        := A_1.
    \end{equation*}
    Here, we use $\delta_{nn}=1$ and let $A_1$ denote the summation of terms considered in case $(1)$, and then
    \begin{equation*}
        \sum_{i=1}^d \sum_{j=1}^d \sum_{k=1}^d \sum_{m=1}^d \sum_{n=1}^d \frac{\partial^3 f(x)}{\partial x_i \partial x_j \partial x_k}\frac{\partial f(x)}{\partial x_m} \, A_1 =
        3d\sum_{i=1}^d\sum_{k=1}^d \frac{\partial^3 f(x)}{\partial x_i^2\partial x_k}\frac{\partial f(x)}{\partial x_k}.
    \end{equation*}

    $(2)$ We select two ordered indices in $\{i,j,k,m\}$, pair the first one with the first $n$, and pair the second one with the second $n$. The remaining two indices form the third pair. This gives $4\times3=12$ terms in the summation.
    To further simplify notation, we divide case $(2)$ into two subgroups, each with 6 terms, and denote the summation of considered terms by $A_{2-1}$, $A_{2-2}$, respectively.
    
    In $A_{2-1}$, we consider the case where both selected indices come from $\{i,j,k\}$ involved in third-order derivatives. It contains 6 terms and thus
    \begin{equation*}
        \sum_{i=1}^d \sum_{j=1}^d \sum_{k=1}^d \sum_{m=1}^d \sum_{n=1}^d \frac{\partial^3 f(x)}{\partial x_i \partial x_j \partial x_k}\frac{\partial f(x)}{\partial x_m} \, A_{2-1} =
        6\sum_{i=1}^d\sum_{k=1}^d \frac{\partial^3 f(x)}{\partial x_i^2\partial x_k}\frac{\partial f(x)}{\partial x_k}.
    \end{equation*}
    In $A_{2-2}$, we consider the case where one index in the selected two indices is $m$. This also gives 6 terms, and we have that
    \begin{equation*}
        \sum_{i=1}^d \sum_{j=1}^d \sum_{k=1}^d \sum_{m=1}^d \sum_{n=1}^d \frac{\partial^3 f(x)}{\partial x_i \partial x_j \partial x_k}\frac{\partial f(x)}{\partial x_m} \, A_{2-2} =
        6\sum_{i=1}^d\sum_{k=1}^d \frac{\partial^3 f(x)}{\partial x_i^2\partial x_k}\frac{\partial f(x)}{\partial x_k}.
    \end{equation*}
    Considering the above cases $(1)$ and $(2)$, we conclude the proof of $(b)$.
    \begin{align*}
        \bE[G_0(x,u)^\top H_0(x,u)]
        & =
        \sum_{i=1}^d \sum_{j=1}^d \sum_{k=1}^d \sum_{m=1}^d \sum_{n=1}^d \frac{\partial^3 f(x)}{\partial x_i \partial x_j \partial x_k}\frac{\partial f(x)}{\partial x_m} \, \bE[u_i u_j u_k u_m u_n^2] \\
        & =
        \sum_{i=1}^d \sum_{j=1}^d \sum_{k=1}^d \sum_{m=1}^d \sum_{n=1}^d \frac{\partial^3 f(x)}{\partial x_i \partial x_j \partial x_k}\frac{\partial f(x)}{\partial x_m} (A_1 + A_{2-1} + A_{2-2}) \\
        & =
        3(d+4) \sum_{i=1}^d\sum_{k=1}^d \frac{\partial^3 f(x)}{\partial x_i^2\partial x_k}\frac{\partial f(x)}{\partial x_k} \\
        & =
        3(d+4)\, \nabla f(x) ^\top \nabla \tr(\nabla^2 f(x)).
    \end{align*}

    Finally for $(c)$, we have that
    \begin{align*}
        \bE\norm{H_0(x,u)}^2
        & =
        \bE\left[(u^\top \nabla (u^\top \nabla^2 f(x) u))^2 \norm{u}^2\right] \\
        & =
        \bE\left[\left(\sum_{i=1}^d\sum_{j=1}^d\sum_{k=1}^d \frac{\partial^3 f(x)}{\partial x_i\partial x_j\partial x_k} u_iu_ju_k\right)^2\left(\sum_{i=1}^d u_i^2\right)\right] \\
        & =
        \sum_{i_1,j_1,k_1=1}^d \sum_{i_2,j_2,k_2=1}^d \sum_{l=1}^d
        \frac{\partial^3 f(x)}{\partial x_{i_1} \partial x_{j_1} \partial x_{k_1}} \frac{\partial^3 f(x)}{\partial x_{i_2} \partial x_{j_2} \partial x_{k_2}}
        \bE[u_{i_1} u_{j_1} u_{k_1} u_{i_2} u_{j_2} u_{k_2} u_l^2],
    \end{align*}
    where $\sum_{i_1,j_1,k_1=1}^d$ means $\sum_{i_1=1}^d \sum_{j_1=1}^d \sum_{k_1=1}^d$.
    By Isserlis' theorem (see Fact \ref{fct:isserlis}), we need to consider all distinct ways of partitioning the set $\{i_1,j_1,k_1,i_2,j_2,k_2,l,l\}$ into four pairs, which has $7\times5\times3=105$ terms in the summation. We consider the following two cases.

    $(1)$ We pair $l$ with $l$ and partition $\{i_1,j_1,k_1,i_2,j_2,k_2\}$ into three pairs. This gives $5\times3=15$ terms in the summation. We further divide case $(1)$ into two subgroups and denote the summation of considered terms by $A_{1-1}$, $A_{1-2}$, respectively.
    
    $A_{1-1}$ has $3\times3=9$ terms, where the first pair is formed with one index from $\{i_1,j_1,k_1\}$ and the other index from $\{i_2,j_2,k_2\}$, the remaining indices in $\{i_1,j_1,k_1\}$ yield the second pair, and the remaining indices in $\{i_2,j_2,k_2\}$ give the third pair. By symmetry of the summation, we have that
    \begin{equation*}
        \sum_{i_1,j_1,k_1=1}^d \sum_{i_2,j_2,k_2=1}^d \sum_{l=1}^d
        \frac{\partial^3 f(x)}{\partial x_{i_1} \partial x_{j_1} \partial x_{k_1}} \frac{\partial^3 f(x)}{\partial x_{i_2} \partial x_{j_2} \partial x_{k_2}} \, A_{1-1} =
        9d\sum_{i=1}^d \sum_{j=1}^d \sum_{k=1}^d \frac{\partial^3 f(x)}{\partial x_i^2\partial x_k} \frac{\partial^3 f(x)}{\partial x_j^2\partial x_k}.
    \end{equation*}
    The rest 6 terms of cases $(1)$ are collected in $A_{1-2}$, where all three pairs consist of one index from $\{i_1,j_1,k_1\}$ and the other index from $\{i_2,j_2,k_2\}$. Therefore, we have that
    \begin{equation*}
        \sum_{i_1,j_1,k_1=1}^d \sum_{i_2,j_2,k_2=1}^d \sum_{l=1}^d
        \frac{\partial^3 f(x)}{\partial x_{i_1} \partial x_{j_1} \partial x_{k_1}} \frac{\partial^3 f(x)}{\partial x_{i_2} \partial x_{j_2} \partial x_{k_2}} \, A_{1-2} =
        6d\sum_{i=1}^d \sum_{j=1}^d \sum_{k=1}^d \left(\frac{\partial^3 f(x)}{\partial x_i\partial x_j\partial x_k}\right)^2.
    \end{equation*}

    $(2)$ We select two ordered indices in $\{i_1,j_1,k_1,i_2,j_2,k_2\}$, pair the first one with the first $l$, and pair the second one with the second $l$. The remaining four indices form the last two pairs. This gives $(6\times5)\times3=90$ terms in the summation. We further divide case $(2)$ into two subgroups and denote the summation of considered terms by $A_{2-1}$, $A_{2-2}$, respectively.
    
    In $A_{2-1}$, the selected two indices come from the same $\{i_1,j_1,k_1\}$ or $\{i_2,j_2,k_2\}$. There are in total $3\times2\times2\times3=36$ terms involved. By symmetry of the summation, we have that
    \begin{equation*}
        \sum_{i_1,j_1,k_1=1}^d \sum_{i_2,j_2,k_2=1}^d \sum_{l=1}^d
        \frac{\partial^3 f(x)}{\partial x_{i_1} \partial x_{j_1} \partial x_{k_1}} \frac{\partial^3 f(x)}{\partial x_{i_2} \partial x_{j_2} \partial x_{k_2}} \, A_{2-1} =
        36\sum_{i=1}^d \sum_{j=1}^d \sum_{k=1}^d \frac{\partial^3 f(x)}{\partial x_i^2\partial x_k} \frac{\partial^3 f(x)}{\partial x_j^2\partial x_k}.
    \end{equation*}
    In $A_{2-2}$, the selected two indices consist of one from $\{i_1,j_1,k_1\}$ and the other from $\{i_2,j_2,k_2\}$ to pair with $l$. To simplify computation, $A_{2-2}$ is further split into $A_{2-2-1}$ and $A_{2-2-2}$, according to how the remaining two pairs are constructed.
    
    For case $A_{2-2-1}$, after having the two pairs with $l$, the remaining indices in $\{i_1,j_1,k_1\}$ yield the third pair, and the remaining indices in $\{i_2,j_2,k_2\}$ give the fourth pair. One example of such partition is $\{(i_1, l), (i_2, l), (j_1, k_1), (j_2, k_2)\}$. There are in total $3\times3\times2=18$ terms in the summation, and
    \begin{equation*}
        \sum_{i_1,j_1,k_1=1}^d \sum_{i_2,j_2,k_2=1}^d \sum_{l=1}^d
        \frac{\partial^3 f(x)}{\partial x_{i_1} \partial x_{j_1} \partial x_{k_1}} \frac{\partial^3 f(x)}{\partial x_{i_2} \partial x_{j_2} \partial x_{k_2}} \, A_{2-2-1} =
        18\sum_{i=1}^d \sum_{j=1}^d \sum_{k=1}^d \frac{\partial^3 f(x)}{\partial x_i^2\partial x_k} \frac{\partial^3 f(x)}{\partial x_j^2\partial x_k}.
    \end{equation*}
    For case $A_{2-2-2}$, after having the two pairs with $l$, the remaining two pairs are composed of one index from $\{i_1,j_1,k_1\}$ and the other index from $\{i_2,j_2,k_2\}$. There are $3\times3\times2\times2=36$ terms, and thus we have that
    \begin{equation*}
        \sum_{i_1,j_1,k_1=1}^d \sum_{i_2,j_2,k_2=1}^d \sum_{l=1}^d
        \frac{\partial^3 f(x)}{\partial x_{i_1} \partial x_{j_1} \partial x_{k_1}} \frac{\partial^3 f(x)}{\partial x_{i_2} \partial x_{j_2} \partial x_{k_2}} \, A_{2-2-2} =
        36\sum_{i=1}^d \sum_{j=1}^d \sum_{k=1}^d \left(\frac{\partial^3 f(x)}{\partial x_i\partial x_j\partial x_k}\right)^2.
    \end{equation*}
    A sanity check $36+18+36=90$ makes sure that all terms are included in case $(2)$, and $15+90=105$ ensures all terms are considered to compute $\bE[u_{i_1} u_{j_1} u_{k_1} u_{i_2} u_{j_2} u_{k_2} u_l^2]$. Therefore, considering all cases above, we have that
    \begin{align*}
        \bE\norm{H_0(x,u)}^2
        & =
        \sum_{i_1,j_1,k_1=1}^d \sum_{i_2,j_2,k_2=1}^d \sum_{l=1}^d
        \frac{\partial^3 f(x)}{\partial x_{i_1} \partial x_{j_1} \partial x_{k_1}} \frac{\partial^3 f(x)}{\partial x_{i_2} \partial x_{j_2} \partial x_{k_2}}
        \bE[u_{i_1} u_{j_1} u_{k_1} u_{i_2} u_{j_2} u_{k_2} u_l^2] \\
        & =
        9(d+6) \sum_{i=1}^d \sum_{j=1}^d \sum_{k=1}^d \frac{\partial^3 f(x)}{\partial x_i^2\partial x_k} \frac{\partial^3 f(x)}{\partial x_j^2\partial x_k} +
        6(d+6) \sum_{i=1}^d \sum_{j=1}^d \sum_{k=1}^d \left(\frac{\partial^3 f(x)}{\partial x_i\partial x_j\partial x_k}\right)^2 \\
        & =
        9(d+6) \norm{\nabla \tr(\nabla^2 f(x))}^2 + 6(d+6)\sum_{i=1}^d \sum_{j=1}^d \sum_{k=1}^d \left(\frac{\partial^3 f(x)}{\partial x_i\partial x_j\partial x_k}\right)^2.
    \end{align*}
    This proves $(c)$ and concludes the proof of Lemma \ref{lm:second-moment}.
\end{proof}

\subsection{Proofs of Lemmas \ref{lm:F-basic} and \ref{lm:F-smooth}}
\label{app:converge-lm}

\begin{proof}[Proof of Lemma \ref{lm:F-basic}]
    Recall $f_\lambda(x)=\bE[f(x+\lambda u)]$ and $F(x)=f(x)+(\lambda^2/2)\,\tr(\nabla^2 f(x))$. The first part follows from the third-order smoothness of $f(x)$ in Assumption \ref{asp:smooth}. To simplify notation, we let
    \begin{equation}
        \Phi_x(x+\lambda u) = f(x) + \lambda \nabla f(x)^\top u + \frac{\lambda^2}{2} u^\top \nabla^2 f(x) u + \frac{\lambda^3}{6}\nabla^3 f(x)[u]^3,
        \label{eq:app-lm-basic-def-phi}
    \end{equation}
    where $\nabla^3 f(x)[u]^3=\sum_{i,j,k\in[d]}(\partial^3 f(x)/\partial x_i\partial x_j\partial x_k) u_i u_j u_k$ and $u\sim\cN(0, \rI_d)$ is a standard Gaussian random vector. Taking expectation w.r.t. $u$, we have that
    \begin{align*}
        \bE_u[\Phi_x(x + \lambda u)]
        & =
        f(x) + \frac{\lambda^2}{2}\bE[u^\top \nabla^2 f(x) u] \\
        & =
        F(x).
    \end{align*}
    As a result, we obtain
    \begin{align*}
        \abs{f_\lambda(x) - F(x)}
        & =
        \Big|\bE[f(x+\lambda u)] - \bE[\Phi_x(x + \lambda u)]\Big| \\
        & \leq
        \bE\Big|f(x+\lambda u) - \Phi_x(x + \lambda u)\Big| \\
        & \leq
        \frac{L_3}{24}\lambda^4\, \bE\norm{u}^4,
    \end{align*}
    where we use Assumption \ref{asp:smooth} $(c)$. The proof is complete applying Lemma 1 in \citet{nesterov2017random} such that $\bE\norm{u}^\gamma\leq (d+\gamma)^{\gamma/2}$ if $\gamma\geq2$.

    We now show the second part using Lemma \ref{lm:second-moment}. Similarly to Eq. \eqref{eq:app-lm-basic-def-phi}, we define
    \begin{equation*}
        \Phi_x(x-\lambda u) = f(x) - \lambda \nabla f(x)^\top u + \frac{\lambda^2}{2} u^\top \nabla^2 f(x) u - \frac{\lambda^3}{6}\nabla^3 f(x)[u]^3.
    \end{equation*}
    By Assumption \ref{asp:smooth} $(c)$, we have that
    \begin{align*}
        \abs{f(x+\lambda u) - f(x-\lambda u)}
        & \leq
        \abs{f(x+\lambda u) - \Phi_x(x+\lambda u)} +
        \abs{\Phi_x(x-\lambda u) - f(x-\lambda u)} \\
        & \qquad +
        \abs{\Phi_x(x+\lambda u) - \Phi_x(x-\lambda u)} \\
        & \leq
        \abs*{2\lambda \nabla f(x)^\top u + \frac{\lambda^3}{3}\nabla^3 f(x) [u]^3} + \frac{L_3}{12}\lambda^4\norm{u}^4.
    \end{align*}
    For the two-point estimator in Eq. \eqref{eq:two-points}, it holds that
    \begin{align*}
        \norm{g_\lambda(x,u)}^2
        & =
        \frac{(f(x+\lambda u) - f(x-\lambda u))^2}{4\lambda^2} \norm{u}^2 \\
        & \leq
        2\left(\nabla f(x)^\top u + \frac{\lambda^2}{6}\nabla^3 f(x) [u]^3\right)^2\norm{u}^2 + \frac{L_3^2}{288}\lambda^6 \norm{u}^{10}.
    \end{align*}
    Recall $G_0(x,u)=uu^\top \nabla f(x)$ and $H_0(x,u)=uu^\top\nabla (u^\top\nabla^2 f(x)u)=u\nabla^3 f(x)[u]^3$. Therefore, we have
    \begin{align*}
        \bE\norm{g_\lambda(x,u)}^2
        & \leq
        2\bE\norm{G_0(x,u)}^2 + \frac{2\lambda^2}{3} \bE[G_0(x,u)^\top H_0(x,u)] + \frac{\lambda^4}{18} \bE\norm{H_0(x,u)}^2 + \frac{L_3^2\lambda^6}{288} \bE\norm{u}^{10} \\
        & =
        2(d+2)\norm{\nabla f(x)}^2 + 2(d+4)\lambda^2\nabla f(x)^\top \nabla \tr(\nabla^2 f(x)) + \frac{(d+6)\lambda^4}{2} \norm{\nabla \tr(\nabla^2 f(x))}^2 \\
        & \qquad +
        \frac{(d+6)\lambda^4}{3}\sum_{i=1}^d\sum_{j=1}^d\sum_{k=1}^d \left(\frac{\partial^3 f(x)}{\partial x_i\partial x_j\partial x_k}\right)^2 + \frac{L_3^2\lambda^6}{288} \bE\norm{u}^{10} \\
        & \leq
        2(d+6)\norm{\nabla F(x)}^2 + \frac{L_2^2}{3}\lambda^4(d+6)^4 + \frac{L_3^2}{288}\lambda^6 (d+10)^5.
    \end{align*}
    Here, we plug in Lemma \ref{lm:second-moment}, apply Assumption \ref{asp:smooth} $(b)$, and use Lemma 1 in \citep{nesterov2017random}.
    
    For completeness, we also show here that second-order smoothness implies the boundedness of third-order derivatives. Recall the second-order smoothness in Assumption \ref{asp:smooth} $(b)$.
    \begin{equation*}
        \norm{\nabla^2 f(y) - \nabla^2 f(z)} \leq L_2 \norm{y - z}, \quad \forall y, z\in\bR^d,
    \end{equation*}
    where the matrix norm is defined as
    \begin{equation*}
        \norm{\nabla^2 f(y) - \nabla^2 f(z)} := \max_{w\in\bR^d, \norm{w}\leq1} \abs*{w^\top\nabla^2 f(y) w - w^\top\nabla^2 f(z)w}.
    \end{equation*}
    For simplicity of the notation, we let $\forall i,j\in[d]$,
    \begin{equation*}
        D_{ij}(y, z):=\frac{\partial^2 f(y)}{\partial x_i\partial x_j} - \frac{\partial^2 f(z)}{\partial x_i\partial x_j}.
    \end{equation*}
    Let $e_i\in \bR^d$ denote the unit vector whose $i$-th coordinate is 1 and all other coordinates are 0. Since $\norm{e_i}=1$, we have that $\forall i\in[d]$,
    \begin{align*}
        \abs{D_{ii}(y,z)}
        & =
        \abs*{e_i^\top\nabla^2 f(y) e_i - e_i^\top \nabla^2 f(z)e_i} \\
        & \leq
        \norm{\nabla^2 f(y) - \nabla^2 f(z)} \\
        & \leq
        L_2 \norm{y-z}.
    \end{align*}
    We define $w^+:=(e_i+e_j)/\sqrt{2}$ and $w^-:=(e_i-e_j)/\sqrt{2}$ for any $i,j\in[d]$ and $i\neq j$. It holds that
    \begin{align*}
        & (w^+)^\top(\nabla^2 f(y) - \nabla^2 f(z))(w^+) = \frac{1}{2}(D_{ii}(y,z) + 2D_{ij}(y, z) + D_{jj}(y,z)), \\
        & (w^-)^\top(\nabla^2 f(y) - \nabla^2 f(z))(w^-) = \frac{1}{2}(D_{ii}(y,z) - 2D_{ij}(y, z) + D_{jj}(y,z)).
    \end{align*}
    Since $\norm{w^+}=\norm{w^-}=1$, we have that
    \begin{align*}
        \abs{D_{ij}(y,z)}
        & =
        \frac{1}{2}\abs*{(w^+)^\top(\nabla^2 f(y) - \nabla^2 f(z))(w^+) - (w^-)^\top(\nabla^2 f(y) - \nabla^2 f(z))(w^-)} \\
        & \leq
        \norm{\nabla^2 f(y) - \nabla^2 f(z)} \\
        & \leq
        L_2\norm{y - z}.
    \end{align*}
    This means that the function $\partial^2 f(x)/\partial x_i\partial x_j$ is $L_2$-Lipschitz for all $i,j\in[d]$ and implies that its gradient norm is upper bounded by $L_2$. As a result, we have that $\forall i,j,k\in[d]$,
    \begin{align*}
        \left(\frac{\partial^3 f(x)}{\partial x_i\partial x_j\partial x_k}\right)^2
        & \leq
        \norm*{\nabla \left(\frac{\partial^2 f(x)}{\partial x_i\partial x_j}\right)}^2 \\
        & \leq
        L_2^2.
    \end{align*}
    This concludes the proof that $\abs{\partial^3 f(x)/\partial x_i\partial x_j\partial x_k}\leq L_2$, $\forall i,j,k\in[d]$.
\end{proof}

\begin{proof}[Proof of Lemma \ref{lm:F-smooth}]
    Recall the third-order smoothness in Assumption \ref{asp:smooth} $(c)$.
    \begin{equation*}
        \norm{\nabla^3 f(y) - \nabla^3 f(z)} \leq L_3 \norm{y - z}, \quad \forall y, z\in\bR^d,
    \end{equation*}
    where the tensor norm is defined as
    \begin{equation*}
        \norm{\nabla^3 f(y) - \nabla^3 f(z)} := \max_{w\in\bR^d, \norm{w}\leq1} \abs*{\nabla^3 f(y) [w]^3 - \nabla^3 f(z)[w]^3},
    \end{equation*}
    and $\nabla^3 f(y)[w]^3=\sum_{i,j,k\in[d]}(\partial^3 f(y)/\partial x_i\partial x_j\partial x_k) w_i w_j w_k$. More details can be found in Eq. (1.1)--(1.5) of \citet{nesterov2021implementable} and Appendix 1 of \citet{nesterov1994interior}.
    For simplicity of notation, we let $\forall i,j,k\in[d]$,
    \begin{equation*}
        D_{ijk}(y, z):=\frac{\partial^3 f(y)}{\partial x_i\partial x_j\partial x_k} - \frac{\partial^3 f(z)}{\partial x_i\partial x_j\partial x_k}.
    \end{equation*}
    Let $e_i\in \bR^d$ denote the unit vector whose $i$-th coordinate is 1 and all other coordinates are 0. Since $\norm{e_i}=1$, we have that $\forall i\in[d]$,
    \begin{align*}
        \abs{D_{iii}(y,z)}
        & =
        \abs*{\nabla^3 f(y) [e_i]^3 - \nabla^3 f(z)[e_i]^3} \\
        & \leq
        \norm{\nabla^3 f(y) - \nabla^3 f(z)} \\
        & \leq
        L_3 \norm{y-z}.
    \end{align*}
    We define $w^+:=(e_i+e_k)/\sqrt{2}$ and $w^-:=(e_i-e_k)/\sqrt{2}$ for any $i,k\in[d]$ and $i\neq k$. It holds that
    \begin{align*}
        & \nabla^3 f(y)[w^+]^3 - \nabla^3 f(z)[w^+]^3 = \frac{\sqrt{2}}{4}(D_{iii}(y,z) + 3D_{iik}(y, z) + 3 D_{kki}(y,z) + D_{kkk}(y,z)), \\
        & \nabla^3 f(y)[w^-]^3 - \nabla^3 f(z)[w^-]^3 = \frac{\sqrt{2}}{4}(D_{iii}(y,z) - 3D_{iik}(y, z) + 3 D_{kki}(y,z) - D_{kkk}(y,z)).
    \end{align*}
    Since $\norm{w^+}=\norm{w^-}=1$, we have that
    \begin{align*}
        \abs{D_{iik}(y,z)}
        & =
        \frac{\sqrt{2}}{3}\abs*{(\nabla^3 f(y)[w^+]^3 - \nabla^3 f(z)[w^+]^3) - (\nabla^3 f(y)[w^-]^3 - \nabla^3 f(z)[w^-]^3) - \frac{\sqrt{2}}{2}D_{kkk}(y,z)} \\
        & \leq
        \frac{2\sqrt{2}}{3}\norm{\nabla^3 f(y) - \nabla^3 f(z)} + \frac{1}{3}\abs{D_{kkk}(y,z)} \\
        & \leq
        \sqrt{2}L_3\norm{y - z}.
    \end{align*}
    As a result, we obtain
    \begin{align*}
        \norm{\nabla \tr(\nabla^2 f(y)) - \nabla \tr(\nabla^2 f(z))}^2
        & =
        \sum_{k=1}^d \left(\frac{\partial}{\partial x_k}\sum_{i=1}^d \frac{\partial^2 f(y)}{\partial x_i^2} - \frac{\partial}{\partial x_k}\sum_{i=1}^d \frac{\partial^2 f(z)}{\partial x_i^2}\right)^2 \\
        & =
        \sum_{k=1}^d\left(\sum_{i=1}^d D_{iik}(y, z)\right)^2 \\
        & \leq
        \sum_{k=1}^d\left(\sum_{i=1}^d \abs{D_{iik}(y, z)} \right)^2 \\
        & \leq
        2d^3 L_3^2\norm{y-z}^2.
    \end{align*}
    By Assumption \ref{asp:smooth} $(a)$, we have that $\forall y,z\in\bR^d$,
    \begin{align*}
        \norm{\nabla F(y) - \nabla F(z)}
        & \leq
        \norm{\nabla f(y) - \nabla f(z)} + \frac{\lambda^2}{2}\norm{\nabla \tr(\nabla^2 f(y)) - \nabla \tr(\nabla^2 f(z))} \\
        & \leq
        \left(L_1 + \frac{\lambda^2 d^{3/2} L_3}{\sqrt{2}}\right)\norm{y - z}.
    \end{align*}
    This means that $F(x)$ is $(2L_1)$-smooth when $\lambda^2\leq\sqrt{2}L_1/(d^{3/2}L_3)$. Therefore, we have that
    \begin{align*}
        F\left(x - \frac{1}{2L_1} \nabla F(x)\right)
        & \leq
        F(x) - \frac{1}{2L_1}\norm{\nabla F(x)}^2 + L_1\norm*{\frac{1}{2L_1}\nabla F(x)}^2 \\
        & =
        F(x) - \frac{1}{4L_1}\norm{\nabla F(x)}^2.
    \end{align*}
    Rearranging terms, we obtain that $\forall x\in\bR^d$,
    \begin{equation}
        \begin{split}
            \norm{\nabla F(x)}^2
            & \leq
            4L_1 \left(F(x) - F\left(x - \frac{1}{2L_1} \nabla F(x)\right) \right)\\
            & \leq
            4L_1\left(F(x) - \min_{x\in\bR^d} F(x)\right).
        \end{split}
        \label{eq:app-self-bounding}
    \end{equation}
    This property of smooth functions is used in the proof of Theorem \ref{thm:F-converge} below.
\end{proof}

\subsection{Proofs of Theorem \ref{thm:F-converge} and Corollary \ref{cor:main}}
\label{app:converge-thm}

\begin{proof}[Proof of Theorem \ref{thm:F-converge}]
    By Assumption \ref{asp:convex}, $f(x)$ is convex, which suggests that the smoothed function $f_\lambda(x)$ is also convex \citep{nesterov2017random}. Let $x_F^*$ be one minimizer of $F(x)=f(x)+(\lambda^2/2)\tr(\nabla^2 f(x))$. By the zeroth-order update rule in Algorithm \ref{algo:zo}, we have that
    \begin{align*}
        \bE_{u_t}\norm{x_{t+1} - x_F^*}^2
        & =
        \norm{x_t - x_F^*}^2 - 2\eta \,\bE[g_\lambda(x_t, u_t)]^\top (x_t - x_F^*) + \eta^2\,\bE\norm{g_\lambda(x_t, u_t)}^2 \\
        & =
        \norm{x_t - x_F^*}^2 - 2\eta \nabla f_\lambda(x_t)^\top (x_t - x_F^*) + \eta^2\,\bE\norm{g_\lambda(x_t, u_t)}^2 \\
        & \leq
        \norm{x_t - x_F^*}^2 - 2\eta (f_\lambda (x_t) - f_\lambda(x_F^*)) + \eta^2\,\bE\norm{g_\lambda(x_t, u_t)}^2 \\
        & \leq
        \norm{x_t - x_F^*}^2 - 2\eta (F(x_t) - F(x_F^*)) + \eta^2\,\bE\norm{g_\lambda(x_t, u_t)}^2 \\
        & \qquad +
        2\eta \abs{F(x_t) - f_\lambda(x_t)} + 2\eta\abs{f_\lambda(x_F^*) - F(x_F^*)} \\
        & \leq
        \norm{x_t - x_F^*}^2 - 2\eta (F(x_t) - F(x_F^*)) + \eta^2\,\bE\norm{g_\lambda(x_t, u_t)}^2 + \frac{L_3}{6} \eta\lambda^4(d+4)^2,
    \end{align*}
    where we apply Lemma \ref{lm:F-basic} in the last step. Using the same lemma, we also have that
    \begin{align*}
        \bE\norm{g_\lambda(x_t,u_t)}^2
        & \leq
        2(d+6)\norm{\nabla F(x_t)}^2 + \frac{L_2^2}{3}\lambda^4(d+6)^4 + \frac{L_3^2}{288}\lambda^6 (d+10)^5 \\
        & \leq
        8(d+6)L_1 (F(x_t) - F(x_F^*))+ \frac{L_2^2}{3}\lambda^4(d+6)^4 + \frac{L_3^2}{288}\lambda^6 (d+10)^5,
    \end{align*}
    where we apply Lemma \ref{lm:F-smooth} with $\lambda^2\leq\sqrt{2}L_1/(d^{3/2}L_3)$ and use Eq. \eqref{eq:app-self-bounding} when $F(x)$ is $(2L_1)$-smooth. As a result, we obtain that
    \begin{align*}
        \bE_{u_t}\norm{x_{t+1} - x_F^*}^2
        & \leq
        \norm{x_t - x_F^*}^2 - 2\eta(1-4\eta (d+6)L_1) (F(x_t) - F(x_F^*)) \\
        & \qquad +
        \frac{L_3}{6} \eta\lambda^4(d+4)^2 + \frac{L_2^2}{3}\eta^2\lambda^4(d+6)^4 + \frac{L_3^2}{288}\eta^2\lambda^6 (d+10)^5.
    \end{align*}
    We choose $\eta=1/(8(d+6)L_1)$ such that $1-4\eta(d+6)L_1=1/2$. Rearranging terms, we have
    \begin{align*}
        &
        \bE[F(x_t) - F(x_F^*)] \\
        \leq \, &
        \frac{\bE\norm{x_t - x_F^*}^2 - \bE\norm{x_{t+1} - x_F^*}^2}{\eta} + \frac{L_3}{6} \lambda^4(d+4)^2 + \frac{L_2^2}{3}\eta\lambda^4(d+6)^4 + \frac{L_3^2}{288}\eta\lambda^6 (d+10)^5 \\
        \leq \, &
        8(d+6)L_1(\bE\norm{x_t - x_F^*}^2 - \bE\norm{x_{t+1} - x_F^*}^2) \\
        & \qquad +
        \frac{L_3}{6} \lambda^4(d+4)^2 + \frac{L_2^2}{24L_1}\lambda^4(d+6)^3 + \frac{L_3^2}{1152L_1}\lambda^6 (d+10)^4.
    \end{align*}
    Summing up from $t=0$ to $t=T-1$ and dividing both sides by $T$, we have that
    \begin{align*}
        &
        \bE\left[F(x_\tau) - \min_{x\in\bR^d} F(x)\right] \\
        = \, &
        \frac{1}{T}\sum_{t=0}^{T-1} \bE[F(x_t) - F(x_F^*)] \\
        \leq \, &
        \frac{8(d+6)L_1\norm{x_0 - x_F^*}^2}{T} + \frac{L_3}{6} \lambda^4(d+4)^2 + \frac{L_2^2}{24L_1}\lambda^4(d+6)^3 + \frac{L_3^2}{1152L_1}\lambda^6 (d+10)^4.
    \end{align*}
    The proof is thus complete.
\end{proof}

\begin{proof}[Proof of Corollary \ref{cor:main}]
    First, we show the guarantee on the function $f(x)$. Assumption \ref{asp:convex} implies that $\tr(\nabla^2 f(x))\geq0, \forall x\in\bR^d$, and thus $f(x)\leq F(x), \forall x\in\bR^d$. Since $f(x)$ is $L_1$-smooth by Assumption \ref{asp:smooth}, we also have that
    \begin{align*}
        \min_{x\in\bR^d} F(x)
        & =
        \min_{x\in \bR^d} \left(f(x) + \frac{\lambda^2}{2}\tr(\nabla^2 f(x))\right) \\
        & \leq
        \min_{x\in \bR^d} \left(f(x) + \frac{\lambda^2}{2}L_1 d\right) \\
        & =
        \min_{x\in \bR^d} f(x) + \frac{\lambda^2}{2}L_1 d.
    \end{align*}
    As a result, we obtain that
    \begin{align*}
        \bE\left[f(x_\tau) - \min_{x\in\bR^d} f(x)\right]
        & \leq
        \bE\left[F(x_\tau) - \min_{x\in\bR^d} F(x)\right] + \frac{L_1}{2}\lambda^2 d \\
        & \leq
        \frac{8(d+6)L_1\norm{x_0 - x_F^*}^2}{T} + \frac{L_1}{2}\lambda^2 d \\
        & \qquad +
        \frac{L_3}{6} \lambda^4(d+4)^2 + \frac{L_2^2}{24L_1}\lambda^4(d+6)^3 + \frac{L_3^2}{1152L_1}\lambda^6 (d+10)^4.
    \end{align*}
    Next, we prove the guarantee on the trace of Hessian. Since $\cX^*=\arg\min_{x\in\bR^d} f(x)$, we have that
    \begin{align*}
        \min_{x\in\bR^d} F(x)
        & =
        \min_{x\in\bR^d} \left(f(x) + \frac{\lambda^2}{2}\tr(\nabla^2 f(x))\right) \\
        & \leq
        \min_{x\in\cX^*} \left(f(x) + \frac{\lambda^2}{2}\tr(\nabla^2 f(x))\right) \\
        & =
        \min_{x\in\bR^d} f(x) + \frac{\lambda^2}{2} \min_{x\in\cX^*} \tr(\nabla^2 f(x)).
    \end{align*}
    As a result, we obtain that
    \begin{equation}
        \begin{split}
            &
            \bE\left[\tr(\nabla^2 f(x_\tau)) - \min_{x\in\cX^*} \tr(\nabla^2 f(x))\right] \\
            \leq \, &
            \frac{2}{\lambda^2}\bE[F(x_\tau) - f(x_\tau)] - \frac{2}{\lambda^2}\left(\min_{x\in\bR^d} F(x) - \min_{x\in\bR^d} f(x)\right) \\
            \leq \, &
            \frac{2}{\lambda^2} \bE\left[F(x_\tau) - \min_{x\in\bR^d} F(x)\right] \\
            \leq \, &
            \frac{16(d+6)L_1\norm{x_0 - x_F^*}^2}{\lambda^2 T} + \frac{L_3}{3} \lambda^2(d+4)^2 + \frac{L_2^2}{12L_1}\lambda^2(d+6)^3 + \frac{L_3^2}{576L_1}\lambda^4 (d+10)^4.
        \end{split}
        \label{eq:app-cor-tr}
    \end{equation}
    Given $\epsilon>0$, choosing $\lambda$ and $T$ such that
    \begin{align*}
        & \lambda^2 = \min\left\{\frac{\sqrt{2}L_1}{d^{3/2} L_3}, \frac{12\sqrt{L_1\epsilon}}{L_3(d+10)^2}, \frac{3\epsilon}{4L_3(d+4)^2}, \frac{3L_1\epsilon}{L_2^2(d+6)^3}\right\}, \\
        & T\geq\frac{64(d+6)L_1\norm{x_0-x_F^*}^2}{\epsilon} \max\left\{\frac{d^{3/2} L_3}{\sqrt{2}L_1}, \frac{L_3(d+10)^2}{12\sqrt{L_1\epsilon}}, \frac{4L_3(d+4)^2}{3\epsilon}, \frac{L_2^2(d+6)^3}{3L_1\epsilon}\right\},
    \end{align*}
    to make sure that each term in the right-hand side of Eq. \eqref{eq:app-cor-tr} is at most $\epsilon/4$, we have that
    \begin{equation*}
        \bE\left[\tr(\nabla^2 f(x_\tau)) - \min_{x\in\cX^*} \tr(\nabla^2 f(x))\right] \leq \epsilon.
    \end{equation*}
    This also ensures that $\bE[F(x_\tau) - \min_{x\in\bR^d} F(x)]\leq \lambda^2\epsilon/2$, and thus
    \begin{align*}
        \bE\left[f(x_\tau) - \min_{x\in\bR^d} f(x)\right]
        & \leq
        \bE\left[F(x_\tau) - \min_{x\in\bR^d} F(x)\right] + \frac{L_1}{2}\lambda^2 d \\
        & \leq
        \frac{\lambda^2}{2}(\epsilon + L_1 d) \\
        & \leq
        \frac{3L_1^2\epsilon}{2L_2^2(d+6)^2} \left(1 + \frac{\epsilon}{L_1(d+6)}\right).
    \end{align*}
    The proof is thus complete.
\end{proof}

\section{Alternative Proof of the Main Result} \label{app:alt}

In Section \ref{sec:analysis}, we provide a convergence analysis by relating $\bE\norm{g_\lambda(x_t,u_t)}^2$ to $\norm{\nabla F(x)}^2$. Here, we show that it is also possible to first bound $\bE\norm{g_\lambda(x_t,u_t)}^2$ by $\norm{\nabla f_\lambda(x)}^2$.

\begin{lemma}
    Let Assumption \ref{asp:smooth} be satisfied. The second moments of $g_\lambda(x,u)$ can be bounded as
    \begin{equation*}
        \bE\norm{g_\lambda(x,u)}^2 \leq 4(d+2)\norm{\nabla f_\lambda(x)}^2 + \frac{L_2^2}{6}\lambda^4(d+8)^5.
    \end{equation*}

    \label{lm:app-alternative-second-moment}
\end{lemma}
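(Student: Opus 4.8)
The plan is to exploit the fact that the quadratic term in a second-order Taylor expansion cancels in the symmetric finite difference, so that only the Lipschitz-Hessian constant $L_2$ enters the bias. First I would Taylor-expand $f(x\pm\lambda u)$ to second order around $x$. By the second-order smoothness in Assumption \ref{asp:smooth}$(b)$ (equivalently, the inequality $\abs{f(y)-f(x)-\nabla f(x)^\top(y-x)-\frac12(y-x)^\top\nabla^2 f(x)(y-x)}\le\frac{L_2}{6}\norm{y-x}^3$ obtained by integrating the Hessian twice), the constant and quadratic terms of $f(x+\lambda u)$ and $f(x-\lambda u)$ agree, and only the linear part changes sign; these cancel upon subtraction. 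This yields the decomposition
\[
g_\lambda(x,u) = \left(\nabla f(x)^\top u\right) u + R(u)\, u, \qquad \abs{R(u)} \le \frac{L_2}{6}\lambda^2\norm{u}^3,
\]
where $R(u)=\tfrac{f(x+\lambda u)-f(x-\lambda u)}{2\lambda}-\nabla f(x)^\top u$ collects the two third-order remainders divided by $2\lambda$.

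Next I would bound the second moment by splitting this decomposition. Using $\norm{a+b}^2\le 2\norm{a}^2+2\norm{b}^2$,
\[
\bE\norm{g_\lambda(x,u)}^2 \le 2\,\bE\big[(\nabla f(x)^\top u)^2\norm{u}^2\big] + 2\,\bE\big[R(u)^2\norm{u}^2\big].
\]
The first expectation is exactly $(d+2)\norm{\nabla f(x)}^2$ by Lemma \ref{lm:second-moment}$(a)$, and the second is controlled by $\frac{L_2^2}{36}\lambda^4\,\bE\norm{u}^8\le\frac{L_2^2}{36}\lambda^4(d+8)^4$ using the moment bound $\bE\norm{u}^\gamma\le(d+\gamma)^{\gamma/2}$ from Lemma 1 of \citet{nesterov2017random}. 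This produces a bound in terms of $\norm{\nabla f(x)}^2$ rather than $\norm{\nabla f_\lambda(x)}^2$.

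The remaining step — and the one that forces the target form — is to replace $\nabla f(x)$ by $\nabla f_\lambda(x)$. Since $\bE[(\nabla f(x)^\top u)u]=\nabla f(x)$ and $\bE[g_\lambda(x,u)]=\nabla f_\lambda(x)$, taking expectations in the decomposition gives $\nabla f_\lambda(x)-\nabla f(x)=\bE[R(u)\,u]$, whence $\norm{\nabla f(x)-\nabla f_\lambda(x)}\le\frac{L_2}{6}\lambda^2\,\bE\norm{u}^4\le\frac{L_2}{6}\lambda^2(d+4)^2$. Applying $\norm{\nabla f(x)}^2\le 2\norm{\nabla f_\lambda(x)}^2+2\norm{\nabla f(x)-\nabla f_\lambda(x)}^2$ converts the $2(d+2)\norm{\nabla f(x)}^2$ term into the desired $4(d+2)\norm{\nabla f_\lambda(x)}^2$ plus an extra $\frac{(d+2)L_2^2}{9}\lambda^4(d+4)^4$. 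Finally I would collect all bias contributions as $\frac{L_2^2}{18}\lambda^4\big[2(d+2)(d+4)^4+(d+8)^4\big]$ and use the crude monotonicity estimates $2(d+2)(d+4)^4\le 2(d+8)^5$ and $(d+8)^4\le(d+8)^5$ to absorb everything into $\frac{L_2^2}{6}\lambda^4(d+8)^5$.

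The only mildly delicate point is the bookkeeping of constants so that the three separate bias terms collapse exactly into the single $(d+8)^5$ factor; the conceptual content lies in the cancellation of the quadratic Taylor term, which is what allows the bound to depend only on the second-order smoothness constant $L_2$ — in contrast to Lemma \ref{lm:F-basic}, which additionally invokes $L_3$ — at the cost of the larger $(d+8)^5$ dimension dependence.
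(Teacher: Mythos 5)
Your proof is correct and follows essentially the same route as the paper's: a second-order Taylor expansion whose quadratic term cancels in the symmetric difference, Lemma \ref{lm:second-moment}$(a)$ for the leading moment, the bound $\norm{\nabla f(x)-\nabla f_\lambda(x)}\leq \frac{L_2}{6}\lambda^2(d+4)^2$ to pass from $\nabla f(x)$ to $\nabla f_\lambda(x)$, and the same absorption of $2(d+2)(d+4)^4+(d+8)^4\leq 3(d+8)^5$ into the final constant. All constants check out, so nothing needs to change.
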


\begin{proof}
    The proof mostly follows from Lemma 3 and Theorem 4 in \citet{nesterov2017random} and is similar to that of Lemma \ref{lm:F-basic}. With a slight abuse of notation, we define
    \begin{align*}
        & \Phi_x(x+\lambda u) = f(x) + \lambda \nabla f(x)^\top u + \frac{\lambda^2}{2} u^\top \nabla^2 f(x) u, \\
        & \Phi_x(x-\lambda u) = f(x) - \lambda \nabla f(x)^\top u + \frac{\lambda^2}{2} u^\top \nabla^2 f(x) u.
    \end{align*}
    Assumption \ref{asp:smooth} $(b)$ implies that
    \begin{equation*}
        \abs{f(x+\lambda u) - \Phi_x(x+\lambda u)} \leq \frac{L_2}{6}\lambda^3\norm{u}^3,
    \end{equation*}
    and the same holds for $\abs{f(x-\lambda u) - \Phi_x(x-\lambda u)}$. Therefore, we have that
    \begin{align*}
        \abs{f(x+\lambda u) - f(x-\lambda u)}
        & \leq
        \abs{f(x+\lambda u) - \Phi_x(x+\lambda u)} +
        \abs{\Phi_x(x-\lambda u) - f(x-\lambda u)} \\
        & \qquad +
        \abs{\Phi_x(x+\lambda u) - \Phi_x(x-\lambda u)} \\
        & \leq
        \abs*{2\lambda \nabla f(x)^\top u} + \frac{L_2}{3}\lambda^3\norm{u}^3.
    \end{align*}
    For the two-point estimator $g_\lambda(x,u)$, it holds that
    \begin{align*}
        \norm{g_\lambda(x, u)}^2
        & =
        \frac{(f(x+\lambda u) - f(x-\lambda u))^2}{4\lambda^2} \norm{u}^2 \\
        & \leq
        2 (u^\top \nabla f(x))^2 \norm{u}^2 + \frac{L_2^2}{18}\lambda^4\norm{u}^8.
    \end{align*}
    Recall $G_0(x,u)=uu^\top\nabla f(x)$ in Lemma \ref{lm:second-moment}. By Lemma 1 in \citep{nesterov2017random}, we obtain that
    \begin{align*}
        \bE\norm{g_\lambda(x, u)}^2
        & \leq
        2\bE\norm{G_0(x,u)}^2 + \frac{L_2^2}{18}\lambda^4 \bE\norm{u}^8 \\
        & \leq
        2(d+2)\norm{\nabla f(x)}^2 + \frac{L_2^2}{18}\lambda^4(d+8)^4.
    \end{align*}
    Then we upper bound the difference $\norm{\nabla f(x) - \nabla f_\lambda(x)}$ as
    \begin{align*}
        \norm{\nabla f(x) - \nabla f_\lambda(x)}
        & =
        \norm*{\bE\left[u^\top \nabla f(x)u - \frac{f(x+\lambda u) - f(x-\lambda u)}{2\lambda} u\right]} \\
        & \leq
        \frac{1}{2\lambda}\bE \norm{(\Phi_x(x+\lambda u) - \Phi_x(x-\lambda u) - f(x+\lambda u) + f(x-\lambda u))u} \\
        & \leq
        \frac{L_2}{6}\lambda^2\bE\norm{u}^4 \\
        & \leq
        \frac{L_2}{6}\lambda^2 (d+4)^2.
    \end{align*}
    As a result, this gives that
    \begin{align*}
        \bE\norm{g_\lambda(x, u)}^2
        & \leq
        4(d+2)\norm{\nabla f_\lambda(x)}^2 + 4(d+2)\norm{\nabla f(x) - \nabla f_\lambda(x)}^2 + \frac{L_2^2}{18}\lambda^4(d+8)^4 \\
        & \leq
        4(d+2)\norm{\nabla f_\lambda(x)}^2 + \frac{L_2^2}{6}\lambda^4(d+8)^5.
    \end{align*}
    The error term $\cO(\lambda^4d^5)$ has worse dependence on $d$ compared to Lemma \ref{lm:F-basic}.
\end{proof}

\begin{theorem}
    Under Assumptions \ref{asp:smooth} and \ref{asp:convex}, Algorithm \ref{algo:zo} with stepsize $\eta=1/(8(d+2)L_1)$ satisfies
    \begin{equation*}
        \bE\left[F(\bar x_T) - \min_{x\in\bR^d} F(x)\right] \leq
        \frac{8(d+2)L_1\norm{x_0 - x_\lambda^*}^2}{T} + \frac{L_2^2}{12L_1}\lambda^4(d+8)^4 + \frac{L_3}{12}\lambda^4(d+4)^2.
    \end{equation*}
    where $x_0\in\bR^d$ is the initialization, $x_\lambda^*\in\arg\min_{x\in\bR^d} f_\lambda(x)$, $\bar x_T=(1/T)\sum_{t=0}^{T-1}x_t$ is the average of iterates, and the expectation is taken w.r.t. the randomness in all search directions. This implies
    \begin{align*}
        & \bE\left[f(\bar x_T) - \min_{x\in\bR^d} f(x)\right] \leq
        \frac{L_1^2\epsilon}{L_2^2(d+8)^3}\left(1+\frac{\epsilon}{L_1(d+8)}\right), \\
        & \bE\left[\tr(\nabla^2 f(\bar x_T)) - \min_{x\in\cX^*} \tr(\nabla^2 f(x))\right] \leq \epsilon,
    \end{align*}
    when setting the smoothing parameter $\lambda$ and the number of iterations $T$ such that
    \begin{align*}
        & \lambda^2 = \min\left\{\frac{2\epsilon}{L_3(d+4)^2}, \frac{2L_1\epsilon}{L_2^2(d+8)^4}\right\}, \\
        & T \geq \frac{48(d+2)L_1\norm{x_0-x_\lambda^*}^2}{\epsilon} \max\left\{\frac{L_3(d+4)^2}{2\epsilon}, \frac{L_2^2(d+8)^4}{2L_1\epsilon}\right\}.
    \end{align*}
    According to Definition \ref{def:approx-flat}, this means that $(\cO(\epsilon/d^3), \epsilon)$ approximate flat minima can be guaranteed with $T=\cO(d^5/\epsilon^2)$ and $\lambda=\cO(\epsilon^{1/2}/d^2)$.
    \label{thm:app-alternative}
\end{theorem}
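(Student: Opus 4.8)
The plan is to mirror the proof of Theorem~\ref{thm:F-converge}, but to run the one-step recursion Eq.~\eqref{eq:one-step} anchored at a minimizer $x_\lambda^*$ of the \emph{smoothed} loss $f_\lambda$ rather than at a minimizer of $F$, so that the alternative second-moment bound of Lemma~\ref{lm:app-alternative-second-moment}, which is stated in terms of $\norm{\nabla f_\lambda(x)}^2$, can be combined with a self-bounding inequality for $f_\lambda$. First I would note that $f_\lambda(x)=\bE[f(x+\lambda u)]$ is convex under Assumption~\ref{asp:convex} \citep{nesterov2017random} and $L_1$-smooth, being an average of $L_1$-smooth functions; the standard self-bounding property of smooth convex functions then gives $\norm{\nabla f_\lambda(x)}^2\le 2L_1\big(f_\lambda(x)-f_\lambda(x_\lambda^*)\big)$. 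Feeding this into Lemma~\ref{lm:app-alternative-second-moment} yields $\bE\norm{g_\lambda(x_t,u_t)}^2\le 8(d+2)L_1\big(f_\lambda(x_t)-f_\lambda(x_\lambda^*)\big)+\tfrac{L_2^2}{6}\lambda^4(d+8)^5$.

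Substituting this into Eq.~\eqref{eq:one-step} instantiated at $x=x_\lambda^*$ and choosing $\eta=1/(8(d+2)L_1)$ makes the coefficient $1-4\eta(d+2)L_1$ equal $1/2$, so that $\bE[f_\lambda(x_t)-f_\lambda(x_\lambda^*)]$ telescopes against $\norm{x_t-x_\lambda^*}^2$. Summing from $t=0$ to $T-1$, dividing by $T$, and applying Jensen's inequality to the convex $f_\lambda$ gives $\bE[f_\lambda(\bar x_T)-\min_x f_\lambda(x)]\le 8(d+2)L_1\norm{x_0-x_\lambda^*}^2/T+\tfrac{L_2^2}{48(d+2)L_1}\lambda^4(d+8)^5$; the crude simplification $(d+8)^5/(d+2)\le 4(d+8)^4$ reshapes the noise term into $\tfrac{L_2^2}{12L_1}\lambda^4(d+8)^4$. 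Passing from $f_\lambda$ to $F$ through $\abs{f_\lambda(x)-F(x)}\le\tfrac{L_3}{24}\lambda^4(d+4)^2$ (Lemma~\ref{lm:F-basic})---applied both at $\bar x_T$ and to relate $\min_x F$ with $\min_x f_\lambda$---contributes an extra $\tfrac{L_3}{12}\lambda^4(d+4)^2$ and produces the stated bound on $\bE[F(\bar x_T)-\min_x F(x)]$.

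For the two implications I would reuse the bookkeeping of Corollary~\ref{cor:main}. Since $\tr(\nabla^2 f(x))\ge0$, one has $f(x)\le F(x)$ and $\min_x F(x)\le\min_x f(x)+\tfrac{L_1}{2}\lambda^2 d$, so $f(\bar x_T)-\min_x f(x)\le F(\bar x_T)-\min_x F(x)+\tfrac{L_1}{2}\lambda^2 d$; restricting the minimization of $F$ to $\cX^*$ gives $\min_x F(x)\le\min_x f(x)+\tfrac{\lambda^2}{2}\min_{x\in\cX^*}\tr(\nabla^2 f(x))$, whence $\bE[\tr(\nabla^2 f(\bar x_T))-\min_{x\in\cX^*}\tr(\nabla^2 f(x))]\le\tfrac{2}{\lambda^2}\bE[F(\bar x_T)-\min_x F(x)]$. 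I would then split the three-term bound on the trace gap evenly below $\epsilon/3$: the two $\lambda$-controlled terms force $\lambda^2=\min\{2\epsilon/(L_3(d+4)^2),\,2L_1\epsilon/(L_2^2(d+8)^4)\}$ and the optimization term forces the stated lower bound on $T$. Back-substituting this $\lambda^2$ into $\tfrac{\lambda^2}{2}(\epsilon+L_1 d)$ and using $d/(d+8)^4\le 1/(d+8)^3$ recovers the function-value guarantee, after which the $\cO(\epsilon/d^3)$ and $\cO(d^5/\epsilon^2)$ rates follow by inspection.

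I expect the only genuinely delicate aspect to be constant and dimension bookkeeping---in particular keeping straight which reference point ($x_\lambda^*$, a minimizer of $F$, or a point of $\cX^*$) each inequality is anchored at, and verifying the polynomial-in-$d$ simplifications---rather than any conceptual obstacle, since Lemmas~\ref{lm:app-alternative-second-moment} and~\ref{lm:F-basic} already absorb the hard moment computations and the only structurally new ingredient over Theorem~\ref{thm:F-converge} is swapping the self-bounding of $F$ for that of $f_\lambda$.
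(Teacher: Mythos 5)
Your proposal is correct and follows essentially the same route as the paper's own proof: anchoring the recursion at $x_\lambda^*$, combining Lemma~\ref{lm:app-alternative-second-moment} with the self-bounding property of the convex $L_1$-smooth $f_\lambda$, choosing $\eta=1/(8(d+2)L_1)$ to halve the descent coefficient, simplifying via $(d+8)^5/(d+2)\le 4(d+8)^4$, and transferring to $F$ through the $\abs{f_\lambda - F}\le \tfrac{L_3}{24}\lambda^4(d+4)^2$ bound of Lemma~\ref{lm:F-basic} applied at $\bar x_T$ and at the minima. The downstream bookkeeping (even $\epsilon/3$ split, $\tfrac{2}{\lambda^2}$ conversion for the trace gap, and $d/(d+8)^4\le 1/(d+8)^3$ for the function-value bound) likewise matches the paper exactly.
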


\vspace{0.5em}

\begin{remark}
    Compared to Corollary \ref{cor:main}, Theorem \ref{thm:app-alternative} has similar dependence on $\epsilon$ but worse dependence on $d$ in number of iterations $T$. This comes from the worse dependence $\cO(\lambda^4d^5)$ in Lemma \ref{lm:app-alternative-second-moment}.
\end{remark}

\vspace{0.5em}

\begin{remark}
    In this framework, the only place that uses the third-order smoothness assumption is to upper bound the difference $\abs{F(x) - f_\lambda(x)}$.
    With second-order smoothness, an alternative result $\abs{F(x) - f_\lambda(x)}\leq(L_2/6)\lambda^3(d+3)^{3/2}$ has worse dependence on $\lambda$; see Theorem 1 in \citep{nesterov2017random}. This leads to $\bE[\tr(\nabla^2 f(\bar x_T)) - \min_{x\in\cX^*} \tr(\nabla^2 f(x))]\leq \cO(d/(\lambda^2 T) + \lambda^2 d^4 + \lambda d^{3/2})$ and thus $T=\cO(\max\{d^5/\epsilon^2, d^4/\epsilon^3\})$.
    Relaxing the third-order smoothness assumption gives strictly worse convergence rates.
    The detailed derivations are similar to the proof below and are thus omitted.
    \label{rmk:remove-L3}
\end{remark}

\vspace{0.5em}

\begin{remark}
    We also briefly discuss how the $L_1$-smoothness assumption on $f(x)$ can be relaxed to the function being differentiable and $L_0$-Lipschitz, i.e., $\norm{\nabla f(x)}\leq L_0, \forall x\in\bR^d$.
    By Lemma E.3 in \citep{duchi2012randomized}, when $f(x)$ is convex and $L_0$-Lipschitz, $f_\lambda(x)$ is convex and $(L_0/\lambda)$-smooth.
    This means all dependence on $L_1$ can be replaced by $L_0/\lambda$ in the analysis below.
    With both second-order and third-order smoothness assumptions, and setting $\eta=\cO(\lambda/d)$, this gives $\bE[\tr(\nabla^2 f(\bar x_T)) - \min_{x\in\cX^*} \tr(\nabla^2 f(x))]\leq \cO(d/(\lambda^3 T) + \lambda^3 d^4 + \lambda^2 d^2)$ and thus $T=\cO(\max\{d^5/\epsilon^2, d^4/\epsilon^{5/2}\})$.
    With only second-order smoothness assumption, this gives $\bE[\tr(\nabla^2 f(\bar x_T)) - \min_{x\in\cX^*} \tr(\nabla^2 f(x))]\leq \cO(d/(\lambda^3 T) + \lambda^3 d^4 + \lambda d^{3/2})$ and thus $T=\cO(d^{11/2}/\epsilon^4)$.
    Table \ref{tab:asp} summarizes the complexity under different set of assumptions on $f(x)$.
    \label{rmk:remove-L1}
\end{remark}

\begin{table}[t]
    \centering
    \caption{Summary of convergence rates to approximate flat minima with different set of assumptions. Convexity is required for all cases and is thus omitted in the table. $L_0$ means $f(x)$ is $L_0$-Lipschitz, and $L_r$ denotes that $f(x)$ is $r$th-order smooth for $r=1,2,3$. Under the $L_1$-smoothness assumption, $(\cO(\lambda^2d), \epsilon)$ approximate flat minima are attained; without this assumption, zeroth-order optimization converges to $(\cO(\lambda d^{1/2}), \epsilon)$ approximate flat minima.}
    \vspace{0.5em}
    \begin{tabular}{ccccc}
        \toprule
        Assumptions & Reference & Smoothing $\lambda$ & Stepsize $\eta$ & Iteration Complexity \\
        \midrule
        $L_1, L_2, L_3$
        & Corollary \ref{cor:main}
        & $\cO(\epsilon^{1/2}/d^{3/2})$
        & $\cO(1/d)$
        & $\cO(d^4/\epsilon^2)$ \\
        $L_1, L_2$
        & Remark \ref{rmk:remove-L3}
        & $\cO(\min\{\epsilon^{1/2}/d^2, \epsilon/d^{3/2}\})$
        & $\cO(1/d)$
        & $\cO(\max\{d^5/\epsilon^2, d^4/\epsilon^3\})$ \\
        $L_0, L_2, L_3$
        & Remark \ref{rmk:remove-L1}
        & $\cO(\min\{\epsilon^{1/2}/d, \epsilon^{1/3}/d^{4/3}\})$
        & $\cO(\lambda/d)$
        & $\cO(\max\{d^5/\epsilon^2, d^4/\epsilon^{5/2}\})$ \\
        $L_0, L_2$
        & Remark \ref{rmk:remove-L1}
        & $\cO(\epsilon/d^{3/2})$
        & $\cO(\lambda/d)$
        & $\cO(d^{11/2}/\epsilon^4)$ \\
        \bottomrule
    \end{tabular}
    \label{tab:asp}
\end{table}

\begin{proof}
    When $f(x)$ is $L_1$-smooth and convex, $f_\lambda(x)$ is also $L_1$-smooth and convex \citep{nesterov2017random}. Let $x_\lambda^*$ be one minimizer of $f_\lambda(x)$. The same as the proof of Theorem \ref{thm:F-converge}, we obtain that
    \begin{align*}
        \bE_{u_t}\norm{x_{t+1} - x_\lambda^*}^2
        & =
        \norm{x_t - x_\lambda^*}^2 - 2\eta \,\bE[g_\lambda(x_t, u_t)]^\top (x_t - x_\lambda^*) + \eta^2\,\bE\norm{g_\lambda(x_t, u_t)}^2 \\
        & =
        \norm{x_t - x_\lambda^*}^2 - 2\eta \nabla f_\lambda(x_t)^\top (x_t - x_\lambda^*) + \eta^2\,\bE\norm{g_\lambda(x_t, u_t)}^2 \\
        & \leq
        \norm{x_t - x_\lambda^*}^2 - 2\eta (f_\lambda (x_t) - f_\lambda(x_\lambda^*)) + \eta^2\,\bE\norm{g_\lambda(x_t, u_t)}^2.
    \end{align*}
    By Lemma \ref{lm:app-alternative-second-moment}, we have that
    \begin{align*}
        \bE\norm{g_\lambda(x_t, u_t)}^2
        & \leq
        4(d+2)\norm{\nabla f_\lambda(x_t)}^2 + \frac{L_2^2}{6}\lambda^4(d+8)^5 \\
        & \leq
        8(d+2)L_1(f_\lambda(x_t) -f_\lambda(x_\lambda^*)) + \frac{L_2^2}{6}\lambda^4(d+8)^5.
    \end{align*}
    As a result, we obtain that
    \begin{equation*}
        \bE_{u_t}\norm{x_{t+1} - x_\lambda^*}^2 \leq \norm{x_t - x_\lambda^*}^2 - 2\eta(1 - 4\eta(d+2)L_1) (f_\lambda (x_t) - f_\lambda(x_\lambda^*)) + \frac{L_2^2}{6}\eta^2\lambda^4(d+8)^5.
    \end{equation*}
    We choose $\eta=1/(8(d+2)L_1)$ such that $1 - 4\eta(d+2)L_1=1/2$. Rearranging terms, we have
    \begin{align*}
        \bE[f_\lambda(x_t) - f_\lambda(x_\lambda^*)]
        & \leq
        \frac{\bE\norm{x_t - x_\lambda^*}^2 - \bE\norm{x_{t+1} - x_\lambda^*}^2}{\eta} + \frac{L_2^2}{6}\eta\lambda^4(d+8)^5 \\
        & \leq
        8(d+2)L_1(\bE\norm{x_t - x_\lambda^*}^2 - \bE\norm{x_{t+1} - x_\lambda^*}^2) + \frac{L_2^2}{12L_1}\lambda^4(d+8)^4.
    \end{align*}
    Summing up from $t=0$ to $t=T-1$ and dividing both sides by $T$, we have that
    \begin{align*}
        \bE\left[f_\lambda(\bar x_T) - \min_{x\in\bR^d} f_\lambda(x)\right]
        & \leq
        \frac{1}{T}\sum_{t=0}^{T-1} \bE[f_\lambda(x_t) - f_\lambda(x_\lambda^*)] \\
        & \leq
        \frac{8(d+2)L_1\norm{x_0 - x_\lambda^*}^2}{T} + \frac{L_2^2}{12L_1}\lambda^4(d+8)^4,
    \end{align*}
    where $\bar x_T=(1/T)\sum_{t=0}^{T-1}x_t$ is the average of iterates. By Lemma \ref{lm:F-basic}, we have that
    \begin{align*}
        \min_{x\in\bR^d} f_\lambda(x)
        & =
        \min_{x\in\bR^d} (F(x) + f_\lambda(x) - F(x)) \\
        & \leq
        \min_{x\in\bR^d} (F(x) + \abs{f_\lambda(x) - F(x)}) \\
        & \leq
        \min_{x\in\bR^d} F(x) + \frac{L_3}{24}\lambda^4(d+4)^2.
    \end{align*}
    As a result, we obtain that
    \begin{align*}
        \bE\left[F(\bar x_T) - \min_{x\in\bR^d} F(x)\right]
        & \leq
        \bE\left[f_\lambda(\bar x_T) - \min_{x\in\bR^d} f_\lambda(x)\right] + \abs{f_\lambda(\bar x_T) - F(\bar x_T)} + \min_{x\in\bR^d} f_\lambda(x) - \min_{x\in\bR^d} F(x) \\
        & \leq
        \frac{8(d+2)L_1\norm{x_0 - x_\lambda^*}^2}{T} + \frac{L_2^2}{12L_1}\lambda^4(d+8)^4 + \frac{L_3}{12}\lambda^4(d+4)^2.
    \end{align*}
    According to the proof of Corollary \ref{cor:main} (see Eq. \eqref{eq:app-cor-tr}), this implies that
    \begin{align*}
        \bE\left[\tr(\nabla^2 f(\bar x_T)) - \min_{x\in\cX^*} \tr(\nabla^2 f(x))\right]
        & \leq
        \frac{2}{\lambda^2} \bE\left[F(\bar x_T) - \min_{x\in\bR^d} F(x)\right] \\
        & \leq
        \frac{16(d+2)L_1\norm{x_0 - x_\lambda^*}^2}{\lambda^2 T} + \frac{L_2^2}{6L_1}\lambda^2(d+8)^4 + \frac{L_3}{6}\lambda^2(d+4)^2.
    \end{align*}
    Given $\epsilon>0$, choosing $\lambda$ and $T$ such that
    \begin{align*}
        & \lambda^2 = \min\left\{\frac{2\epsilon}{L_3(d+4)^2}, \frac{2L_1\epsilon}{L_2^2(d+8)^4}\right\}, \\
        & T \geq \frac{48(d+2)L_1\norm{x_0-x_\lambda^*}^2}{\epsilon} \max\left\{\frac{L_3(d+4)^2}{2\epsilon}, \frac{L_2^2(d+8)^4}{2L_1\epsilon}\right\},
    \end{align*}
    we guarantee that $\bE[\tr(\nabla^2 f(\bar x_T)) - \min_{x\in\cX^*} \tr(\nabla^2 f(x))]\leq\epsilon$ and that
    \begin{align*}
        \bE\left[f(\bar x_T) - \min_{x\in\bR^d} f(x)\right]
        & \leq
        \bE\left[F(\bar x_T) - \min_{x\in\bR^d} F(x)\right] + \frac{L_1}{2}\lambda^2 d \\
        & \leq
        \frac{\lambda^2}{2}(\epsilon + L_1 d) \\
        & \leq
        \frac{L_1^2\epsilon}{L_2^2(d+8)^3}\left(1+\frac{\epsilon}{L_1(d+8)}\right).
    \end{align*}
    The proof is thus complete.
\end{proof}

\section{Additional Experiments} \label{app:exp}

\subsection{Test Function}

\begin{figure}
    \centering
    \begin{tabular}{cc}
        \includegraphics[width=0.47\linewidth]{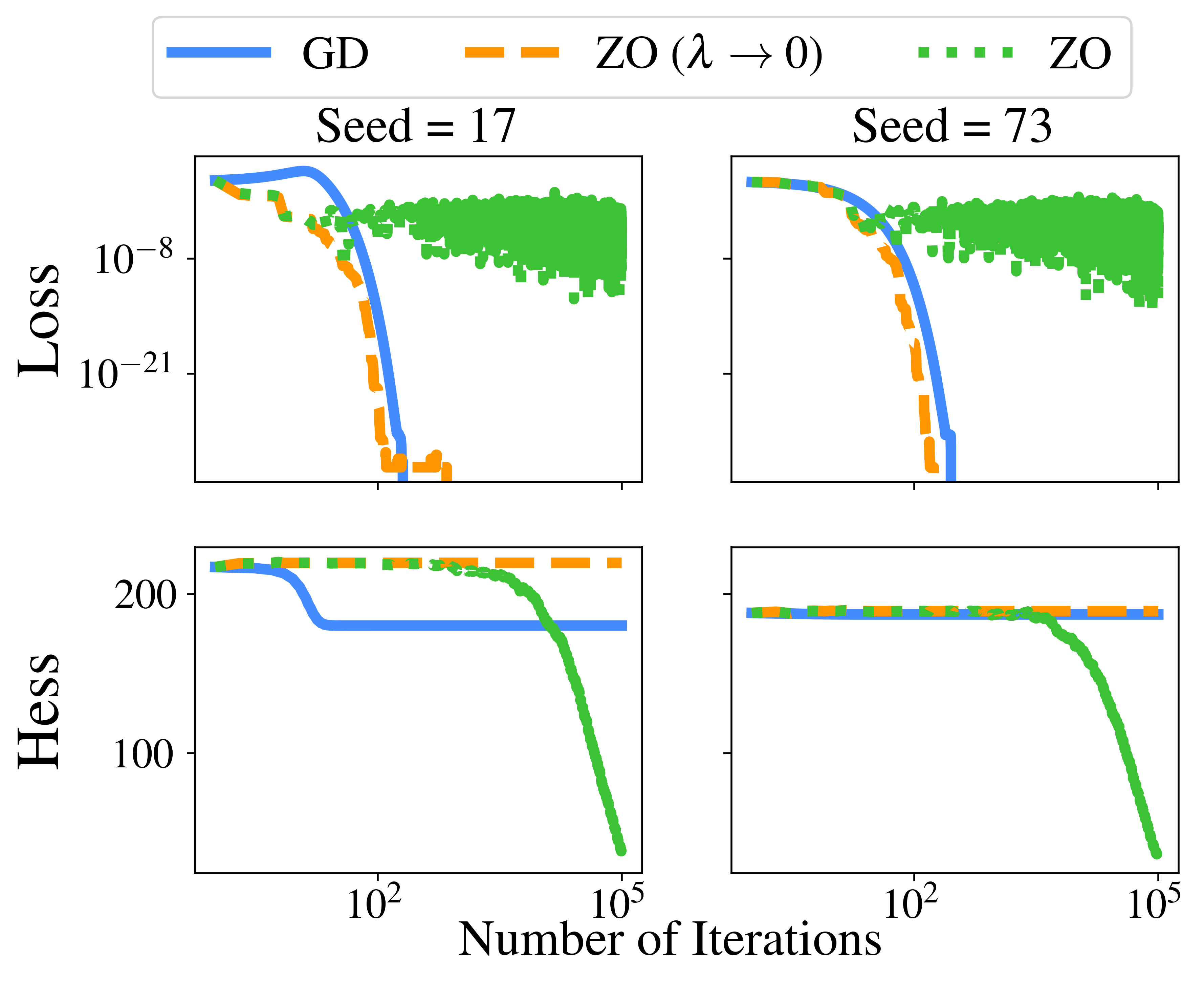}
        &
        \includegraphics[width=0.47\linewidth]{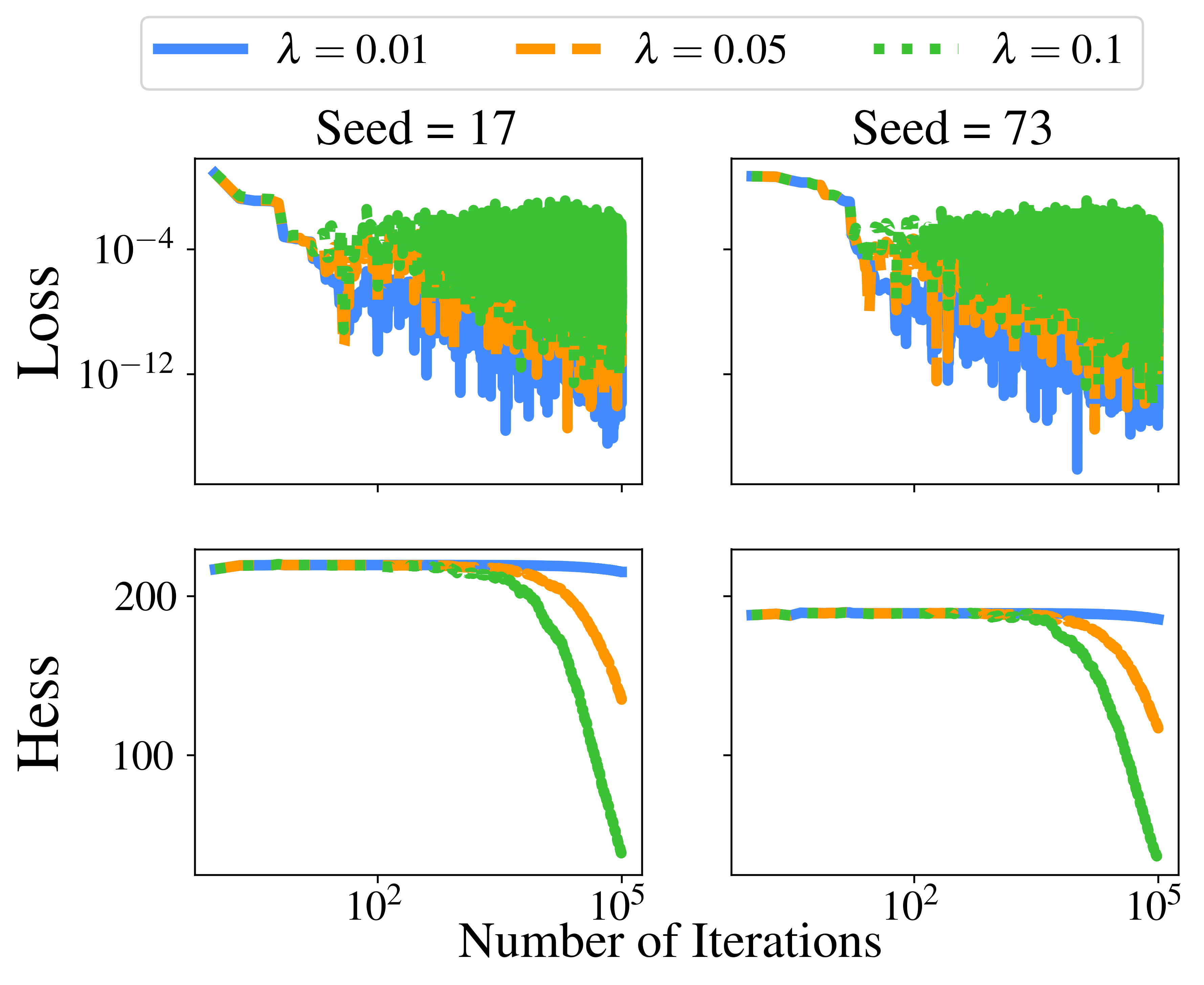}
        \\
        (a) Comparison of GD and ZO.
        &
        (b) Effect of $\lambda$.
    \end{tabular}
    \caption{Loss and trace of Hessian (Hess in the plot) on Example \ref{exp:xy} using different random seeds. The observation aligns with Figure \ref{fig:toy} (random seed $13$). Zeroth-order optimization (ZO) decreases the trace of Hessian, and $\lambda$ controls the trade-offs between regularization on the trace of Hessian and the optimization error induced by additional bias terms.}
    \label{fig:toy-seed}
\end{figure}

In Figures \ref{fig:toy} and \ref{fig:toy-seed}, we plot the values of the loss function and the trace of Hessian when applying gradient descent and zeroth-order optimization on Example \ref{exp:xy}.
We test the example with dimension $d=100$ and run the algorithms for $T=100,000$ iterations.
Gradient descent uses a stepsize of $0.01$, and zeroth-order optimization uses $0.001$.
In Figures \ref{fig:toy} (a) and \ref{fig:toy-seed} (a), zeroth-order optimization is run with a smoothing parameter $\lambda=0.1$.
The initialization is sampled from the standard Gaussian distribution $\cN(0, \rI_{2d})$.
Randomness arises from both the initialization and random search directions in zeroth-order optimization.
Figure \ref{fig:toy} uses random seed $13$, and Figure \ref{fig:toy-seed} reports results for additional seeds $17$ and $73$. The results are consistent across all 3 seeds.
Each run is executed on a CPU and takes approximately 1 second.

\subsection{Binary Classification with SVMs and Logistic Regression} \label{app:exp-cvx}

\begin{figure}[ht]
    \centering
    \begin{tabular}{cc}
        \includegraphics[width=0.47\linewidth]{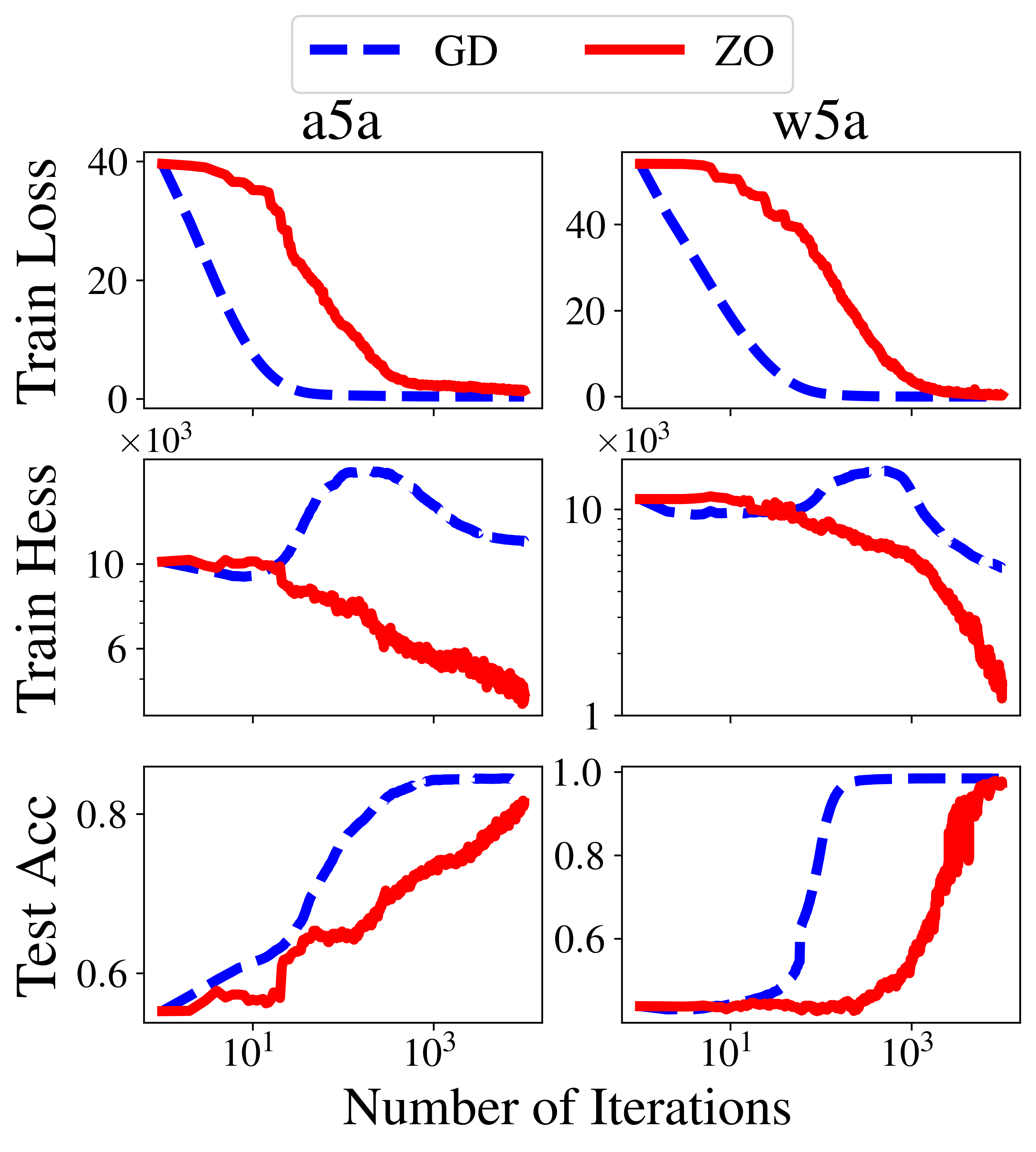}
        &
        \includegraphics[width=0.47\linewidth]{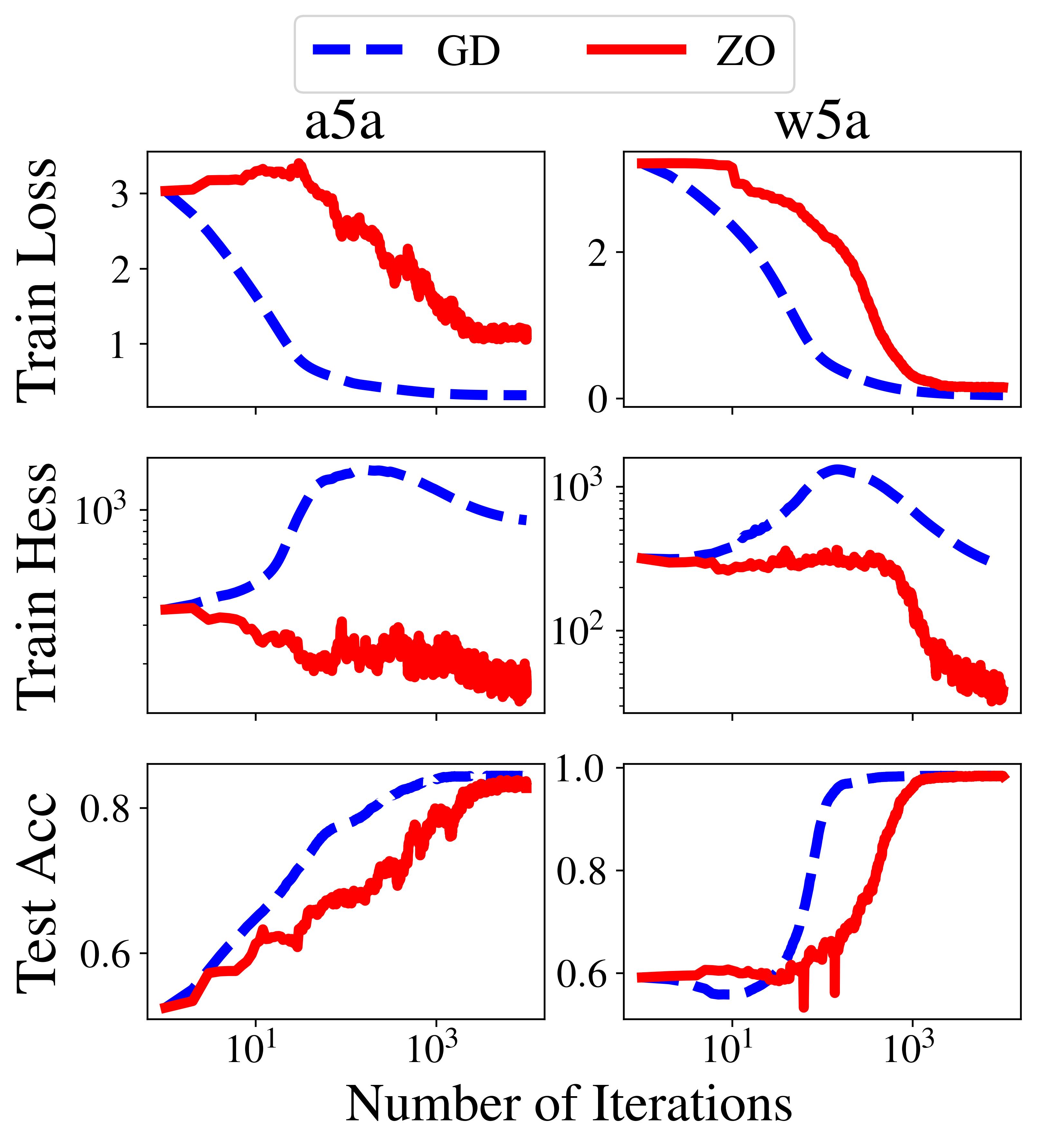}
        \\
        (a) SVMs with Seed $13$.
        &
        (b) Logistic Regression with Seed $83$.
    \end{tabular}
    \caption{Training loss, trace of Hessian on the training data (Train Hess in the plot), and test accuracy on (a) SVMs and (b) Logistic regression using different random seeds. The observation is consistent with Figure \ref{fig:cvx} (seed $29$), where zeroth-order optimization (ZO) reduces the trace of Hessian.}
    \label{fig:cvx-seed}
\end{figure}

The datasets ``a5a'' and ``w5a'' used in both the SVMs and logistic regression experiments are standard binary classification benchmarks from the LIBSVM library\footnote{\url{https://www.csie.ntu.edu.tw/~cjlin/libsvmtools/datasets/binary.html}}.
LIBSVM \citep{chang2011libsvm} is released under the BSD 3-Clause ``New'' or ``Revised'' License\footnote{\url{https://github.com/cjlin1/libsvm/blob/master/COPYRIGHT}}.
The a5a dataset contains $N=6,414$ training and $26,147$ test samples with $d=123$ features, and the w5a dataset contains $N=9,888$ training and $39,861$ test samples with $d=300$ features.
We set $D=10,000$ to overparameterize and run all algorithms for $T=10,000$ iterations.
To mitigate the effect of mini-batch noise, we use full-batch gradient descent and zeroth-order optimization in the experiments.
The stepsize and smoothing parameter $\lambda$ are searched from the grid $\{0.5, 0.1, 0.05, 0.01, 0.005, 0.001, 0.0005, 0.0001, 0.00005\}$.
Gradient descent uses a stepsize of $0.001$ for SVMs and $0.01$ for logistic regression.
For zeroth-order optimization, the selected hyperparameters are listed below.

$\bullet$ For SVMs, the stepsize is $0.0001$, and the smoothing parameter $\lambda=0.05$.

$\bullet$ For logistic regression on a5a, the stepsize is $0.01$, and the smoothing parameter $\lambda=0.1$.

$\bullet$ For logistic regression on w5a, the stepsize is $0.005$, and the smoothing parameter $\lambda=0.05$.

The initialization is drawn from the Gaussian distribution $\cN(0, (0.1)^2\rI_{D})$.
Randomness arises from the initialization, the random matrix $W$ used in $\phi(a_i)=Wa_i$, and the random search directions in zeroth-order optimization.
Figure \ref{fig:cvx} uses random seed $29$, and Figure \ref{fig:cvx-seed} reports results for additional seeds $13$ and $83$. The results are consistent across all 3 seeds.
Each run is executed on a CPU and takes approximately $3.5$ hours.

\subsection{Fine-Tuning Language Models on Text Classification Tasks} \label{app:exp-llm}

\begin{figure}
    \centering
    \begin{tabular}{cc}
        \includegraphics[width=0.47\linewidth]{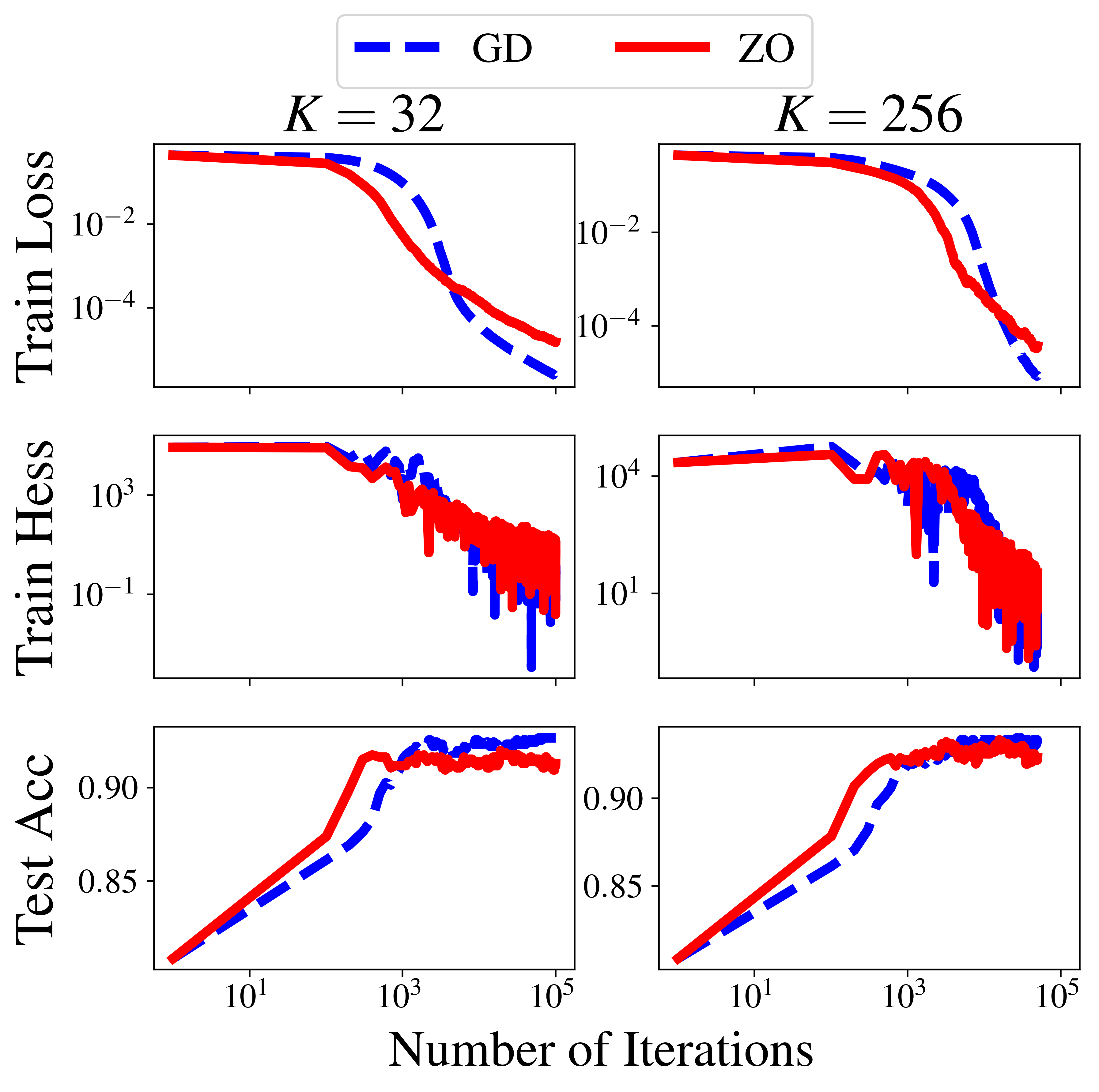}
        &
        \includegraphics[width=0.47\linewidth]{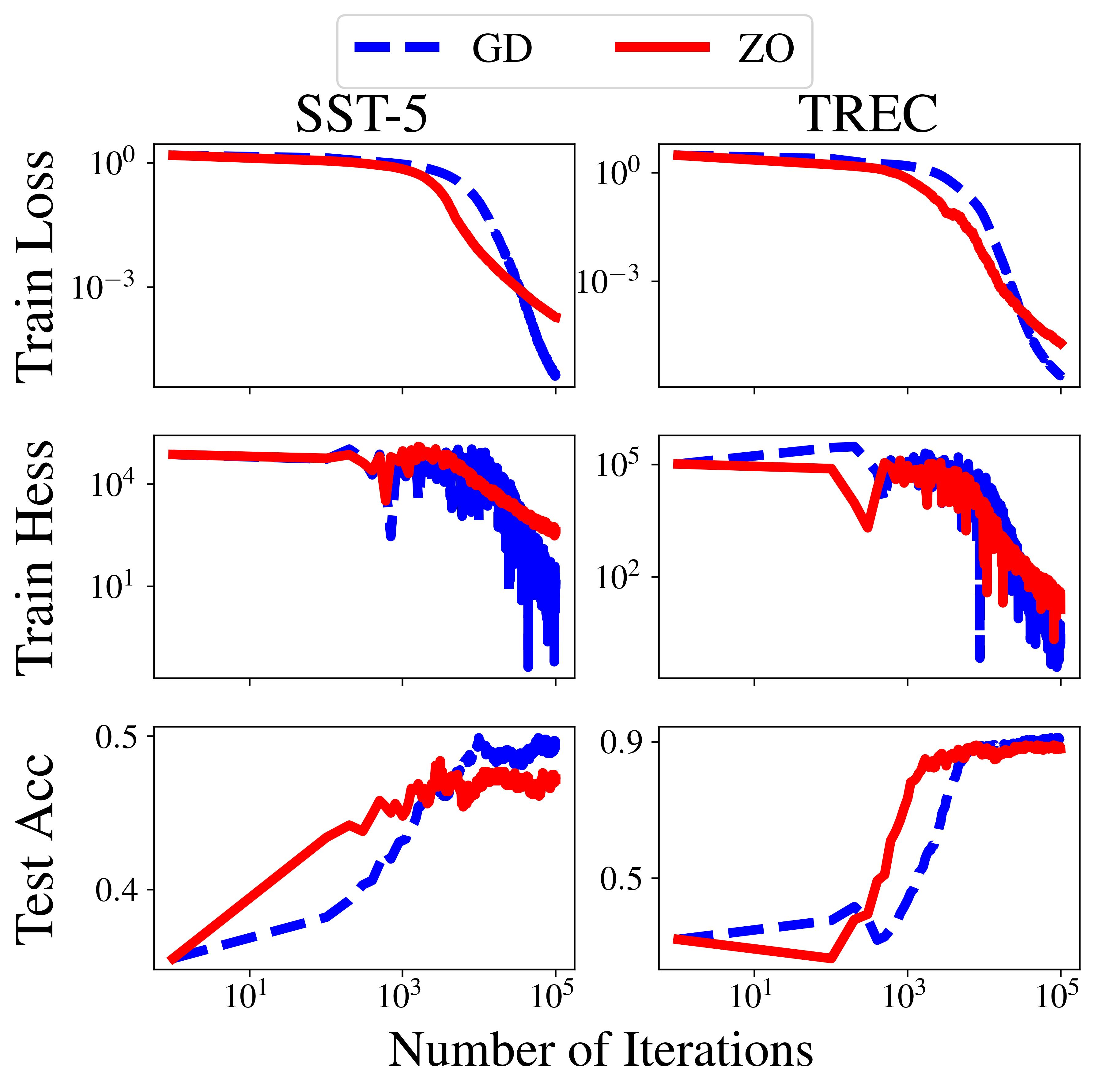}
        \\
        (a) SST-2 with Seed $21$.
        &
        (b) SST-5 and TREC with Seed $87$.
    \end{tabular}
    \caption{Training loss, trace of Hessian on the training data (Train Hess in the plot), and test accuracy on (a) SST-2 with $K=32$ and $K=256$, and (b) SST-5 and TREC with $K=32$ using different random seeds. The observation is consistent with Figure \ref{fig:llm} (random seed $42$).}
    \label{fig:llm-seed}
\end{figure}

We follow experiment settings in \citet{malladi2023fine} and consider few-shot fine-tuning on RoBERTa-Large\footnote{\url{https://huggingface.co/FacebookAI/roberta-large}} (355M parameters) \citep{liu2019roberta} with $K=32$ and $K=256$ examples per class on 3 text classification tasks: SST-2\footnote{\url{https://huggingface.co/datasets/stanfordnlp/sst2}} and SST-5\footnote{\url{https://nlp.stanford.edu/sentiment/}} \citep{socher2013recursive} for sentiment classification, and TREC\footnote{\url{https://huggingface.co/datasets/CogComp/trec}} \citep{voorhees2000building} for topic classification.
RoBERTa is available under the MIT License\footnote{\url{https://github.com/facebookresearch/fairseq/blob/main/LICENSE}}.
The datasets are commonly-used benchmarks that are publicly available for research purposes.
Our implementation is based on the codebase provided by \citet{malladi2023fine} and uses the same prompts. Their code is released under the MIT License\footnote{\url{https://github.com/princeton-nlp/MeZO/blob/main/LICENSE}}.
The SST-2, SST-5, and TREC datasets contain 2, 5, and 6 classes, respectively. We consider $K=256$ only for the SST-2 dataset, and $K=32$ for all 3 datasets.
The training set is constructed by sampling $K$ examples per class from the original dataset, and the test set is built by randomly selecting $1,000$ examples from the original test dataset.

We fix the number of iterations to $100,000$ for $K=32$ and $50,000$ for $K=256$. The trace of Hessian is evaluated every $100$ steps using the expected sharpness $(2/\delta^2)\abs{\bE_{u\sim\cN(0, \rI_d)}[f(x+\delta u)] - f(x)}$ as an approximation, where $\delta=10^{-4}$ and the expectation is estimated by averaging over $100$ samples.
To reduce the impact of mini-batch noise, we use full-batch gradient descent and zeroth-order optimization in all experiments, both without learning rate scheduling. This leads to batch sizes of $512$ (SST-2 with $K=256$), $64$ (SST-2 with $K=32$), $160$ (SST-5 with $K=32$), and $192$ (TREC with $K=32$).
The stepsize for gradient descent is set to $5\times10^{-5}$ for all cases. Zeroth-order optimization uses a stepsize of $10^{-5}$ for SST-2, and $5\times10^{-6}$ for SST-5 and TREC. The smoothing parameter $\lambda$ in zeroth-order optimization is set to $2\times10^{-3}$ for SST-2 and TREC, and $10^{-3}$ for SST-5.
Randomness arises from the selection of datasets, the initialization, and the random search directions in zeroth-order optimization.
Figure \ref{fig:llm} uses random seed $42$, and Figure \ref{fig:llm-seed} reports results for additional seeds $21$ and $87$. The results are consistent across all 3 seeds.
All experiments are tested on a single NVIDIA H100 GPU with 80 GiB memory. The runtime and memory are summarized in Table \ref{tab:llm}.

\begin{table}
    \centering
    \caption{Total runtime (hours) and memory consumption (MiB) when fine-tuning RoBERTa using gradient descent (GD) and zeroth-order (ZO) optimization. We report both mean and standard error of the runtime across three random seeds $\{42,21,87\}$. Due to additional forward passes required to approximate the trace of Hessian, the reported runtime is longer than standard training. GD consumes less memory than the default first-order optimizer, AdamW, which stores additional optimizer states.}
    \vspace{0.5em}
    \begin{tabular}{cccccc}
        \toprule
        \multirow{2}{*}{Task}
        & 
        & \multicolumn{2}{c}{SST-2}
        & SST-5
        & TREC \\
        &
        & $K=256$
        & $K=32$
        & \multicolumn{2}{c}{$K=32$} \\
        \midrule
        Batch Size
        &
        & $512$
        & $64$
        & $160$
        & $192$ \\
        Number of Iterations
        &
        & $50,000$
        & $100,000$
        & $100,000$
        & $100,000$ \\
        \midrule
        \multirow{2}{*}{Runtime}
        & GD
        & $29.47\pm0.93$
        & $7.38\pm0.33$
        & $21.64\pm1.18$
        & $14.79\pm0.56$ \\
        & ZO
        & $25.13\pm0.94$
        & $7.34\pm0.22$
        & $18.91\pm0.52$
        & $13.37\pm0.47$ \\
        \midrule
        \multirow{2}{*}{Memory}
        & GD
        & $54,214$
        & $7,986$
        & $18,826$
        & $14,258$ \\
        & ZO
        & $5,038$
        & $3,074$
        & $3,440$
        & $3,274$ \\
        \bottomrule
    \end{tabular}
    \label{tab:llm}
\end{table}

\end{document}